\documentclass{scrartcl}

\usepackage{amsmath, amsthm, amssymb}
\usepackage{mathrsfs}
\usepackage{url}
\usepackage{breqn}
\usepackage{amssymb,amsfonts}
\usepackage{color}
\usepackage{shuffle}
\usepackage[usenames,dvipsnames]{xcolor}
\usepackage{graphicx}
\usepackage[font={small,it}]{caption}
\usepackage[flushleft]{threeparttable}
\usepackage[linecolor=white,backgroundcolor=white,bordercolor=white,textsize=tiny]{todonotes}
\usepackage{tikz}
\usepackage{pgfplots}
\pgfplotsset{compat=1.13}
\usetikzlibrary{arrows,shapes,positioning}
\usetikzlibrary{arrows.meta}
\usetikzlibrary{decorations.markings}

\usepackage{ytableau}

\usepackage{silence}
\WarningFilter{latex}{Marginpar on page} 

\usepackage[top=3cm, bottom=3cm, left=2.5cm, right=3cm]{geometry}

\newtheorem{theorem}{Theorem}
\theoremstyle{plain}

\newtheorem{conjecture}[theorem]{Conjecture}

\newtheorem{example}[theorem]{Example}

\newtheorem{lemma}[theorem]{Lemma}

\newtheorem{proposition}[theorem]{Proposition}
\newtheorem{remark}[theorem]{Remark}


\theoremstyle{definition} 

\newtheorem{definition}[theorem]{Definition}

\newcommand{\R}{\mathbb{R}}
\newcommand{\C}{\mathbb{C}}
\newcommand{\N}{\mathbb{N}}





\newcommand{\spann}{\operatorname{span}}

\parindent0pt           
\parskip1ex            

\newcommand{\ds}{d} 
\newcommand{\TC}{T((\R^\ds))} 
\newcommand{\TS}{T(\R^\ds)} 

\newcommand{\sign}{\operatorname{sign}}

\def\word#1{{\color{blue}\mathbf{#1}}}

\usepackage[colorlinks,backref]{hyperref}
\begin{document}

\title{Invariants of multidimensional time series based on their iterated-integral signature}
\author{Joscha Diehl\thanks{MPI for Mathematics in the Sciences, Leipzig}, Jeremy Reizenstein\thanks{Centre for Complexity Science, University of Warwick. Supported by the Engineering and Physical Sciences Research Council}}
\maketitle

\begin{abstract}
We introduce a novel class of features for multidimensional time series that are invariant with respect to transformations of the ambient space. The general linear group, the group of rotations and the group of permutations of the axes are considered. The starting point for their construction is Chen's iterated-integral signature.
\end{abstract}

\tableofcontents

\section{Introduction}



The analysis of multidimensional time series is a standard problem in data science.
Usually, as a first step, features of a time series must be extracted that are
(in some sense) robust and that characterize the time series.
%
%
In many applications the features should additionally be invariant to a particular group
acting on the data.
%
In Human Activity Recognition for example,
the orientation of the measuring device is often unknown.
This leads to the requirement of rotation invariant features \cite{bib:MA2017}.
In EEG analysis, invariants to the general linear group are beneficial \cite{bib:EMZMN2012}.
In other applications, the labeling of coordinates is arbitrary, which leads to
permutation invariants.


As any time series in discrete time can, via linear interpolation, be thought of as a multidimensional curve,
one is naturally lead to the search of invariants of curves.
Invariant features, of (mostly) two-dimensional curves
have been treated using various approaches.
%
Among the techniques are
Fourier series (of closed curves) \cite{bibGranlund1972,bibZahnRoskies1972,bibKuhlGiardina1982},
wavelets \cite{bibChuangKuo1996},
curvature based methods \cite{bibMokhtarianMackworth1986,bib:COSTH1998} 
and integral invariants \cite{bibManayCremers2006,bib:FKK2010}.




The usefulness of iterated integrals in data analysis has recently been realized,
see for example \cite{bib:LLN2012,bib:Gra2013,bib:KSHGL2017,bib:YLNSJC2017} and the introduction in \cite{bib:OxSigIntro}.
%
%
Let us demonstrate the appearance of iterated integrals on a very simple example.
Let $X: [0,T] \to \R^2$ be a smooth curve.
Say, we are looking for a feature describing this curve that is unchanged if one is handed a rotated version
of $X$. Maybe the simplest one that one can come up with is the (squared) total displacement length $|X_T - X_0|^2$.
Now,
\begin{align*}
  |X_T - X_0|^2
  &=
  (X^1_T - X^1_0)^2
  + 
  (X^2_T - X^2_0)^2 \\
  &=
  2 \int_0^T \left( X^1_r - X^1_0 \right) \dot X^1_r dr
  +
  2 \int_0^T \left( X^2_r - X^2_0 \right) \dot X^2_r dr \\
  &=
  2 \int_0^T \left( \int_0^r \dot X^1_u du \right) \dot X^1_r dr
  +
  2 \int_0^T \left( \int_0^r \dot X^2_u du \right) \dot X^2_r dr \\
  &=
  2 \int_0^T \int_0^r d X^1_u d X^1_r
  +
  2 \int_0^T \int_0^r d X^2_u d X^2_r,
\end{align*}
where we applied the fundamental theorem of calculus twice
and introduced the notation $dX^i_r = \dot{X}^i_r dr$.
We see that we have expressed this simple invariant in terms of iterated integrals of $X$;
the collection of which is usually called its \emph{signature}.
The aim of this work can be summarized as describing \emph{all} invariants that can be obtained in this way.
It turns out, when formulated in the right way, this search for invariants
reduces to classical problems in invariant theory.
We note that already in the early work of Chen (see for example \cite[Chapter 3]{bib:Che1957})
the topic of invariants arose, although a systematic study was missing (see also \cite{bib:Joh1962}).
%


The aim of this work is threefold.
Firstly, we adapt classical results in invariant theory
regarding non-commuting polynomials (or, equivalently, multilinear maps),
to our situation.
These results are spread out in the literature and sometimes need a little massaging.
%
%
Secondly, it lays out the usefulness of the signature of iterated integrals
in the search for invariants of $\ds$-dimensional curves.
We show, see Section~\ref{sec:discussion}, that certain ``integral invariants''
found in the literature are in fact of this type and we simplify their enumeration.
Lastly, we present new geometric insights into some entries found in the signature,
Section~\ref{sec:dXw1}.%
\footnote{The signature is notorious for being hard to interpret in geometric terms.}

The paper is structured as follows.
In the next section we introduce the signature of iterated integrals of a multidimensional curve,
as well as some algebraic language to work with it.
Based on this signature,
we present in Section~\ref{sec:gl} and Section~\ref{sec:so}
invariants to the general linear group and the special orthogonal group.
Both are based on classical results in invariants theory.
%
For completeness, we present in Section~\ref{sec:permuations} the invariants to permutations,
which have been constructed in \cite{bib:BRRZ2005}.
In Section~\ref{sec:time} we show how to use all these invariants if
an additional (time) coordinate is introduced.
In Section~\ref{sec:discussion} we relate our work to the integral invariants of \cite{bib:FKK2010}
and demonstrate that the invariants presented there cannot be complete.
We formulate the conjecture of completeness for our invariants and point out open algebraic questions.

For readers who want to use these invariants without having to go into the technical results, we propose the following route.
The required notation is presented in the next section.
The invariants are presented in Proposition~\ref{prop:glInvariants}, Proposition~\ref{prop:soInvariants} and Proposition~\ref{prop:permInvariants}.
Examples are given in Section~\ref{sec:explicit} (in particular Remark~\ref{rem:interpretation}), Example~\ref{ex:soInvariants} and Example~\ref{ex:permInvariants}.
All these invariants are also implemented in the software package \cite{bib:Die2018}.
For a python package for calculating the iterated-integrals signature we propose using the package \verb|iisignature|, as described in \cite{bib:Rei2017}.

\section{The signature of iterated integrals}
\label{sec:signature}

%
%

By a multidimensional \textbf{curve} $X$ we will denote a continuous mapping $X: [0,T] \to \R^\ds$ of bounded variation%
\footnote{
  The reader might prefer to just think of a (piecewise) smooth curve.
}.
The aim of this work is to find features (i.e. complex or real numbers) describing such a curve that are invariant under
the general linear group, the group of rotations and the group of permutations.
Note that in practical situations one is usually presented with a discrete sequence of data points in $\R^\ds$, a multidimensional \textbf{time series}.
Such a time series can be easily transformed into a (piecewise) smooth curve
by linear interpolation.

It was proven in \cite{bib:Che1957} (see \cite{bib:HL2010} for a recent generalization)
that a curve $X$ is almost completely characterized by the collection of its iterated integrals%
\footnote
{
  Since $X$ is of bounded variation
  the integrals are well-defined using classical Riemann-Stieltjes integration (see for example Chapter 6 in \cite{bib:Rud1964}).
  This can be pushed much further though.
  In fact the following considerations are purely algebraic
  and hence hold for any curve for which a sensible integration theory (in particular: obeying integration by parts) exists.
  A relevant example is Brownian motion which, although being almost surely
  nowhere differentiable, nonetheless admits a stochastic (Stratonovich) integral.
}
\begin{align*}
  \int_0^T \int_0^{r_n} \dots \int_0^{r_2} dX^{i_1}_{r_1} \dots dX^{i_n}_{r_n}, \qquad n \ge 1,\quad i_1, \dots i_n \in \{1,\dots,\ds\}.
\end{align*}
The collection of all these integrals is called the \textbf{signature}%
\footnote{ Also called the ``rough path signature''.}
of $X$.
In a first step,
we can hence reduce the goal
\begin{quote}
  \textit{Find functions $\Psi: \text{curves} \to \R$ that are invariant under the action of a group $G$.}
\end{quote}
to the goal
\begin{quote}
  \textit{Find functions $\Psi: \text{signature of curves} \to \R$ that are invariant under the action of a group $G$.}
\end{quote}
%
By the Shuffle identity (Lemma~\ref{lem:shuffleIdentity}),
any polynomial function on the signature can be re-written as a linear function on the signature.
Assuming that arbitrary functions are well-approximated by polynomial functions,
we are lead to the final simplification, which is the goal of this paper
\begin{quote}
  \textit{Find \emph{linear} functions $\Psi: \text{signature of curves} \to \R$ that are invariant under the action of a group $G$.}
\end{quote}

%
%
%
%

\subsection{Algebraic underpinning}

Let us introduce some algebraic notation in order to work with the collection of iterated integrals.
Denote by $\TC$ the space of formal power series in $\ds$ \textit{non-commuting} variables $x_1, x_2, \dots, x_{\ds}$.
We can conveniently store all the iterated integrals of the curve $X$ in $\TC$,
by defining the \textbf{signature} of $X$ to be
\begin{align*}
  S(X)_{0,T} := \sum x_{i_1} \dots x_{i_n} \int_0^T \int^{r_n} \dots \int_0^{r_2} dX^{i_1}_{r_1} \dots dX^{i_n}_{r_n}.
\end{align*}
Here the sum is taken over all $n \ge 0$ and all $i_1, \dots, i_n \in \{1,2,..,\ds\}$.
For $n=0$ the summand is, for algebraic reasons, taken to be the constant $1$.

The algebraic dual of $\TC$ is $\TS$, the space of polynomials%
\footnote{
  In contrast to a power series, a polynomial only has finitely many terms.
}
in $x_1, x_2, \dots, x_{\ds}$.
The dual pairing, denoted by $\langle \cdot, \cdot \rangle$ is defined by declaring all monomials to be orthonormal, 
so for example
\begin{align*}
  \Big\langle x_1  + 15\cdot x_1 x_2 - 2\cdot x_1 x_2 x_1, x_1 x_2 \Big\rangle = 15.
\end{align*}
Here, we write the element of $\TC$ on the left and the element of $\TS$ on the right.
We can ``pick out'' iterated integrals from the signature as follows
\begin{align*}
  \Big\langle S(X)_{0,T}, x_{i_1} \dots x_{i_n} \Big\rangle
  =
  \int_0^T \int^{r_n} \dots \int_0^{r_2} dX^{i_1}_{r_1} \dots dX^{i_n}_{r_n}.
\end{align*}

The space $\TC$ becomes an algebra by extending the usual product of monomials, denoted $\cdot$, to the whole space by bilinearity.
Note that $\cdot$ is non-commutative.

On $\TS$ we usually use the shuffle product $\shuffle$ which, on monomials, interleaves them in all order-preserving ways,
so for example
\begin{align*}
 x_1 \shuffle x_2 x_3
 =
 x_1 x_2 x_3
 +
 x_2 x_1 x_3
 +
 x_2 x_3 x_1.
\end{align*}
Note that $\shuffle$ is commutative.

Monomials, and hence homogeneous polynomials, have the usual concept of \textbf{order} or homogeneity.
For $n \ge 0$ we denote the projection on polynomials of order $n$ by $\pi_n$, so for example
\begin{align*}
  \pi_2 
  \left( x_1  + 15\cdot x_1 x_2 - 2\cdot x_1 x_2 x_1 \right)
  =
  15\cdot x_1 x_2.
\end{align*}
%
See \cite{bib:Reu1993} for more background on these spaces.

As mentioned above, every polynomial expression in terms of the signature
can be re-written as a linear expression in (different) terms of the signature.
This is the content of the following lemma, which is proven in \cite{bib:Ree1958} (see also \cite[Corollary 3.5]{bib:Reu1993}).
\begin{lemma}[Shuffle identity]
  \label{lem:shuffleIdentity}
  Let $X: [0,T] \to \R^d$ be a continuous curve of bounded variation, then for every $a,b \in T(\R^d)$
  \begin{align*}
    \Big\langle S(X)_{0,T}, a \Big\rangle
    \Big\langle S(X)_{0,T}, b \Big\rangle
    =
    \Big\langle S(X)_{0,T}, a \shuffle b \Big\rangle
  \end{align*}
\end{lemma}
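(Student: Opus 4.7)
The plan is a standard induction on the total word length, reducing the identity to a single application of integration-by-parts at each step. Since both sides of the claim are bilinear in $a,b$, we may restrict attention to monomials $a = x_{i_1}\cdots x_{i_m}$ and $b = x_{j_1}\cdots x_{j_n}$. It is convenient to introduce, for any word $w = x_{k_1}\cdots x_{k_\ell}$ and $t\in[0,T]$,
\begin{align*}
  I_t(w) := \Big\langle S(X)_{0,t}, w \Big\rangle = \int_0^t \int_0^{r_\ell}\cdots\int_0^{r_2} dX^{k_1}_{r_1}\cdots dX^{k_\ell}_{r_\ell},
\end{align*}
and prove by induction on $m+n$ the stronger statement $I_t(a)\,I_t(b) = I_t(a\shuffle b)$ for every $t\in[0,T]$.

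The base case $m=0$ or $n=0$ is immediate because the empty word is the unit of $\shuffle$ and $I_t(\emptyset)=1$. For the inductive step, write $a = a'\,x_i$ and $b = b'\,x_j$ where $a',b'$ have lengths $m-1, n-1$. By construction $I_t(a)$ and $I_t(b)$ are absolutely continuous functions of $t$ of bounded variation, with differentials
\begin{align*}
  dI_t(a) = I_t(a')\,dX^{i}_t, \qquad dI_t(b) = I_t(b')\,dX^{j}_t.
\end{align*}
Apply the Riemann-Stieltjes integration-by-parts formula (valid here because $X$, and hence all iterated integrals, are of bounded variation, see the footnote pointing to \cite{bib:Rud1964}):
\begin{align*}
  I_t(a)\,I_t(b) = \int_0^t I_s(a)\,dI_s(b) + \int_0^t I_s(b)\,dI_s(a) = \int_0^t I_s(a)\,I_s(b')\,dX^{j}_s + \int_0^t I_s(a')\,I_s(b)\,dX^{i}_s.
\end{align*}
The inductive hypothesis, applied to the pairs $(a,b')$ and $(a',b)$ of smaller total length, rewrites the two products of iterated integrals under the integrals as $I_s(a\shuffle b')$ and $I_s(a'\shuffle b)$ respectively. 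Hence
\begin{align*}
  I_t(a)\,I_t(b) = I_t\big((a\shuffle b')\cdot x_j + (a'\shuffle b)\cdot x_i\big),
\end{align*}
and the right-hand side equals $I_t(a\shuffle b)$ by the standard recursive definition of the shuffle product, $(a'x_i)\shuffle(b'x_j) = (a'\shuffle b)\,x_i + (a\shuffle b')\,x_j$. Setting $t=T$ yields the lemma.

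The main technical point is the first step: making sure that the integration-by-parts formula is legitimate for the Riemann-Stieltjes integrals against the (bounded-variation) integrators $I_\cdot(a)$, $I_\cdot(b)$. Once one accepts this, the rest is a purely combinatorial verification that the integration-by-parts splitting matches exactly the recursive definition of $\shuffle$; this matching is the algebraic heart of the proof and is what makes $\shuffle$ (rather than the concatenation product) the natural operation on the dual side.
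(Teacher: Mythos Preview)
Your proof is correct and is the classical induction-on-length argument via integration by parts; the paper itself does not supply a proof but simply cites \cite{bib:Ree1958} and \cite[Corollary 3.5]{bib:Reu1993}, where essentially this same argument appears.
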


\begin{remark}
  We have used this fact already in the introduction, where we confirmed by hand that
  \begin{align*}
    \Big( \Big\langle S(X)_{0,T}, x_1 \Big\rangle \Big)^2
    +
    \Big( \Big\langle S(X)_{0,T}, x_2 \Big\rangle \Big)^2
    &=
    2\ \Big\langle S(X)_{0,T}, x_1 x_1 \Big\rangle
    +
    2\ \Big\langle S(X)_{0,T}, x_2 x_2 \Big\rangle \\
    \Big(
      &  =
    \Big\langle S(X)_{0,T}, x_1 \shuffle x_1 \Big\rangle
    +
    \Big\langle S(X)_{0,T}, x_2 \shuffle x_2 \Big\rangle
    \Big).
  \end{align*}
\end{remark}

The concatenation of curves is compatible with the product on $\TC$ in the following sense
(for a proof, see for example \cite[Theorem 7.11]{bib:FV2010}).
\begin{lemma}[Chen's relation]
  \label{lem:chen}
  For curves $X: [0,T] \to \R^\ds, Y: [0,T] \to \R^\ds$ denote
  their concatenation
  \begin{align*}
    X \sqcup Y: [0,2T] \to \R^\ds,
  \end{align*}
  as $X_\cdot$ on $[0,T]$ and $Y_{\cdot-T} - Y_0 + X_T$ on $[T,2T]$.
  Then
  \begin{align*}
    S(X \sqcup Y)_{0,2T} = S(X)_{0,T} \cdot S(Y)_{0,T}.
  \end{align*}
\end{lemma}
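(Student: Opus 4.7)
The plan is to verify the identity coefficient-by-coefficient, pairing both sides against an arbitrary monomial $w = x_{i_1}\cdots x_{i_n} \in \TS$. On the left we must evaluate
\[
  \Big\langle S(X \sqcup Y)_{0,2T}, x_{i_1}\cdots x_{i_n}\Big\rangle
  = \int_{0 < r_1 < \cdots < r_n < 2T} d(X\sqcup Y)^{i_1}_{r_1}\cdots d(X\sqcup Y)^{i_n}_{r_n}.
\]
On the right, unfolding the product of series in $\TC$, the coefficient of $w$ in $S(X)_{0,T}\cdot S(Y)_{0,T}$ is a sum over all splittings $w = uv$ of $\langle S(X)_{0,T},u\rangle\langle S(Y)_{0,T},v\rangle$. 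So the task reduces to matching these two sums.

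The key geometric step is to decompose the simplex $\{0 < r_1 < \cdots < r_n < 2T\}$ according to the split index
\[
  k = \#\{\,j : r_j \le T\,\} \in \{0,1,\dots,n\}.
\]
On each piece $\{0 < r_1 < \cdots < r_k < T < r_{k+1} < \cdots < r_n < 2T\}$ I would use the defining formula for the concatenation: $d(X\sqcup Y)^{i}_{r} = dX^{i}_{r}$ for $r \in (0,T)$ and $d(X\sqcup Y)^{i}_{r} = dY^{i}_{r-T}$ for $r \in (T,2T)$ (the translate $-Y_0 + X_T$ is a constant and drops out of the differential). Since the variables in the first $k$ slots are completely uncoupled from the last $n-k$ ones, the integral on each piece is a product of an integral over $\{0 < r_1 < \cdots < r_k < T\}$ with an integral over $\{T < r_{k+1} < \cdots < r_n < 2T\}$; the obvious change of variables $s_j = r_{j} - T$ in the latter turns it into an iterated integral of $Y$ over $[0,T]$.

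Writing $u_k = x_{i_1}\cdots x_{i_k}$ and $v_k = x_{i_{k+1}}\cdots x_{i_n}$, this produces
\[
  \Big\langle S(X \sqcup Y)_{0,2T}, w\Big\rangle
  = \sum_{k=0}^{n} \Big\langle S(X)_{0,T}, u_k\Big\rangle \Big\langle S(Y)_{0,T}, v_k\Big\rangle,
\]
which is precisely $\langle S(X)_{0,T}\cdot S(Y)_{0,T}, w\rangle$. Since $w$ was arbitrary, the two power series coincide.

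The only nonroutine point is justifying the decomposition of the simplex in the Riemann--Stieltjes setting: the boundary set $\{r_j = T\text{ for some }j\}$ has measure zero in the simplex, and one must make sure that the standard Fubini-type splitting of an iterated Stieltjes integral applies when the integrator has a corner at $r=T$. This is where bounded variation is used; for smooth curves (the reader's preferred setting) it is immediate, and in general it follows by approximating $X\sqcup Y$ by smoother concatenations and using continuity of iterated integrals in $p$-variation (or by directly inspecting the Riemann sums). Everything else is bookkeeping.
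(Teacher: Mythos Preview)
Your argument is correct and is the standard direct proof of Chen's relation. The paper itself does not supply a proof but simply refers the reader to \cite[Theorem 7.11]{bib:FV2010}; the argument there proceeds in essentially the same way, by splitting the simplex of integration at the concatenation time and using Fubini.
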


%

We will use the following fact repeatedly,
which also explains the commonly used name \textbf{tensor algebra} for $\TS$.
\newcommand{\bijection}{\mathsf{poly}}
\begin{lemma}
  \label{lem:oneToOne}
  The space of all multilinear maps on $\R^{\ds} \times \dots \times \R^{\ds}$ ($n$-times)
  is in a one-to-one correspondence with homogeneous polynomials of order $n$ in the non-commuting variables $x_1,\dots,x_\ds$
  by the following bijection
  \begin{align*}
    \psi \mapsto \bijection(\psi) := \sum_{i_1, \dots, i_n \in \{1,\dots,\ds\}} \psi(e_{i_1}, e_{i_2}, \dots, e_{i_n}) x_{i_1} \cdot x_{i_2} \cdot .. \cdot x_{i_n},
  \end{align*}
  with $e_i$ being the $i$-th canonical basis vector of $\R^\ds$.
\end{lemma}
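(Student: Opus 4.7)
The plan is to write down an explicit inverse of $\bijection$ and verify both compositions are the identity. The key observation is that both the space of multilinear maps on $(\R^\ds)^n$ and the space of homogeneous order-$n$ polynomials in the non-commuting variables $x_1,\dots,x_\ds$ are naturally parametrised by the index set $\{1,\dots,\ds\}^n$: a multilinear map $\psi$ is determined by its values $\psi(e_{i_1},\dots,e_{i_n})$ via multilinear extension, while the monomials $x_{i_1}\cdots x_{i_n}$ form a basis of the polynomial space. So the entire content of the lemma is to match these two bases.

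First, I would invoke the standard fact that any multilinear $\psi$ satisfies
\begin{align*}
  \psi(v_1,\dots,v_n) = \sum_{i_1,\dots,i_n}\, v_1^{i_1}\cdots v_n^{i_n}\, \psi(e_{i_1},\dots,e_{i_n}),
\end{align*}
with $v_k = \sum_i v_k^i e_i$, and conversely that any prescription of scalars $\{\psi(e_{i_1},\dots,e_{i_n})\}_{i_1,\dots,i_n}$ extends uniquely to such a multilinear map. Consequently the space of multilinear maps is free with basis the elementary maps $\psi_{i_1,\dots,i_n}$ characterised by $\psi_{i_1,\dots,i_n}(e_{j_1},\dots,e_{j_n}) = \delta_{i_1 j_1}\cdots\delta_{i_n j_n}$.

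Second, I would define the candidate inverse $\Phi$ on a polynomial $p = \sum c_{i_1,\dots,i_n}\, x_{i_1}\cdots x_{i_n}$ by declaring $\Phi(p)(e_{i_1},\dots,e_{i_n}) := c_{i_1,\dots,i_n}$ and extending multilinearly. From the definition of $\bijection$ one reads off that $\psi(e_{i_1},\dots,e_{i_n})$ is precisely the coefficient of $x_{i_1}\cdots x_{i_n}$ in $\bijection(\psi)$; hence $\Phi(\bijection(\psi)) = \psi$ on basis tuples, and therefore everywhere by multilinear extension, while $\bijection(\Phi(p)) = p$ term-by-term. Both $\bijection$ and $\Phi$ are linear, so this establishes the asserted one-to-one correspondence.

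There is no substantive obstacle here; the statement is a coordinatised version of the universal property of the $n$-fold tensor power $(\R^\ds)^{\otimes n}$, under the identification of the non-commuting monomial $x_{i_1}\cdots x_{i_n}$ with the elementary tensor $e_{i_1}\otimes \cdots \otimes e_{i_n}$.
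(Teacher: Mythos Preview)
Your proof is correct and complete. The paper itself states this lemma as a standard fact without proof, so there is nothing to compare against; your argument---constructing the explicit inverse by reading off coefficients and verifying both compositions on basis tuples---is exactly the routine verification one would expect, and your closing remark identifying this with the universal property of $(\R^\ds)^{\otimes n}$ is the right conceptual framing.
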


\section{General linear group}
\label{sec:gl}

Let
\begin{align*}
  GL(\R^\ds) = \{ A \in \R^{\ds\times \ds} : \det( A ) \not= 0 \},
\end{align*}
be the general linear group of $\R^\ds$.

\begin{definition}
  \label{def:invariant}
  For $w \in \N$,
  we call $\phi \in \TS$ an \textbf{GL invariant of weight $w$} if
  \begin{align*}
    \Big\langle S(A X)_{0,T}, \phi \Big\rangle = (\det A)^w \Big\langle S(X)_{0,T}, \phi \Big\rangle
  \end{align*}
  for all $A \in \operatorname{GL}(\R^\ds)$.
\end{definition}

%

\begin{definition}
  \label{def:action}
  Define a linear action of $GL(\R^\ds)$ on $\TC$ and $\TS$,
  by specifying on monomials
  \begin{align*}
    A x_{i_1} .. x_{i_n}
    &:=
    \sum_j (A e_{i_1})_{j_1} x_{j_1} .. (A e_{i_n})_{j_n} x_{j_n} \\
    &=
    \sum_j A_{j_1 i_1} .. A_{j_n i_n} x_{j_1} .. x_{j_n}.
  \end{align*}
\end{definition}

\begin{lemma}
  For all $A \in \R^{\ds \times \ds}$ and any curve $X$,
  \begin{align*}
    \Big\langle S(A X)_{0,T}, \phi \Big\rangle
    =
    \Big\langle S(X)_{0,T}, A^\top \phi \Big\rangle.
  \end{align*}
\end{lemma}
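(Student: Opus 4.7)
The plan is to reduce the identity to the case of a monomial $\phi = x_{i_1} \cdots x_{i_n}$ by bilinearity of $\langle\cdot,\cdot\rangle$ and linearity of the action $\phi \mapsto A^\top\phi$, and then verify it by direct computation using the fact that $A$ acts on the curve componentwise while the signature pairs monomials against iterated integrals.

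Concretely, I would proceed as follows. First, observe that $(AX)^i_r = \sum_k A_{ik} X^k_r$ and hence $d(AX)^i_r = \sum_k A_{ik} dX^k_r$. Plugging this into the iterated integral gives
\begin{align*}
  \Big\langle S(AX)_{0,T}, x_{i_1}\cdots x_{i_n} \Big\rangle
  &= \int_0^T \!\!\cdots\! \int_0^{r_2} d(AX)^{i_1}_{r_1} \cdots d(AX)^{i_n}_{r_n} \\
  &= \sum_{j_1,\dots,j_n} A_{i_1 j_1} \cdots A_{i_n j_n} \int_0^T \!\!\cdots\! \int_0^{r_2} dX^{j_1}_{r_1} \cdots dX^{j_n}_{r_n} \\
  &= \sum_{j_1,\dots,j_n} A_{i_1 j_1} \cdots A_{i_n j_n} \Big\langle S(X)_{0,T}, x_{j_1}\cdots x_{j_n} \Big\rangle,
\end{align*}
where the sums can be pulled out of the Riemann--Stieltjes integrals by standard linearity arguments. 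The combinatorial point to notice is that the free index on $A$ is the \emph{first} one (coming from the curve's coordinate), whereas in Definition~\ref{def:action} the sum $A x_{i_1}\cdots x_{i_n} = \sum_j A_{j_1 i_1}\cdots A_{j_n i_n} x_{j_1}\cdots x_{j_n}$ has the free index in the \emph{second} slot. This is precisely what the transpose corrects.

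Indeed, applying Definition~\ref{def:action} with $A$ replaced by $A^\top$ gives
\begin{align*}
  \Big\langle S(X)_{0,T}, A^\top x_{i_1}\cdots x_{i_n} \Big\rangle
  = \sum_{j_1,\dots,j_n} (A^\top)_{j_1 i_1} \cdots (A^\top)_{j_n i_n} \Big\langle S(X)_{0,T}, x_{j_1}\cdots x_{j_n} \Big\rangle,
\end{align*}
and since $(A^\top)_{j_k i_k} = A_{i_k j_k}$, this coincides with the expression derived above. Extending linearly in $\phi$ completes the proof.

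There is no real obstacle; the only mildly delicate point is keeping the index placement straight, which is exactly why the transpose appears. The argument is purely algebraic once the integral has been expanded, so nothing about the curve beyond bounded variation (to make the iterated integrals well-defined) is used.
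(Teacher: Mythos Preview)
Your proof is correct and follows essentially the same approach as the paper: reduce to monomials by linearity, expand $d(AX)^{i}$ as $\sum_k A_{ik}\,dX^k$, and identify the resulting sum with the action of $A^\top$ on $\phi$. The paper's version is more terse, but the content and the key index-swap observation are identical.
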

\begin{proof}
  It is enough to verify this on monomials $\phi = x_{\ell_1} .. x_{\ell_m}$.
  Then
  \begin{align*}
    \Big\langle S( A X ), \phi \Big\rangle
    &=
    \sum_j A_{\ell_1 j_1} \dots A_{\ell_m j_m} \int d X^{j_1} \dots d  X^{j_m} \\
    &=
    \Big\langle S(X), \sum_j A_{\ell_1 j_1} x_{j_1} .. A_{\ell_m j_m} x_{j_m} \Big\rangle \\
    &=
    \Big\langle S(X), A^\top \phi \Big\rangle.
  \end{align*}
\end{proof}

We can simplify the concept of GL invariants further, using the next lemma.
\begin{lemma}
  \label{lem:spans}
  For $n\ge 1$
  \begin{align}
    \label{eq:GspansT}
    \operatorname{span} \{ \pi_n S(X)_{0,T} : X \text{ curve } \} = \pi_n \TC.
  \end{align}
\end{lemma}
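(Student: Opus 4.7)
I would prove the nontrivial inclusion $\pi_n \TC \subseteq \operatorname{span}\{\pi_n S(X)_{0,T}\}$ by producing, for every multi-index $(i_1,\dots,i_n)$, the monomial $x_{i_1}\cdots x_{i_n}$ as an explicit linear combination of level-$n$ signatures of piecewise linear curves.

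The building block is a linear segment. If $Y^{(v)}: [0,s] \to \R^\ds$ is the straight line with constant velocity $v \in \R^\ds$, then a direct computation gives
\begin{align*}
    \pi_k S(Y^{(v)})_{0,s} \;=\; \frac{s^k}{k!}\, \bigl(v_1 x_1 + \cdots + v_\ds x_\ds\bigr)^k
    \;=\; \frac{s^k}{k!}\,(v\cdot x)^k,
\end{align*}
where the $k$-th power is taken with respect to the non-commutative product $\cdot$ on $\TC$.

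Next, given $v_1,\dots,v_n \in \R^\ds$ and durations $T_1,\dots,T_n > 0$, I would concatenate the corresponding linear segments and apply Chen's relation (Lemma~\ref{lem:chen}) iteratively to get
\begin{align*}
    \pi_n S\bigl(Y^{(v_1)} \sqcup \cdots \sqcup Y^{(v_n)}\bigr)
    \;=\;
    \sum_{\substack{n_1+\cdots+n_n=n \\ n_j \ge 0}}
    \prod_{j=1}^{n} \frac{T_j^{\,n_j}}{n_j!}\,(v_j \cdot x)^{n_j}.
\end{align*}
Viewed as a polynomial in $(T_1,\dots,T_n)$ with coefficients in $\pi_n \TC$, the monomial $T_1 T_2 \cdots T_n$ appears in exactly one summand, namely the one with $n_1=\cdots=n_n=1$, and its coefficient is
$(v_1 \cdot x)(v_2 \cdot x)\cdots (v_n \cdot x)$. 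By a Vandermonde / polynomial-interpolation argument (choose $n+1$ distinct values for each $T_j$ and use that the evaluation map from polynomials of degree $\le n$ in each $T_j$ to their values on this grid is bijective), every coefficient of this polynomial is itself a finite $\R$-linear combination of actual evaluations $\pi_n S(\,\cdot\,)$. In particular $(v_1 \cdot x)\cdots(v_n \cdot x)$ lies in $\operatorname{span}\{\pi_n S(X)_{0,T}: X \text{ curve}\}$ for every choice of $v_1,\dots,v_n$.

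Finally, specialising $v_j = e_{i_j}$ yields $(v_1\cdot x)\cdots(v_n\cdot x) = x_{i_1}\cdots x_{i_n}$, so every basis monomial of $\pi_n \TC$ lies in the span, which proves \eqref{eq:GspansT}. The only subtle point is the interpolation step: one has to observe that $\pi_n S$ is polynomial in the $T_j$ of bounded degree so that the span of its values recovers its coefficients. Everything else is a direct application of the linear-curve formula and Chen's relation.
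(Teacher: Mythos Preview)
Your argument is correct and follows the same core idea as the paper: take a piecewise linear path whose $n$ segments point in the coordinate directions $e_{i_1},\dots,e_{i_n}$, use Chen's relation to write $\pi_n S$ as a polynomial in the segment parameters, and extract the coefficient of the multilinear monomial $T_1\cdots T_n$, which is precisely $x_{i_1}\cdots x_{i_n}$.

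The only difference is how the coefficient is extracted. The paper differentiates: it sets $\phi(t_1,\dots,t_n)=\exp(t_n x_{i_n})\cdots\exp(t_1 x_{i_1})$, computes $\partial_{t_n}\cdots\partial_{t_1}\phi(0,\dots,0)=x_{i_n}\cdots x_{i_1}$, and then invokes the fact that $\operatorname{span}\{\pi_n S(X)\}$ is a (finite-dimensional, hence closed) subspace to conclude that this limit of difference quotients still lies in the span. You instead use a Vandermonde/interpolation argument on a finite grid of positive $T_j$-values, which produces the coefficient directly as a \emph{finite} linear combination of signatures and thereby sidesteps the closedness remark. Both routes are short; yours is arguably cleaner in that it never leaves the algebraic span, while the paper's is slightly quicker to write down once one is willing to pass to a limit.
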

\begin{proof}
  It is clear by definition that the left hand side of \eqref{eq:GspansT} is included in $\pi_n \TC$.
  We show the other direction and use ideas of \cite[Proposition 4]{bib:CF2010}. 
  Let $x_{i_n} \cdot \ldots \cdot x_{i_1} \in \pi_n \TC$ be given.
  Let $X$ be the piecewise linear path that results from the concatenation of the vectors $t_1 e_{i_1}, t_2 e_{i_2}$ up to $t_n e_{i_n}$,
  where $e_i, i=1,..,\ds$ is the standard basis of $\R^\ds$.
  Its signature is given by (see for example \cite[Chapter 6]{bib:FV2010})
  \begin{align*}
    S(X)_{0,1} = \exp( {t_n x_{i_n}} ) \cdot \ldots \cdot \exp( t_1 x_{i_1} ) =: \phi(t_1, \dots, t_n),
  \end{align*}
  where the exponential function is defined by its power series.
  Then
  \begin{align*}
    \frac{d}{dt_n} \dots \frac{d}{dt_1} \phi(0,\dots,0) = x_{i_n} \cdot \ldots \cdot x_{i_1}.
  \end{align*}
  Combining this with the fact that left hand side of \eqref{eq:GspansT} is a closed set we get that
  \begin{align*}
    x_{i_n} \cdot \ldots \cdot x_{i_1}
    \in 
    \spann\{ \pi_n( S(X) _{0,1} ) : X \text{ curve } \}.
  \end{align*}

  These elements span $\pi_n \TC$, which finishes the proof.
\end{proof}

Hence, $\phi$ is a $GL$ invariant of weight $w$ in the sense of Definition~\ref{def:invariant} if and only if
for all $A \in GL(\R^\ds)$
\begin{align*}
  A^\top \phi  = (\det A)^w \phi.
\end{align*}

Since the action respects homogeneity, we immediately obtain
that projections of invariants are invariants (take $B = (\det A)^{-w} A^\top$ in the following lemma):
\begin{lemma}
  \label{lem:projectionIsInvariant}
  If $\phi \in \TS$ satisfies
  \begin{align*}
    B \phi = \phi,
  \end{align*}
  for some $B \in GL(\R^\ds)$ then
  \begin{align*}
    B \pi_n \phi = \pi_n \phi,
  \end{align*}
  for all $n \ge 1$.
\end{lemma}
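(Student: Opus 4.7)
The plan is to exploit the fact that the action of $GL(\R^\ds)$ defined in Definition~\ref{def:action} respects the grading of $\TS$ by homogeneity. Indeed, inspection of the defining formula shows that $B$ sends a monomial $x_{i_1}\cdots x_{i_n}$ of length $n$ to a linear combination of monomials of the same length $n$. Consequently, $B$ preserves each homogeneous subspace $\pi_n \TS$, and therefore commutes with every projection $\pi_n$.

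Concretely, first I would verify, directly from Definition~\ref{def:action}, that for every $\psi \in \TS$ and every $n \ge 0$ one has
\begin{align*}
  \pi_n ( B \psi ) = B ( \pi_n \psi ).
\end{align*}
This is immediate by linearity once it is checked on monomials, which follows because $B x_{i_1}\cdots x_{i_n}$ is a sum of length-$n$ monomials.

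With this at hand, the conclusion is a one-line computation: applying $\pi_n$ to both sides of the hypothesis $B\phi = \phi$ gives $\pi_n(B\phi) = \pi_n \phi$, and then by the commutation property the left-hand side equals $B(\pi_n \phi)$, yielding $B(\pi_n \phi) = \pi_n \phi$ as required.

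There is essentially no obstacle here; the whole statement is a formal consequence of the grading-preserving nature of the action, so the only thing worth writing down carefully is the monomial-level check that $B$ respects homogeneity.
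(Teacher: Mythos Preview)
Your proposal is correct and matches the paper's own argument exactly: the paper's proof is the single sentence ``By definition, the action of $GL$ on $\TS$ commutes with $\pi_n$,'' which is precisely the grading-preserving observation you spell out in more detail.
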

\begin{proof}
  By definition,
  the action of $GL$ on $\TS$ commutes with $\pi_n$.
\end{proof}


In order to apply classical results in invariant theory, we use the
bijection $\bijection$ between multilinear functions and non-commuting polynomials,
given in Lemma~\ref{lem:oneToOne}.
\begin{lemma}
  \label{lem:multi}
  For $\psi: (\R^d)^{\times n} \to \R$ multilinear
  and
  $A \in GL(\R^d)$,
  \begin{align*}
    \bijection[ \psi( A \cdot ) ] = A^\top \bijection[ \psi ].
  \end{align*}
\end{lemma}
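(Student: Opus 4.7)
The plan is to verify the identity by a direct index-chasing computation; both sides unwrap into the same double sum indexed by pairs of multi-indices $(i,j) \in \{1,\dots,d\}^n \times \{1,\dots,d\}^n$.

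First I would expand the left-hand side. By the definition of $\bijection$,
\begin{align*}
  \bijection[\psi(A\,\cdot\,)] = \sum_{i_1,\dots,i_n} \psi(A e_{i_1},\dots,A e_{i_n})\, x_{i_1}\cdots x_{i_n}.
\end{align*}
Since $A e_{i_k} = \sum_{j_k} A_{j_k i_k} e_{j_k}$, multilinearity of $\psi$ gives
\begin{align*}
  \psi(A e_{i_1},\dots,A e_{i_n}) = \sum_{j_1,\dots,j_n} A_{j_1 i_1}\cdots A_{j_n i_n}\, \psi(e_{j_1},\dots,e_{j_n}),
\end{align*}
so the LHS equals $\sum_{i,j} A_{j_1 i_1}\cdots A_{j_n i_n}\, \psi(e_{j_1},\dots,e_{j_n})\, x_{i_1}\cdots x_{i_n}$.

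Next I would expand the right-hand side. Applying Definition~\ref{def:action} with $A$ replaced by $A^\top$ to each monomial yields
\begin{align*}
  A^\top\bigl(x_{j_1}\cdots x_{j_n}\bigr) = \sum_{i_1,\dots,i_n} (A^\top)_{i_1 j_1}\cdots (A^\top)_{i_n j_n}\, x_{i_1}\cdots x_{i_n} = \sum_{i_1,\dots,i_n} A_{j_1 i_1}\cdots A_{j_n i_n}\, x_{i_1}\cdots x_{i_n}.
\end{align*}
Plugging this into $A^\top \bijection[\psi] = \sum_{j} \psi(e_{j_1},\dots,e_{j_n})\, A^\top(x_{j_1}\cdots x_{j_n})$ gives exactly the same double sum as the LHS, which finishes the proof.

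There is no real obstacle; the only care needed is getting the transposition right so that the $A_{ji}$ factors appear on the correct side of the pairing. The appearance of $A^\top$ rather than $A$ is precisely the same phenomenon as in the preceding lemma $\langle S(AX), \phi\rangle = \langle S(X), A^\top \phi\rangle$: the action on monomials is defined so that $A$ transports basis vectors, whereas transport of multilinear forms naturally involves the transpose.
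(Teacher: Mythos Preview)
Your proof is correct and follows essentially the same route as the paper: expand $\bijection[\psi(A\cdot)]$ via multilinearity into the double sum $\sum_{i,j} A_{j_1 i_1}\cdots A_{j_n i_n}\,\psi(e_{j_1},\dots,e_{j_n})\,x_{i_1}\cdots x_{i_n}$ and recognise this as $A^\top$ acting on $\bijection[\psi]$. The only cosmetic difference is that the paper chains the equalities from left to right rather than expanding both sides separately.
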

\begin{proof}
  \begin{align*}
    \bijection[ \psi( A \cdot ) ]
    &=
    \sum_i \psi( A e_{i_1}, .. A e_{i_n} ) x_{i_1} .. x_{i_n} \\
    &=
    \sum_{i,j} A_{j_1 i_1} .. A_{j_n i_n} \psi( e_{j_1}, .. e_{j_n} ) x_{i_1} .. x_{i_n} \\
    &=
    \sum_{j} \psi( e_{j_1}, .. e_{j_n} ) A^\top x_{j_1} .. x_{j_n} \\
    &=
    A^\top B \psi.
  \end{align*}
\end{proof}

The simplest multilinear function
\begin{align*}
  \Psi: (\R^d)^{\times n} \to \R,
\end{align*}
satisfying $\Psi( A v_1, .., A v_n ) = \det( A ) \Psi(v_1, .., v_n)$
that one can maybe think of, is the determinant itself. That is, $n=\ds$ and
\begin{align*}
  \Psi(v_1,..,v_n) = \det[ v_1 v_2 .. v_n ],
\end{align*}
where $v_1 v_2 .. v_n$ is the $d \times d$ matrix with columns $v_i$.
Up to a scalar this is in fact the only one, and it turns out that invariants of higher weight are
built only using determinants as a building block.

To state the following classical result, we introduce
the notion of Young diagrams, which play an important role in the
representation theory of the symmetric group.

Let $\lambda = (\lambda_1, .., \lambda_r)$ be a partition of $n \in \N$,
which we assume ordered as $\lambda_1 \ge \lambda_2 \ge .. \ge \lambda_r$.
We associate to it a \textbf{Young diagram}, which is an arrangement of $n$ boxes
into left-justified rows. There are $r$ rows, with $\lambda_i$ boxes in the $i$-th row.
For example, the partition $(4,2,1)$ of $7$ gives the Young diagram
\begin{align*}
  \begin{ytableau}
    \  & \  & \  & \ \\
    \   & \  \\
    \ 
  \end{ytableau}
\end{align*}

A \textbf{Young tableau} is obtained by filling these boxes with the numbers $1, .., n$.
Continuing the example, the following is a Young tableau
\begin{align*}
  \begin{ytableau}
    2  & 3 & 7  & 1 \\
    5  & 4  \\
    6 
  \end{ytableau}
\end{align*}

A Young tableau is \textbf{standard} if the values in every row are increasing (from left to right)
and are increasing in every column (from top to bottom). The previous tableau was not standard; the following is.
\begin{align*}
  \begin{ytableau}
    1  & 3 & 5  & 7 \\
    5  & 4  \\
    6 
  \end{ytableau}
\end{align*}

The following result is classical,
see for example Dieudonn\'e \cite[Section 2.5]{bib:DC1970}, \cite{bib:Wey1946} and \cite{bib:Gar1975},
none of which explicitly give a basis for the invariants though.

\begin{theorem}
  \label{thm:linearBasis}

   The space of multilinear maps
   \begin{align*}
     \psi: \underbrace{\R^\ds \times \dots \times \R^\ds}_{n \text{ times}} \to \R
   \end{align*}
   that satisfy
   \begin{align*}
     \psi(A v_1, A v_2, \dots, A v_n) = (\det A)^w \psi(v_1, v_2, \dots, v_n)
   \end{align*}
   for all $A \in \operatorname{GL}(V)$ and $v_1, \dots, v_n \in V$
   is non-empty if and only if $n = w d$ for some integer $w \ge 1$.

  In that case, a linear basis is given by
  \begin{align*}
    \{ v \mapsto \det[ v_{C_1} ]  .. \det[ v_{C_w} ] \}
  \end{align*}
  where
  $C_i$ are the columns of $\Sigma$,
  and
  $\Sigma$ ranges over all standard Young tableaux corresponding to the partition $\lambda = \underbrace{(w, w, .., w)}_{d \text{ times}}$ of $n$.

  Here, for a sequence $C = (c_1,..,c_\ds)$, $v_C$ denotes the matrix
  of column vectors $v_{c_i}$, i.e.
  \begin{align*}
    v_C = (v_{c_1}, .., v_{c_\ds}).
  \end{align*}
\end{theorem}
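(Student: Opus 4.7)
The plan is to reduce the statement to two classical pieces of machinery: the First Fundamental Theorem of invariant theory for $\operatorname{GL}_d$, which provides generation by bracket polynomials, and the straightening law on the rectangular shape $(w^d)$, which extracts a basis out of the overcomplete generating set.

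First I would settle the divisibility $n = wd$ by testing semi-invariance against the scalar matrices $A = tI_d$ for $t \neq 0$. By multilinearity the left-hand side equals $t^n \psi(v_1, \dots, v_n)$, while the defining identity gives $(\det tI_d)^w \psi(v_1, \dots, v_n) = t^{wd} \psi(v_1, \dots, v_n)$. Unless $\psi \equiv 0$, this forces $n = wd$. Next, for a standard tableau $\Sigma$ of rectangular shape $(w, \dots, w)$ with $d$ rows, the columns $C_1, \dots, C_w$ form an ordered partition of $\{1, \dots, n\}$ into $d$-subsets. Since the $C_i$ are disjoint, $v \mapsto \prod_{i=1}^w \det[v_{C_i}]$ is multilinear in $(v_1, \dots, v_n)$, and under $v_j \mapsto A v_j$ each of the $w$ determinant factors picks up one power of $\det A$, so the product is a semi-invariant of weight $w$, as required.

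The substantive content is that these products span and are linearly independent. Spanning follows from the First Fundamental Theorem for the $\operatorname{GL}(\R^d)$-action on $n$-tuples of vectors: every polynomial semi-invariant is a polynomial in the bracket functions $[v_{i_1}, \dots, v_{i_d}] := \det[v_{i_1}, \dots, v_{i_d}]$. Imposing weight $w$ forces exactly $w$ brackets per monomial, and multilinearity forces each of the indices $1, \dots, n$ to appear in exactly one bracket, so every such monomial corresponds to an (\emph{a priori} arbitrary) filling of the diagram $(w^d)$ by $\{1, \dots, n\}$, one integer per box, with the $i$-th column reading off the $i$-th bracket.

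The redundancy in this set of fillings is governed by the quadratic Plücker relations among the brackets. The \emph{straightening law} of Young, in the form developed by Doubilet, Rota and Stein, rewrites any non-standard filling as a $\Z$-combination of fillings that are strictly smaller in a suitable order on tableaux, and iterated application terminates on standard tableaux. Linear independence of the standard fillings then follows from a dimension count: via Schur-Weyl duality, the space of multilinear weight-$w$ semi-invariants is isomorphic to the Specht module $S^{(w^d)}$ of the symmetric group $S_n$, whose dimension is exactly the number of standard Young tableaux of shape $(w^d)$. The main obstacle is this last step, since both the straightening procedure and the dimension identification are technical; in practice I would rely on the Doubilet-Rota-Stein framework (or, equivalently, the Schur-Weyl isomorphism) rather than attempt a direct combinatorial proof from scratch.
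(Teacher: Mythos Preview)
Your argument is correct, but it takes a noticeably different route from the paper's. You work in the classical invariant-theory style: First Fundamental Theorem for $\operatorname{GL}_d$ to get that every semi-invariant is a bracket polynomial, then weight/multilinearity to cut down to products of $w$ disjoint brackets, then straightening to reduce to standard fillings, and finally a dimension count (via the Specht module) for linear independence. The paper, by contrast, never invokes the FFT or straightening directly. It starts from Schur--Weyl duality $V^{\otimes n}\simeq\bigoplus_\lambda S^\lambda\otimes V^\lambda$, observes that the only one-dimensional $\operatorname{GL}(V)$-piece is $V^{(w,\dots,w)}$, and then builds an explicit $S_n$-equivariant embedding $\iota$ of the tabloid space into $V^{\otimes n}$; the computation $\iota(e_t)=\prod_i\det[v_{C_i}]$ is done by hand, so the identification of the standard-tableau basis with the determinant products is constructive rather than obtained by counting.

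What each buys: your approach is modular and quotes standard packages (FFT, Doubilet--Rota--Stein straightening), which makes the logical structure transparent but leaves the heavy lifting inside those references. The paper's approach avoids the FFT entirely and produces the isomorphism with $S^{(w^d)}$ explicitly, at the cost of a somewhat longer direct calculation; it also makes visible exactly how the Specht polytabloid $e_t$ lands on the corresponding product of determinants, which your dimension-count argument does not exhibit.
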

\begin{remark}
  A consequence of this theorem,
  is the existence of identities between products of determinants.
  %
  For example, for vectors $v_1, .., v_4 \in \R^2$, one can check by hand
  \begin{align*}
    \det[ v_1 v_4 ] \det[ v_2 v_3]
    =
    \det[ v_1 v_3 ] \det[ v_2 v_4]
    -
    \det[ v_1 v_2 ] \det[ v_3 v_4].
  \end{align*}
  This is why the product on the left-hand side here is not part of the basis in the previous lemma for $d=2, w=2$
  (compare Section \ref{sec:explicit}). 

  Identities of this type
  are called \emph{Pl\"ucker identities}.
  They have a long history and are a major ingredient in the representation theory of the symmetric group.
  %
  The procedure of reducing certain products of determinants to a basic set of such products
  is called the \emph{straightening algorithm} \cite[Section 2.6]{bib:Sag2013}.
  See also 
  \cite{bib:Lec1993} and
  \cite{bib:SW1989}.
\end{remark}

\begin{remark}
  The only invariant for $d=2, w=1$ is
  \begin{align*}
    x_1 x_2 - x_2 x_1 = [x_1,x_2],
  \end{align*}
  a Lie polynomial.
  One can generally ask for invariant Lie polynomials \cite[Section 8.6.2]{bib:Reu1993}.
  This seems to be of no relevance to the application of invariant feature extraction for curves though.
\end{remark}

\begin{proof}
  Write $V = (\R^\ds)^*$, the dual space of $\R^\ds$.
  Every $\phi \in V^{\otimes n}$ that satisfies
  \begin{align}
    \label{eq:x}
    A \phi = (\det A)^w \phi,
  \end{align}
  clearly spans a one-dimensional irreducible representation of $GL(V)$.
  Hence, we need to investigate all one-dimensional irreducible representation 
  of $GL(V)$ contained in $V^{\otimes n}$ (and it will turn out that all of them satisfy \eqref{eq:x}).

  The (diagonal) action of $GL(V)$ on $V^{\otimes n}$ is best understood,
  by simultaneously studying the left action of $S_n$ on $V^{\otimes n}$ given
  by
  \begin{align*}
    \tau \cdot v_1 \otimes .. \otimes v_n
    :=
    v_{\tau^{-1}(1)} \otimes .. \otimes v_{\tau^{-1}(n)}.
  \end{align*}


  By Schur-Weyl duality, \cite[Theorem 6.4.5.2]{bib:Lan2012}, as $S_n \times GL(V)$ modules,
  \begin{align}
    \label{eq:schurWeyl}
    V^{\otimes n} \simeq \bigoplus_{\lambda \vdash n} S^\lambda \otimes V^\lambda,
  \end{align}
  where the sum is over integer partitions $\lambda$ of $n$,
  the $S^\lambda$ are irreducible representations of $S_n$, to be detailed below
  and the $V^\lambda$ are irreducible representations of $GL(V)$.
  The exact form of the latter is irrelevant here,
  we only need to know that $V^\lambda$ is one-dimensional if
  and only if $\lambda = (w,..,w)$, $d$-times, for some integer $w \ge 1$,
  \cite[p.21]{bib:DC1970}.
  This gives the condition $n = w d$ in the statement. We assume this to hold from now on.

  We are hence left with understanding the unique copy of the ``Specht module'' $S^\lambda$ inside of $V^{\otimes n}$.
  We sketch its classical construction.
  Let us recall that a \textbf{tabloid} is an equivalence class of Young tableaux modulo
  permutations leaving the set of entries in each row invariant \cite[Chapter 2]{bib:Sag2013}.%
  \footnote{
    One can also think of a tabloid as the following element of the vector space spanned by Young tableaux,
    \begin{align*}
      \{ t \} = \sum_{\pi} \pi t.
    \end{align*}
    Here the sum is over all permutations $\pi$ that leave the elements of each \emph{row} of $t$ unchanged.
  }
  For $t$ a Young tableau denote $\{ t \}$ its tabloid, so for example
  \begin{align*}
    \left\{
    \begin{ytableau}
      1  & 3 \\
      2  & 4
    \end{ytableau}
    \right\}
    =
    \left\{
    \begin{ytableau}
      1  & 3 \\
      4  & 2
    \end{ytableau}
    \right\}
    =
    \left\{
    \begin{ytableau}
      3  & 1 \\
      2  & 4
    \end{ytableau}
    \right\}
    =
    \left\{
    \begin{ytableau}
      3  & 1 \\
      4  & 2
    \end{ytableau}
    \right\}.
  \end{align*}

  The symmetric group $S_n$ acts on Young tableaux as
  \begin{align*}
    (\tau \cdot t)_{ij} := \tau( t_{ij} ).
  \end{align*}
  For example
  \begin{align*}
    (234) \cdot
    \begin{ytableau}
      2  & 4 \\
      1  & 3
    \end{ytableau}
    =
    \begin{ytableau}
      3  & 2 \\
      1  & 4
    \end{ytableau}
  \end{align*}
  It then acts on tabloids by $\tau \cdot \{t\} := \{ \tau \cdot t \}$.
  Define for a Young tableau $t$ 
  \begin{align*}
    e_t := \sum_{\pi} \sign(\pi) \pi \cdot \{ t \},
  \end{align*}
  where the sum is over all $\pi \in S_n$ that leave the set of values in each \emph{column} invariant.
  For example with 
  \begin{align*}
    t =
    \begin{ytableau}
      1  & 2 \\
      3  & 4
    \end{ytableau}
  \end{align*}
  we get
  \begin{align*}
    e_t
    =   
    \{ t \}
    +
    (1 3) \cdot \{ t \}
    +
    (2 4) \cdot \{ t \}
    +
    (1 3) (2 4) \cdot \{t\}.
  \end{align*}

  Then
  \begin{align*}
    \operatorname{Irrep}_{(w,..,w)}
    := \operatorname{span}\{ e_t : t \text{ Young tableau of shape } (w,..,w) \}
  \end{align*}
  is an irreducible representation of $S_n$ and
  \begin{align*}
    \{ e_t : t \text{ standard Young tableau of shape $(w,..,w)$}  \},
  \end{align*}
  forms a basis \cite[Theorem 2.5.2]{bib:Sag2013}.
  This concludes the reminder on representation theory for $S_n$.


  Define the map $\iota$ from the space of tabloids of shape $(w,..,w)$ into $V^{\otimes n}$ as follows,
  \begin{align*}
    \iota( \{ t \} )
    :=
    e_{j_1}^* \otimes .. \otimes e_{j_n}^*,
  \end{align*}
  where $e_i^*$ is the canonical basis of $V$ and
  \begin{align*}
    j_\ell = i \qquad \Leftrightarrow \qquad \ell \in \text{ $i$-th row of } \{ t \}.
  \end{align*}
  For example
  \begin{align*}
    \iota\left(
      \left\{
        \begin{ytableau}
        1 & 2 & 5\\
        3 & 4 & 6
        \end{ytableau}
      \right\}
    \right)
    =
    e_1^* \otimes e_1^* \otimes e_2^* \otimes e_2^* \otimes e_1^* \otimes e_2^*.
  \end{align*}
  This is a homomorphism of $S_n$ representations.
  Indeed,
  \begin{align*}
    \iota( \tau \cdot \{t\} )
    =
    e_{j_1}^* \otimes .. \otimes e_{j_n}^*,
  \end{align*}
  with
  \begin{align*}
    j_\ell = i &\Leftrightarrow \ell \in \text{ $i$-th row of } \tau \cdot \{t\}.
  \end{align*}
  On the other hand
  \begin{align*}
    \tau \cdot \iota( \{t\} )
    &=
    \tau \cdot e_{r_1}^* \otimes .. \otimes e_{r_n}^* \\
    &=
    e_{p_1}^* \otimes .. \otimes e_{p_n}^*,
  \end{align*}
  with $p_\ell := r_{\tau^{-1}(\ell)}$
  and
  \begin{align*}
    p_\ell = i &\Leftrightarrow r_{\tau^{-1}(\ell)} = i \\
               &\Leftrightarrow \tau^{-1}(\ell) \in \text{ $i$-th row of } \{t\} \\
               &\Leftrightarrow \ell \in \text{ $i$-th row of } \tau \cdot \{t\}.
  \end{align*}
  So indeed $\iota( \tau \cdot \{t\} ) = \tau \cdot \iota \{t\} )$,
  and $\iota$ is a homomorphism of $S_n$ representations.
  It is a bijection from the space of $(w,..,w)$ tabloids
  into the space spanned by the vectors
  \begin{align*}
    e_{i_1} \otimes .. \otimes e_{i_n} : \#\{ \ell : i_\ell = j \} = w, \quad j = 1, .., \ds.
  \end{align*}
  Restricting to $\operatorname{Irrep}_{(w,..,w)}$ then yields an isomorphism of irreducible $S_n$ representations.
  Hence, $\iota( \operatorname{Irrep}_{(w,..,w)} )$ is the (unique) realization of $S^\lambda$ inside of $V^{\otimes n}$ in \eqref{eq:schurWeyl}.
  We finish by describing its image.

  Consider the standard Young tableau $t_{first}$ of shape $(w,w,..,w)$ obtained by filling the columns from left to right, i.e.
  \begin{align*}
    t_{first}
    := 
    \raisebox{40pt}[80pt][80pt]{
    \ytableausetup{mathmode, boxsize=3em}
    \begin{ytableau}
      1   & \ds + 1 & ..& .. & .. \\
      2   & \ds + 2 & ..& .. & ..   \\
      ..  & ..      & ..& .. & .. \\
      \ds & 2 \ds   & ..& .. & n
    \end{ytableau}
    }
  \end{align*}

  Clearly, for any (standard) Young tableau $t$ there exists unique $\sigma_t \in S_n$ such that
  \begin{align*}
    \sigma_t \cdot t_{first} = t.
  \end{align*}

  We claim
  \begin{align*}
    \iota( e_t )
    =
    \Bigl( v \mapsto \det[ v_{\sigma_t(1)} .. v_{\sigma_t(\ds)} ] \cdot ... \cdot \det[ v_{\sigma_t((w-1)\ds + 1)} .. v_{\sigma_t(n)}] \Bigr).
  \end{align*}
  Indeed, since $\iota$ is a homomorphism of $S_n$ representation,
  \begin{align*}
    \iota( \sigma_t \cdot e_{t_{first}} )(v_1,..,v_n)
    &=
    \iota( e_{t_{first}} )(v_{\sigma_t(1)},..,v_{\sigma_t(n)})
  \end{align*}
  It remains to check
  \begin{align*}
    \iota( e_{t_{first}} ) = \det[v_1..v_\ds] ..\det[v_{(w-1)\ds+1} .. v_n].
  \end{align*}
  Every $\pi \in S^n$ that is column-preserving for $t_{first}$
  can be written as
  the product $\pi_1 \cdot .. \cdot \pi_w$,
  with $\pi_j$ ranging over the permutations of the entries of the $j$-th column $t_{first}$.
  Then
  \newcommand{\modd}{\operatorname{mod}}
  \begin{align*}
    \iota( e_{t_{first}} )(v_1,..,v_n)
    &=
    \sum_\pi \sign \pi\ \iota( \pi \{t\} )( v_1, .., v_n ) \\
    &=
    \sum_{\pi_j} \prod_j \sign \pi_j\ \iota( \pi_1 .. \pi_w \{t\} )( v_1, .., v_n ) \\
    &=
    \sum_{\pi_j} \prod_j \sign \pi_j\ 
    e^*_{\pi^{-1}_1(1)} \otimes .. \otimes e^*_{\pi^{-1}_1(d)}
    \otimes
    e^*_{(\pi^{-1}_2(d+1) \modd d) + 1} \otimes\cdots
    \\&\qquad\qquad\cdots \otimes e^*_{(\pi^{-1}_w(n) \modd d) + 1} (v_1,..,v_n) \\
    &=
    \det[v_1..v_\ds] ..\det[v_{(w-1)d+1} .. v_n],
  \end{align*}
  as desired.

\end{proof}

Applying Lemma~\ref{lem:oneToOne} to Theorem~\ref{thm:linearBasis} we get the invariants in $\TS$.
\begin{proposition}
  \label{prop:glInvariants}
  A linear basis for the space of $GL$ invariants of order $n = w \ds$ is given by
  \begin{align*}
    \sum_{i_1,\dots,i_n \in \{1,\dots,d\}} g_\Sigma(i_1,i_2,\dots,i_n) x_{i_1} x_{i_2} \dots x_{i_n},
  \end{align*}
  where
  \begin{align*}
    g_\Sigma(v) = \det[ v_{C_1} ]  .. \det[ v_{C_w} ],
  \end{align*}
  where
  $C_i$ are the columns of $\Sigma$,
  and
  $\Sigma$ ranges over all standard Young tableaux corresponding to the partition $\lambda = \underbrace{(w, w, .., w)}_{d \text{ times}}$ of $n$.
\end{proposition}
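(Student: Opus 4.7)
The plan is to transfer Theorem~\ref{thm:linearBasis}, which describes a linear basis of multilinear maps transforming under $GL(\R^\ds)$ with factor $(\det A)^w$, into a basis for the corresponding invariants in $\pi_n T(\R^\ds)$, using the $\bijection$ correspondence from Lemma~\ref{lem:oneToOne}.

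The first step is to reformulate the invariance condition. By the equivalence established after Lemma~\ref{lem:spans}, $\phi \in T(\R^\ds)$ is a $GL$ invariant of weight $w$ if and only if $A^\top \phi = (\det A)^w \phi$ for all $A \in GL(\R^\ds)$. By Lemma~\ref{lem:projectionIsInvariant}, the action of $GL$ commutes with $\pi_n$, so the space of invariants splits as a direct sum over homogeneous orders, and it suffices to classify homogeneous invariants of each order $n$.

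The second step is the translation to multilinear maps. Lemma~\ref{lem:oneToOne} provides a linear isomorphism $\bijection$ between multilinear maps on $(\R^\ds)^{\times n}$ and $\pi_n T(\R^\ds)$, and Lemma~\ref{lem:multi} intertwines the actions: $A^\top \bijection[\psi] = \bijection[\psi(A\cdot)]$. Consequently, $\phi = \bijection[\psi]$ is a $GL$ invariant of weight $w$ if and only if
\begin{align*}
  \psi(A v_1, \dots, A v_n) = (\det A)^w\, \psi(v_1, \dots, v_n)
\end{align*}
for all $A \in GL(\R^\ds)$ and all $v_i \in \R^\ds$. In particular, by Theorem~\ref{thm:linearBasis}, nonzero invariants occur only when $n = w\ds$.

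For the final step, I would apply Theorem~\ref{thm:linearBasis} directly to obtain the basis $\{v \mapsto \det[v_{C_1}] \cdots \det[v_{C_w}]\}$ of such multilinear maps, indexed by standard Young tableaux $\Sigma$ of shape $(w, \dots, w)$. Since $\bijection$ is a linear bijection, it carries this basis to a basis of the space of homogeneous invariants of order $n$; unwinding the definition of $\bijection$ gives precisely the claimed polynomials
\begin{align*}
  \sum_{i_1,\dots,i_n \in \{1,\dots,\ds\}} g_\Sigma(i_1,\dots,i_n)\, x_{i_1} \cdots x_{i_n}.
\end{align*}
Thus the proof is essentially a transparent reformulation: all the genuine difficulty sits in Theorem~\ref{thm:linearBasis} (the Schur--Weyl / Specht module construction carried out there), and the present proposition requires only that one verifies nothing is lost in passing through $\bijection$. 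The only point to double-check carefully is that the intertwining in Lemma~\ref{lem:multi} is with $A^\top$ (not $A$), so that the condition $A^\top \phi = (\det A)^w \phi$ matches the condition $\psi(A\cdot) = (\det A)^w \psi$ rather than its transpose.
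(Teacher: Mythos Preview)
Your proposal is correct and follows exactly the paper's approach: the paper simply states that the proposition follows by applying the bijection $\bijection$ of Lemma~\ref{lem:oneToOne} (together with the intertwining Lemma~\ref{lem:multi}) to Theorem~\ref{thm:linearBasis}, and you have spelled out precisely this transfer in more detail.
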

\begin{remark}
  By Lemma~\ref{lem:projectionIsInvariant},
  for any invariant $\phi \in \TS$ and $n\ge 1$ we have that $\pi_n \phi$ is also invariant.
  Hence the previous theorem characterizes \emph{all} invariants we are interested in (Definition~\ref{def:invariant}),
  not just homogeneous ones.
\end{remark}

\begin{remark}
  \label{rem:hyperplane}
  Note that each of these invariants $\phi$ consists only of monomials that
  contain \emph{every} variable $x_1, \dots, x_\ds$ at least once.
  This implies that $\langle S(X)_{0,T}, \phi \rangle$
  consists only of iterated integrals that contain every component
  $X^1,\dots,X^d$ of the curve at least once.
  Hence, if at least one of these components is constant,
  the whole expression will be zero.

  Since $\phi$ is invariant, this implies that 
  $\langle S(X)_{0,T}, \phi \rangle = 0$ as soon as 
  there is some coordinate transformation under which one component is constant,
  that is whenever the curve $X$ stays in a hyperplane of dimension strictly less then $d$.

  One of the simplest curves in $\ds$ dimensions that does \emph{not} lie in any in a hyperplane of lower dimension is the \emph{moment curve}
  \begin{align*}
      t \mapsto (t,t^2,..,t^d).
  \end{align*}
  We will come back to this example in Lemma~\ref{lem:momentCurve}.

\end{remark}

\subsection{Examples}
\label{sec:explicit}

We will use the following short notation: 
\begin{align*}
  \word{i_1 \dots i_n} := x_{i_1} \cdot x_{i_2} \cdot .. \cdot x_{i_n}
\end{align*}
so, for example
\begin{align*}
  \word{1121} := x_1 x_1 x_2 x_1.
\end{align*}

We present the invariants described in Section~\ref{sec:signature}
for some special cases of $\ds$ and $w$.


\textbf{The case $\ds=2$}

Level $2$ ($w=1$)
\begin{align*}
  \word{12} - \word{21} 
\end{align*}

\begin{remark}
  \label{rem:interpretation}

  Let us make clear that from the perspective of data analysis,
  the ``invariant'' of interest is really the action of this element in $\TS$
  on the signature of a curve.

  In this example, the real number
  \begin{align*}
    \Big\langle S(X)_{0,T}, \word{12} - \word{21} \Big\rangle
    =  
    \int_0^T \int^{r_2} dX^1_{r_1} dX^2_{r_2}
    -
    \int_0^T \int^{r_2} dX^1_{r_1} dX^2_{r_2},
  \end{align*}
  changes only by the determinant of $A \in GL(\R^2)$ when
  calculating it for the transformed curve $A X$:
  \begin{align*}
    \Big\langle S(A X)_{0,T}, \word{12} - \word{21} \Big\rangle = \det(A)\ \Big\langle S(X)_{0,T}, \word{12} - \word{21} \Big\rangle.
  \end{align*}
\end{remark}

Level $4$ ($w=2$)
\begin{align*}
  & \word{1212} - \word{1221} - \word{2112} + \word{2121} \\
  & \word{1122} - \word{1221} - \word{2112} + \word{2211}
\end{align*}
\begin{remark}
  \label{rem:algebraicIndependence}
  This is a \textit{linear} basis of invariants in the fourth level.
  If one takes \textit{algebraic} dependencies into consideration,
  the set of invariants becomes smaller.
  To be specific,
  assume that one already has knowledge
  of the invariant of level $2$
  (i.e. $\langle S(X)_{0,T}, \word{12} - \word{21} \rangle$).
  If, say in a machine learning application,
  the learning algorithm can deal sufficiently well
  with nonlinearities, one should not be required to
  provide additionally the square of this number.
  In other words $|\langle S(X)_{0,T}, \word{12} - \word{21} \rangle|^2$ can also be assumed to be ``known''.
  But, by the shuffle identity (Lemma~\ref{lem:shuffleIdentity}),
  this can be written as
  \begin{align*}
    |\langle S(X)_{0,T}, \word{12} - \word{21} \rangle|^2&=\Big\langle S(X)_{0,T}, \word{12} - \word{21} \Big\rangle \cdot
    \Big\langle S(X)_{0,T}, \word{12} - \word{21} \Big\rangle
    \\&=
    \Big\langle S(X)_{0,T}, (\word{12} - \word{21}) \shuffle (\word{12} - \word{21}) \Big\rangle \\
    &=
    \Big\langle S(X)_{0,T}, 4\cdot \word{1122} - 4\cdot \word{1221} - 4\cdot \word{2112} + 4\cdot \word{2211} \Big\rangle.
  \end{align*}
  Now, given $\phi = 4\cdot \word{1122} - 4\cdot \word{1221} - 4\cdot \word{2112} + 4\cdot \word{2211}$ there is only one ``new'' independent invariant in the fourth level,
  namely $\word{1212} - \word{1221} - \word{2112} + \word{2121}$.

  A similar analysis can also be carried out for the following invariants, but we refrain from doing so, since
  it can be easily done with a computer algebra system.
\end{remark}

Level $6$ ($w=3$)
\begin{align*}
  & \word{121212} - \word{121221} - \word{122112} + \word{122121} - \word{211212} + \word{211221} + \word{212112} - \word{212121} \\
  & \word{112212} - \word{112221} - \word{122112} + \word{122121} - \word{211212} + \word{211221} + \word{221112} - \word{221121} \\
  & \word{121122} - \word{121221} - \word{122112} + \word{122211} - \word{211122} + \word{211221} + \word{212112} - \word{212211} \\
  & \word{112122} - \word{112221} - \word{122112} + \word{122211} - \word{211122} + \word{211221} + \word{221112} - \word{221211} \\
  & \word{111222} - \word{112221} - \word{121212} + \word{122211} - \word{211122} + \word{212121} + \word{221112} - \word{222111}
\end{align*}

\textbf{The case $\ds=3$}

Level $3$ ($w=1$)
\begin{align*}
  \word{123} - \word{132} - \word{213} + \word{231} + \word{312} - \word{321}
\end{align*}

Level $6$ ($w=2$)
\begin{scriptsize}
\begin{align*}
  & \word{123123} - \word{312132} + \word{312312} + \word{213132} - \word{213231} - \word{213123} + \word{321213} - \word{312321} - \word{132231} - \word{132123} \\
  &\quad - \word{321231} + \word{321132} + \word{132321} + \word{132213} + \word{231231} + \word{321321} + \word{213321} + \word{123231} + \word{231123} - \word{312213}\\
  &\quad - \word{321123} - \word{231132} + \word{213213} + \word{132132} + \word{312231} - \word{213312} - \word{231321} - \word{132312} - \word{123213} - \word{321312}\\
  &\quad + \word{312123} - \word{231213} + \word{231312} - \word{123321} + \word{123312} - \word{123132} \\
  & \word{211332} - \word{121332} + \word{212313} - \word{132231} - \word{313221} + \word{122331} - \word{211323} + \word{321132} + \word{132213} - \word{233121}\\
  &\quad + \word{323121} - \word{122313} + \word{213321} + \word{231123} - \word{312213} + \word{121323} - \word{321123} - \word{212331} - \word{231132} + \word{133221}\\
  &\quad - \word{131223} + \word{312231} + \word{233112} - \word{323112} - \word{311232} - \word{213312} + \word{313212} - \word{133212} + \word{131232} + \word{311223}\\
  &\quad - \word{232113} + \word{322113} - \word{123321} - \word{322131} + \word{123312} + \word{232131} \\
  & \word{112323} + \word{312132} + \word{332121} - \word{213132} + \word{211332} + \word{213123} - \word{321213} + \word{113232} - \word{331221} + \word{122331}\\
  &\quad - \word{211323} + \word{321231} + \word{223131} - \word{332112} - \word{132321} - \word{233121} + \word{331212} - \word{122313} - \word{123231} - \word{223113}\\
  &\quad + \word{133221} + \word{233112} - \word{311232} - \word{133212} - \word{221331} + \word{231321} + \word{132312} + \word{123213} - \word{312123} + \word{221313}\\
  &\quad + \word{311223} - \word{231312} + \word{322113} - \word{322131} - \word{113223} - \word{112332} \\
  & - \word{213132} + \word{211332} + \word{213231} - \word{133122} - \word{121332} - \word{321213} + \word{312321} + \word{212133} - \word{313221} + \word{122331}\\
  &\quad + \word{132123} + \word{131322} + \word{232311} - \word{322311} - \word{132321} - \word{311322} - \word{123231} + \word{323211} - \word{212331} + \word{133221}\\
  &\quad - \word{122133} - \word{131223} - \word{211233} + \word{233112} + \word{313122} - \word{233211} - \word{323112} + \word{321312} - \word{312123} + \word{231213}\\
  &\quad + \word{311223} + \word{121233} - \word{231312} - \word{232113} + \word{322113} + \word{123132} \\
  & \word{123123} + \word{312312} + \word{112233} + \word{211332} - \word{133122} - \word{132231} - \word{331221} + \word{122331} - \word{332112} - \word{322311}\\
  &\quad + \word{231231} + \word{321321} + \word{332211} - \word{311322} - \word{312213} + \word{331122} - \word{223113} - \word{321123} - \word{231132} + \word{213213}\\
  &\quad + \word{133221} + \word{132132} - \word{122133} - \word{211233} + \word{233112} - \word{233211} - \word{213312} - \word{221331} + \word{221133} + \word{311223}\\
  &\quad + \word{223311} + \word{322113} - \word{123321} - \word{113223} + \word{113322} - \word{112332}
\end{align*}
\end{scriptsize}
\textbf{The case $\ds=4$}

Level $4$ ($w=1$)
\begin{align*}
  &\word{1234} - \word{1243} - \word{1324} + \word{1342} + \word{1423} - \word{1432} - \word{2134} + \word{2143}\\&\qquad + \word{2314} - \word{2341} - \word{2413} + \word{2431} + \word{3124}  
   - \word{3142} - \word{3214} + \word{3241} \\&\qquad+ \word{3412} - \word{3421} - \word{4123} + \word{4132} + \word{4213} - \word{4231} - \word{4312} + \word{4321}
\end{align*}

\subsection{The invariant of weight one, in dimension two}
\label{sec:d2w1}

\textbf{Geometric interpretation}\\
The invariant for $\ds=2,w=1$, namely $\phi = x_1 x_2 - x_2 x_1$
has a simple geometric interpretation: it picks out (two times)%
\footnote
{
  The prefactor $1/2$ is irrelevant, so we will speak of $\phi$ and also of $\tfrac12 \phi$
  as picking out the area.
}
the area (signed, and with multiplicity)
between the curve $X$ and the cord spanned between its starting and endpoint (compare Figure~\ref{fig:area}).
\begin{figure}[h]
  \centering
  \begin{tikzpicture}
\begin{scope}[decoration={
	markings,
	mark=at position 0.3 with {\arrow{Latex[length=4mm]};}}]
\filldraw [color=blue!11] plot [smooth, tension=0.8]
coordinates{(2,0)(0,2)(-3,2)(-7,-1)(-8,1)};
\draw[thick, postaction={decorate}] plot [smooth, tension=0.8] coordinates{(2,0)(0,2)(-3,2)(-7,-1)(-8,1)};
\draw [thick,dashed] (2,0) -- (-8,1);
\node at (-1.5,1.4) {\large $+$};
\node at (-7,0.1) {\large $-$};
\node at (-1,2.7) {\large $X$};
\end{scope}
\end{tikzpicture}
  \caption{
    A curve $X = (X^1, X^2)$ is shown,
    with shaded area given by
    $\frac{1}{2} \langle S(X)_{0,T}, x_1 x_2 - x_2 x_1 \rangle
    =
    \frac{1}{2} \int_0^T \int_0^{r_2} dX^1_{r_1} dX^2_{r_2} - 
    \frac{1}{2} \int_0^T \int_0^{r_2} dX^2_{r_1} dX^1_{r_2}$.
    }
    \label{fig:area}
\end{figure}
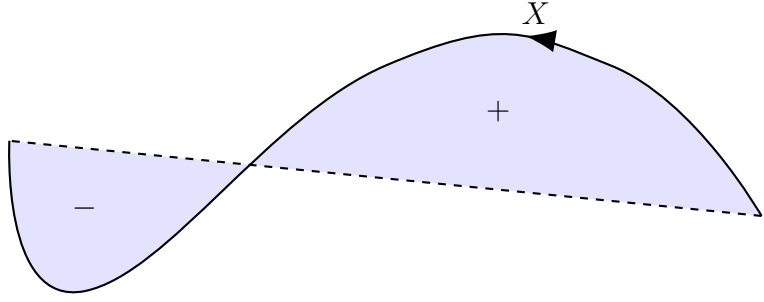
For (smooth) non-intersecting curves, this follows from Green's theorem \cite[Theorem 10.33]{bib:Rud1964}.
For self-intersecting curves, the mathematically most convenient definition
of ``signed area'' is the integral (in the plane) of its winding number.
The claimed relation to the invariant $\phi$ is for example proven in 
\cite[Proposition 1]{bib:LY2006}.

\textbf{Connection to correlation}\\
Assume that $X$ is a continuous curve,
piecewise linear between some time points $t_i,\ i=0,\dots,n$.\footnote
{
  The standard example is a curve that is discretely observed at times $t_i$
  and linearly interpolated in between.
}
The area is then explicitly calculated as
\begin{align*}
  &\int_0^T \int_0^r d X^1_u dX^2_r - \int_0^T \int_0^r dX^2_u dX^1_r \\
  &\qquad=
  \int_0^T \left( X^1_r - X^1_0 \right) dX^2_r - \int_0^T \left( X^2_r - X^2_0 \right) dX^1_r \\
  &\qquad=
  \frac12 \sum_{i=0}^{n-1} \left( X^1_{t_{i+1}} - X^1_{t_0} + X^1_{t_i} - X^1_{t_0} \right) \left(X^2_{t_{i+1}} - X^2_{t_i} \right)
  \\&\qquad\qquad-
  \frac12 \sum_{i=0}^{n-1} \left( X^2_{t_{i+1}} - X^2_{t_0} + X^2_{t_i} - X^2_{t_0} \right) \left(X^1_{t_{i+1}} - X^1_{t_i} \right) \\
  &\qquad=
  \sum_{i=0}^{n-1} X^1_{t_i} \left[ X^2_{t_{i+1}} - X^2_{t_0} \right] - \sum_{i=0}^{n-1} \left[ X^1_{t_{i+1}} - X^1_{t_0} \right] X^2_{t_i}
  \\
  &\qquad=
  \operatorname{Corr}(X^2-X^2_{t_0},X^1)_1 - \operatorname{Corr}(X^1-X^1_{t_0},X^2)_1
\end{align*}
Here, for two vectors $a,b$ of length $n$
\begin{align*}
  \operatorname{Corr}(a,b)_1 := \sum_{i=0}^{n-1} a_{i+1} b_i,
\end{align*}
the lag-one cross-correlation, which
is a commonly used feature in data analysis, see for example \cite[Chapter 13.2]{bib:PTVF2007}.
%
%
In particular, if the curve starts at $0$, we have
\begin{align*}
  \int_0^T \int_0^r d X^1_u dX^2_r - \int_0^T \int_0^r dX^2_u dX^1_r
  =
  \operatorname{Corr}(X^2,X^1)_1 - \operatorname{Corr}(X^1,X^2)_1,
\end{align*}
which is an antisymmetrized version of the lag-one cross-correlation.

\begin{remark}
  Note that it is immediate that the antisymmetrized version of the lag $\tau$ cross-correlation,
  $\tau \ge 2$ are also $GL(\R^2)$ invariants of the curve.
  Where they can be found in the signature $S(X)$ is unknown to us.
\end{remark}



\subsection{The invariant of weight one, in any dimension}
\label{sec:dXw1}

Whatever the dimension $\ds$ of the curve's ambient space, the space of invariants of weight $1$ has dimension $1$ and is spanned by
\newcommand{\firstInvariant}[1]{\operatorname{Inv}_{#1}}
\begin{align}
  \label{eq:firstInvariant}
  \firstInvariant{\ds} :=
  \firstInvariant{\ds}(x_1,..,x_\ds) :=
  \sum_{\sigma \in S_\ds}
  \sign(\sigma)\
  x_{\sigma(1)} .. x_{\sigma(d)}
  =
  \det
  \begin{pmatrix}
    x_1 & .. & x_\ds \\
    .. & ..  & .. \\
    x_1 & .. & x_\ds
  \end{pmatrix}.
\end{align}
Here, for a matrix $C$ of non-commuting variables,
\begin{align*}
  \det C := \sum_\tau \sign \tau \prod_i C_{i \tau(i)}.
\end{align*}

This invariant is of homogeneity $\ds$.
The following lemma tells us that 
we can write
$\firstInvariant{\ds}$ in terms of expressions on lower homogeneities.
%
\newcommand{\insertAfter}{\mathsf{InsertAfter}}
To state it, we first define the operation $\insertAfter(x_i,r)$ 
on monomials 
of order $n \ge r$, 
as the insertion of the variable $x_i$ after position $r$, and extend it linearly.
For example
\begin{align*}
  \insertAfter(x_1,1) \firstInvariant{2}(x_2,x_3)
  &=
  \insertAfter(x_1,1) \Big( x_2 x_3 - x_3 x_2 \Big) \\
  &=
  x_2 x_1 x_3 - x_3 x_1 x_2.
\end{align*}

\begin{lemma}
  \label{lem:dxw1}

  In any dimension $\ds$
  and for any $r=0,1,..,\ds-1$
  \begin{align*}
    \firstInvariant{\ds}(x_1,..,x_\ds)
    =
    (-1)^{r}
    \sum_{j=1}^{\ds}
    (-1)^{j+1} \insertAfter(x_j,r) \firstInvariant{\ds-1}( x_1, .., \widehat{x_j} .., x_\ds ),
  \end{align*}
  where $\widehat{x_j}$ denotes the omission of that argument.

  For $\ds$ odd,
  \begin{align*}
    \firstInvariant{\ds}(x_1,..,x_\ds)
    &=
    \sum_{j=1}^\ds
    (-1)^{j+1}
    x_j \shuffle \firstInvariant{\ds-1}( x_1, .., \widehat{x_j} .., x_\ds ).
  \end{align*}
\end{lemma}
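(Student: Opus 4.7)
The plan is to prove the first identity by a ``cofactor-style'' expansion of the determinant $\firstInvariant{\ds}$ along position $r+1$, and then to derive the second identity from the first by summing over $r$ and invoking the expansion of the shuffle product.

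For the first identity, I start from
\[
  \firstInvariant{\ds}(x_1,\ldots,x_\ds) = \sum_{\sigma \in S_\ds} \sign(\sigma)\, x_{\sigma(1)} \cdots x_{\sigma(\ds)}
\]
and partition $S_\ds$ according to the value $j := \sigma(r+1)$. For fixed $j$, deleting position $r+1$ yields a bijection $\rho:\{1,\ldots,\ds-1\}\to\{1,\ldots,\ds\}\setminus\{j\}$, which in turn corresponds (via the order-preserving relabeling of $\{1,\ldots,\ds\}\setminus\{j\}$ onto $\{1,\ldots,\ds-1\}$) to a permutation $\rho'\in S_{\ds-1}$. The word recorded by $\sigma$ is then exactly $\insertAfter(x_j,r)\bigl[x_{\rho(1)} \cdots x_{\rho(\ds-1)}\bigr]$. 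Summing over $\rho'$ with signs produces $\firstInvariant{\ds-1}(x_1,\ldots,\widehat{x_j},\ldots,x_\ds)$, so what remains is the sign computation.

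The sign computation is the main technical obstacle, and I would handle it by comparing $\sigma$ to a canonical reference permutation $\sigma_0$ obtained by taking $\rho$ order-preserving. A direct inversion count for $\sigma_0$ (splitting the cases $r\le j-1$ and $r\ge j-1$) yields $|r+1-j|$ inversions, hence $\sign(\sigma_0) = (-1)^{r+1-j} = (-1)^{r+1+j}$. Since $\sigma$ is obtained from $\sigma_0$ by the relabeled permutation $\rho'$, we get $\sign(\sigma) = (-1)^{r+1+j}\sign(\rho')$. Plugging this in and factoring $(-1)^r$ out of $(-1)^{r+1+j} = (-1)^r(-1)^{j+1}$ produces exactly the first stated identity.

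For the second identity (odd $\ds$ case), I use the key observation that for any word $w$ of length $\ds-1$,
\[
  x_j \shuffle w = \sum_{r=0}^{\ds-1} \insertAfter(x_j,r)\, w,
\]
since the shuffle enumerates every interleaving, i.e.\ every insertion position. Summing the already-proved first identity over $r = 0, 1, \ldots, \ds-1$ gives
\[
  \Bigl(\sum_{r=0}^{\ds-1} (-1)^r\Bigr)\firstInvariant{\ds}(x_1,\ldots,x_\ds) = \sum_{j=1}^{\ds} (-1)^{j+1}\, x_j \shuffle \firstInvariant{\ds-1}(x_1,\ldots,\widehat{x_j},\ldots,x_\ds).
\]
When $\ds$ is odd, the alternating sum on the left equals $1$, yielding the claimed formula. (When $\ds$ is even the left side vanishes, which is consistent with the fact that a weight-one invariant cannot be produced as a shuffle of lower-homogeneity pieces in that way.) The two-step structure thus isolates the nontrivial content in the sign bookkeeping of step one, while step two is an easy parity argument.
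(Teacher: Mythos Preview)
Your proof is correct and follows essentially the same strategy as the paper: the first identity is the cofactor expansion of the noncommutative determinant along row $r+1$, and the second follows by summing the first over all $r$ and recognising $\sum_{r=0}^{\ds-1}\insertAfter(x_j,r)\,w = x_j\shuffle w$. Your execution of the second step---multiplying the first identity by $(-1)^r$ before summing so that the alternating factor lands on the left-hand side as $\sum_{r=0}^{\ds-1}(-1)^r$---is in fact cleaner and works uniformly for all odd $\ds$, whereas the paper only spells out the case $\ds=3$ and leaves the general case implicit.
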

\begin{remark}
  For completeness, we also note the related \emph{de Bruijn's formula}.
  For $\ds$ even,
  \begin{align*}
    \firstInvariant{\ds}(x_1,..,x_\ds)
    =
    \operatorname{Pf}_\shuffle[ A ],
  \end{align*}
  where
  \begin{align*}
    A_{ij}
    = 
    \firstInvariant{2}(x_i,x_j),
  \end{align*}
  and the Pfaffian (with respect to the shuffle product), is
  \begin{align*}
    \operatorname{Pf}_\shuffle[ A ]
    =
    \frac{1}{2^{\ds/2} (\ds/2)!}
    \sum_{\sigma \in S_{\ds}} \sign(\sigma)
    A_{\sigma(1),\sigma(2)}
    \shuffle
    A_{\sigma(3),\sigma(4)}
    \shuffle
    ..
    \shuffle
    A_{\sigma(\ds-1),\sigma(\ds)}.
  \end{align*}

  For a proof see \cite{bib:DB1955}
  and \cite{bib:LT2002}.
\end{remark}

\begin{proof}
  The first statement follows from expressing the determinant in~\ref{eq:firstInvariant}
  in terms of minors with respect to the row $r+1$
  (since the $x_i$ are non-commuting, this does not work with columns!).

  We demonstrate the proof for the second statement on the case $\ds=3$.
  Applying the first statement, we get
  \begin{align*}
    \firstInvariant{3}
    &=
    x_1 (x_2 x_3 - x_3 x_2) 
    -
    x_2 (x_1 x_3 - x_3 x_1) 
    +
    x_3 (x_1 x_2 - x_2 x_1) \\
    &=
    -
    \left(
    x_2 x_1 x_3 - x_3 x_1 x_2
    -
    (x_1 x_2 x_3 - x_3 x_2 x_1)
    +
    x_1 x_3 x_2 - x_2 x_3 x_1 \right) \\
    &=
    (x_2 x_3 - x_3 x_2) x_1
    -
    (x_1 x_3 - x_3 x_1) x_2 
    +
    (x_1 x_2 - x_2 x_1) x_3.
  \end{align*}

  Summing up and adding a $0$, we get
  \begin{align*}
    3 \firstInvariant{3}
    &=
    \left[
    x_1 (x_2 x_3 - x_3 x_2) 
    -
    x_2 (x_1 x_3 - x_3 x_1) 
    +
    x_3 (x_1 x_2 - x_2 x_1) \right] \\
    &\quad
    + 
    \left[
    x_2 x_1 x_3 - x_3 x_1 x_2
    -
    (x_1 x_2 x_3 - x_3 x_2 x_1)
    +
    x_1 x_3 x_2 - x_2 x_3 x_1 \right] \\
    &\quad
    +
    \left[
    (x_2 x_3 - x_3 x_2) x_1
    -
    (x_1 x_3 - x_3 x_1) x_2 
    +
    (x_1 x_2 - x_2 x_1) x_3 \right]\\
    &\quad-
    2 
    \left[
    x_2 x_1 x_3 - x_3 x_1 x_2
    -
    (x_1 x_2 x_3 - x_3 x_2 x_1)
    +
    x_1 x_3 x_2 - x_2 x_3 x_1 \right] \\
    &=
    x_1 \shuffle (x_2 x_3 - x_3 x_2)
    -
    x_2 \shuffle (x_1 x_3 - x_3 x_1)
    +
    x_3 \shuffle (x_1 x_2 - x_2 x_1)
    -
    2 \firstInvariant{3},
  \end{align*}
  and the result follows.
\end{proof}

An immediate consequence is the following lemma.
\begin{lemma}
  \label{lem:closedCurveOddDimension}
  If the ambient dimension $\ds$ is odd and the curve $X$ is closed (i.e. $X_T = X_0$) then
  \begin{align*}
    \Big\langle S(X)_{0,T}, \firstInvariant\ds \Big\rangle = 0.
  \end{align*}
\end{lemma}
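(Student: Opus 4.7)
The plan is to use the odd-dimensional decomposition of $\firstInvariant{\ds}$ provided by the second part of Lemma~\ref{lem:dxw1} together with the Shuffle identity (Lemma~\ref{lem:shuffleIdentity}), and to observe that closedness of the curve kills all the resulting first-level factors.

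First, I would invoke Lemma~\ref{lem:dxw1} in the odd case to write
\begin{align*}
  \firstInvariant{\ds}(x_1,\dots,x_\ds)
  = \sum_{j=1}^{\ds} (-1)^{j+1}\, x_j \shuffle \firstInvariant{\ds-1}(x_1,\dots,\widehat{x_j},\dots,x_\ds).
\end{align*}
Pairing both sides with $S(X)_{0,T}$ and applying linearity gives a sum of terms of the form $\langle S(X)_{0,T},\, x_j \shuffle \firstInvariant{\ds-1}(\dots)\rangle$. By the Shuffle identity each such term factors as
\begin{align*}
  \Big\langle S(X)_{0,T},\, x_j \shuffle \firstInvariant{\ds-1}(\dots) \Big\rangle
  = \Big\langle S(X)_{0,T},\, x_j \Big\rangle \cdot \Big\langle S(X)_{0,T},\, \firstInvariant{\ds-1}(\dots) \Big\rangle.
\end{align*}

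Then I would observe that $\langle S(X)_{0,T}, x_j\rangle = \int_0^T dX^j_r = X^j_T - X^j_0$, which vanishes for each $j$ by the closedness assumption $X_T = X_0$. Hence every summand is zero, and the conclusion follows.

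There is no real obstacle here; the whole argument rests on Lemma~\ref{lem:dxw1}, which has already been established, and on the trivial identification of first-level iterated integrals with coordinate increments. The only thing worth double-checking is that the odd-dimensional identity is used (in even dimensions the analogous shuffle decomposition fails, as the de~Bruijn Pfaffian remark indicates), which is precisely why the hypothesis $\ds$ odd is essential for this one-line proof.
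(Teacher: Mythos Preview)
Your proposal is correct and follows exactly the same approach as the paper: invoke the odd-dimensional shuffle decomposition from Lemma~\ref{lem:dxw1}, apply the Shuffle identity to factor each term, and use that $\langle S(X)_{0,T}, x_j\rangle = X^j_T - X^j_0 = 0$ for a closed curve. Your closing remark on why the hypothesis $\ds$ odd is needed is a nice addition but otherwise there is nothing to distinguish the two arguments.
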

\begin{proof}
  By Lemma~\ref{lem:dxw1} and then by the shuffle identity (Lemma~\ref{lem:shuffleIdentity})
  \begin{align*}
    \Big\langle S(X)_{0,T}, \firstInvariant\ds \Big\rangle
    &=
    \sum_{j=1}^d
    \Big\langle S(X)_{0,T}, 
    (-1)^{j+1}
    x_j \shuffle \firstInvariant{\ds-1}( x_1, .., \widehat{x_j} .., x_\ds )  \Big\rangle \\
    &=
    \sum_{j=1}^d
    (-1)^{j+1}
    \Big\langle S(X)_{0,T}, x_j \Big\rangle
    \Big\langle S(X)_{0,T}, \firstInvariant{\ds-1}( x_1, .., \widehat{x_j} .., x_\ds )  \Big\rangle \\
    &= 0,
  \end{align*}
  since the increment $\Big\langle S(X)_{0,T}, x_j \Big\rangle = X^j_T - X^j_0$ is zero for all $j$.
\end{proof}

In even dimension we have the phenomenon
that closing a curve does not change the value of the invariant.
\begin{lemma}
  \label{lem:lastPoint}
  If the ambient dimension $\ds$ is even, then for any curve $X$
  \begin{align*}
    \Big\langle S(X)_{0,T}, \firstInvariant\ds \Big\rangle
    =
    \Big\langle S(\bar X)_{0,T}, \firstInvariant\ds \Big\rangle,
  \end{align*}
  where $\bar X$ is $X$ concatenated with the straight line connecting $X_T$ to $X_0$.
\end{lemma}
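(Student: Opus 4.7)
The plan is to reduce the statement to evaluating $\firstInvariant{d}$ on a product of signatures, and then to exploit the very low dimensionality of a straight line together with the Laplace-style identity already established in Lemma~\ref{lem:dxw1}. By Chen's relation (Lemma~\ref{lem:chen}), if $L$ denotes the straight line from $X_T$ to $X_0$, with direction vector $v := X_0 - X_T$, then $S(\bar X) = S(X)_{0,T} \cdot S(L)$. Since $L$ has constant velocity $v$, its signature is $\exp(v_1 x_1 + \dots + v_d x_d)$, so
\begin{align*}
\Big\langle S(L), x_{j_1} \cdots x_{j_m} \Big\rangle = \frac{1}{m!}\, v_{j_1} \cdots v_{j_m}.
\end{align*}
Moreover, since $v_j = -\langle S(X)_{0,T}, x_j\rangle$, we can later translate all $v$-factors into pairings of $S(X)$ with single letters.

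I would then expand $\langle S(X) \cdot S(L), \firstInvariant{d}\rangle$ via the deconcatenation rule
$\langle a \cdot b, x_{i_1}\cdots x_{i_d}\rangle = \sum_{k=0}^d \langle a, x_{i_1}\cdots x_{i_k}\rangle \langle b, x_{i_{k+1}}\cdots x_{i_d}\rangle$
and reorganize the sum over $\sigma \in S_d$ according to the subset $A = \{\sigma(1),\dots,\sigma(k)\}$. A standard sign calculation (the sign of a shuffle) yields
\begin{align*}
\Big\langle S(X) \cdot S(L), \firstInvariant{d}\Big\rangle
= \sum_{k=0}^d \sum_{\substack{A \subset \{1,\dots,d\} \\ |A|=k}} \varepsilon(A)\, \Big\langle S(X), \firstInvariant{k}(x_A)\Big\rangle \Big\langle S(L), \firstInvariant{d-k}(x_{A^c})\Big\rangle .
\end{align*}

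The next step is to show that almost every term vanishes. For $0 \le k \le d-2$, the factor $\langle S(L), \firstInvariant{d-k}(x_{A^c})\rangle$ equals $\frac{1}{(d-k)!}(\prod_{i \in A^c} v_i)\, \sum_{\tau \in S_{d-k}} \sign(\tau)$, which is zero because $d-k \ge 2$. The $k = d$ term is exactly $\langle S(X), \firstInvariant{d}\rangle$, the desired right-hand side. This isolates the entire problem to the single case $k = d-1$, where the subsets $A$ are indexed by the missing element $j$, and the shuffle sign works out to $\varepsilon(A) = (-1)^{d-j}$. Using $v_j = -\langle S(X), x_j\rangle$ and the shuffle identity (Lemma~\ref{lem:shuffleIdentity}), the $k = d-1$ contribution becomes
\begin{align*}
-\sum_{j=1}^d (-1)^{d-j} \Big\langle S(X), x_j \shuffle \firstInvariant{d-1}(x_1,\dots,\widehat{x_j},\dots,x_d)\Big\rangle ,
\end{align*}
which, for even $d$, equals $\langle S(X), \sum_{j=1}^d (-1)^{j+1} x_j \shuffle \firstInvariant{d-1}(x_1,\dots,\widehat{x_j},\dots,x_d)\rangle$.

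The main (and only genuinely delicate) step is then to show that this last polynomial vanishes for even $d$; this is where evenness enters crucially. Using that $x_j \shuffle W = \sum_{r=0}^{|W|} \insertAfter(x_j, r) W$ and the first statement of Lemma~\ref{lem:dxw1},
\begin{align*}
\sum_{j=1}^d (-1)^{j+1} x_j \shuffle \firstInvariant{d-1}(x_1,\dots,\widehat{x_j},\dots,x_d)
&= \sum_{r=0}^{d-1} \sum_{j=1}^d (-1)^{j+1} \insertAfter(x_j, r) \firstInvariant{d-1}(\dots) \\
&= \Big( \sum_{r=0}^{d-1} (-1)^r \Big)\, \firstInvariant{d} = 0
\end{align*}
whenever $d$ is even. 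Combining this with the earlier reduction gives $\langle S(\bar X), \firstInvariant{d}\rangle = \langle S(X), \firstInvariant{d}\rangle$. The obstacle is exactly the $k = d-1$ term: every other term dies for an essentially trivial "repeated column" reason, but the near-top term survives and its vanishing is genuinely an arithmetic consequence of $d$ being even, mirroring the odd/even dichotomy visible in Lemmas~\ref{lem:dxw1} and \ref{lem:closedCurveOddDimension}.
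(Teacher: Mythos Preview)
Your proof is correct and takes a genuinely different route from the paper's. The paper first uses translation and $GL$ invariance to arrange $X_0 = 0$ and $X_T$ on the $x_1$-axis, then writes $\firstInvariant{d}$ via the Laplace expansion at the last position (Lemma~\ref{lem:dxw1} with $r=d-1$), so that each summand ends in a single $x_j$; for $j\ne 1$ the corresponding running integrals are constant on the added segment because only $\bar X^1$ moves there, and for $j=1$ the integrand vanishes by Lemma~\ref{lem:closedCurveOddDimension} applied to the (closed) projection of $\bar X$ onto the $x_2,\dots,x_d$ hyperplane. Your argument instead stays coordinate-free: you apply Chen's relation directly, kill the $k\le d-2$ terms by the ``repeated-column'' symmetry of the line's exponential signature, and then dispatch the surviving $k=d-1$ term by summing the Laplace expansions of Lemma~\ref{lem:dxw1} over all insertion positions $r$, obtaining $\bigl(\sum_{r=0}^{d-1}(-1)^r\bigr)\,\firstInvariant{d}=0$ for even $d$. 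This avoids any choice of coordinates and, as a pleasant byproduct, simultaneously reproves and explains the parity dichotomy in the second statement of Lemma~\ref{lem:dxw1}: the very same identity yields $\firstInvariant{d}$ when $d$ is odd. The paper's approach is shorter because it leans on invariance and on Lemma~\ref{lem:closedCurveOddDimension}; yours is more self-contained and makes the role of evenness completely transparent.
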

\begin{proof}
  Let $\bar X$ be parametrized on $[0,2T]$ as follows: $\bar X = X$ on $[0,T]$
  and it is the linear path connecting $X_T$ to $X_0$ on $[T,2T]$.
  By translation invariance we can assume $X_0 = 0$ and by $GL(\R^\ds)$-invariance that $X_T$ lies on the $x_1$ axis.
  Then the only component of $\bar X$ that is non-constant on $[2T,T]$ is the first one, $\bar X^1$.

  By Lemma~\ref{lem:dxw1}
  \begin{align*}
    \firstInvariant{\ds}
    =
    -
    \sum_{j=1}^\ds
    (-1)^{j+1}
    \firstInvariant{\ds-1}(x_1, .., \hat x_j, .. x_d) x_j.
  \end{align*}
  Letting the summands act on $S(\bar X)_{0,t}$ we get $\pm 1$ times
  \begin{align*}
    \int_0^t \Big\langle S(\bar X)_{0,r} \firstInvariant{\ds-1}(x_1, .., \hat x_j, .. x_d) \Big\rangle d\bar X^j_r.
  \end{align*}
  For $j\not=1$ these expressions are constant on $[T,2T]$, since we arranged things so that those
  $\bar X^j$ do not move on $[T,2T]$.
  But also for $j=1$ this expression is constant on $[T,2T]$.
  Indeed, the integrand 
  \begin{align*}
    \Big\langle S(\bar X)_{0,r} \firstInvariant{\ds-1}(x_2, x_3, .., x_d) \Big\rangle,
  \end{align*}
  is zero on $[T,2T]$, since $X$, projected on the $x_2-..-x_d$ hyperplane, is a closed curve,
  and so Lemma~\ref{lem:closedCurveOddDimension} applies.
\end{proof}

\begin{lemma}
  \label{lem:invariantVolume}
  Let $X$ be the piecewise linear curve
  through $p_0,..,p_{\ds} \in \R^\ds$.
  Then
  \begin{align*}
    \Big\langle S(X)_{0,T}, \firstInvariant\ds \Big\rangle
    =
   \det\left[
     \begin{matrix}
       1 & 1 & .. & 1 \\
       p_0 & p_1 & .. & p_{\ds}
     \end{matrix}
   \right]
  \end{align*}
\end{lemma}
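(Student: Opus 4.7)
The plan is to exploit Chen's relation (Lemma~\ref{lem:chen}) to write the signature of the piecewise linear curve as a product of exponentials, and then observe that pairing with $\firstInvariant{\ds}$ is essentially the multilinear determinant functional, which kills any term with a repeated vector.

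First, for a single linear segment from $a$ to $b$ parametrized on $[0,1]$, direct computation of iterated integrals gives $S(\gamma)_{0,1}=\exp(V)$ in $\TC$, where $V:=\sum_i (b-a)_i\, x_i$ and $\exp$ is defined via its power series. Letting $v_i := p_i - p_{i-1}$ and $V_i := \sum_j v_{i,j}\, x_j$, Chen's relation then yields
\begin{align*}
S(X)_{0,T} \;=\; \exp(V_1)\cdot \exp(V_2)\cdots \exp(V_\ds).
\end{align*}

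Since $\firstInvariant{\ds}$ is homogeneous of degree $\ds$, only the degree-$\ds$ part of the product contributes when we pair. That part is
\begin{align*}
\sum_{n_1+\cdots+n_\ds=\ds}\; \frac{1}{n_1!\cdots n_\ds!}\; V_1^{n_1}\cdots V_\ds^{n_\ds}.
\end{align*}
Next I would verify the following elementary identity: for any linear combinations $Y_1,\dots,Y_\ds$ of the $x_i$, viewing each $Y_k$ as a column vector in $\R^\ds$,
\begin{align*}
\Big\langle Y_1\cdot Y_2 \cdots Y_\ds,\; \firstInvariant{\ds}\Big\rangle \;=\; \det[\,Y_1\; Y_2\;\cdots\; Y_\ds\,].
\end{align*}
This is immediate from the definition of $\firstInvariant{\ds}$ as a signed sum over $S_\ds$ and the dual pairing on monomials.

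Applying this to each term in the degree-$\ds$ part, we get a sum of determinants in which $V_i$ appears $n_i$ times. Whenever some $n_i\geq 2$ a column is repeated, so the determinant vanishes; the only surviving contribution is $n_1=\cdots=n_\ds=1$, producing
\begin{align*}
\Big\langle S(X)_{0,T},\; \firstInvariant{\ds}\Big\rangle \;=\; \det[\,V_1\; V_2\;\cdots\; V_\ds\,] \;=\; \det[\,p_1-p_0,\; p_2-p_1,\;\ldots,\; p_\ds-p_{\ds-1}\,].
\end{align*}
Finally, I would close the argument by performing column operations on the $(\ds+1)\times(\ds+1)$ matrix in the claim: subtracting each column from its right neighbour (starting from the rightmost) turns it into a block-triangular form whose determinant equals $\det[v_1,\dots,v_\ds]$. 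No step is a real obstacle; the only thing to be careful about is keeping the ordering of the non-commuting factors in $V_1\cdots V_\ds$ straight so that the $\det$ identity above is correctly applied with columns in the order $V_1,\dots,V_\ds$.
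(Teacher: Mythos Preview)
Your argument is correct and takes a genuinely different route from the paper. The paper first observes that both sides are invariant under translation and transform identically under $GL(\R^\ds)$ (weight one on the left by construction of $\firstInvariant{\ds}$, weight one on the right since the determinant is multilinear in $p_1-p_0,\dots,p_\ds-p_0$); it then reduces to the single canonical configuration $p_0=0,\ p_k=e_1+\cdots+e_k$ and checks by hand that both sides equal $1$. Your approach instead expands $S(X)=\exp(V_1)\cdots\exp(V_\ds)$ via Chen, identifies the pairing $\langle Y_1\cdots Y_\ds,\firstInvariant{\ds}\rangle$ with the column determinant, and kills all mixed terms by the alternating property. The paper's proof is shorter because it leverages invariance already established in the section; yours is more self-contained and makes explicit \emph{which} monomial in the product of exponentials actually contributes, which is a nice structural observation. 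One small remark: make sure the product order $\exp(V_1)\cdots\exp(V_\ds)$ matches the statement of Chen's relation you are citing (first segment on the left), since elsewhere in the paper the order is written the other way around; your order is the one consistent with Lemma~\ref{lem:chen} as stated, and your final determinant $\det[v_1,\dots,v_\ds]$ is correct with that convention.
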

\begin{proof}
  First, for any $v \in \R^\ds$,
  \begin{align*}
   \det\left[
     \begin{matrix}
       1 & 1 & .. & 1 \\
       p_0 + v & p_1 + v & .. & p_{\ds} + v
     \end{matrix}
   \right]
   =
   \det\left[
     \begin{matrix}
       1 & 1 & .. & 1 \\
       p_0 & p_1 & .. & p_{\ds}
     \end{matrix}
   \right].
  \end{align*}
  %
  Since the signature is also invariant to translation, we can therefore assume $p_0 = 0$.
  Now both sides of the statement transform the same way under the action of $GL(\R^d)$ on
  the points $p_1,..p_\ds$.
  It is then enough to prove this for
  \begin{align*}
    p_0 &= 0 \\
    p_1 &= e_1 \\
    p_2 &= e_1 + e_2 \\
        &.. \\
    p_{\ds} &= e_1 + .. + e_\ds.
  \end{align*}

  Now, for this particular choice of points the right hand side is clearly equal to $1$.
  For the left hand side, the only non-zero term is
  \begin{align*}
    \Big\langle S(X)_{0,T}, \word{12} .. \word{d} \Big\rangle
    &=
    \int dX^1 .. dX^d \\
    &= 1.
  \end{align*}
\end{proof}

The modulus of the determinant 
\begin{align*}
   \det\left[
     \begin{matrix}
       1 & 1 & .. & 1 \\
       0 & p_1 & .. & p_{\ds}
     \end{matrix}
   \right]
   =
   \det\left[
     \begin{matrix}
       p_1 & .. & p_{\ds}
     \end{matrix}
   \right]
\end{align*}
gives the Lebesgue measure of the parallelepiped spanned by the vectors $p_1-p_0,..,p_\ds-p_0$.
The polytope spanned by the points $p_0,p_1,..,p_\ds$ fits $\ds!$ times into that parallelepiped.
We hence have the relation to classical volume as follows.
\begin{lemma}
  \label{lem:relationToClassicalVolume}
  Let $p_0,..,p_{\ds} \in \R^\ds$,
  then
  \begin{align*}
    | \operatorname{Convex-Hull}(p_0,..,p_\ds) |
    =
    \frac{1}{\ds!}
   \left|\det\left[
     \begin{matrix}
       1 & 1 & .. & 1 \\
       p_0 & p_1 & .. & p_{\ds}
     \end{matrix}
   \right]\right|
  \end{align*}
\end{lemma}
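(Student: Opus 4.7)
The proof strategy is classical and essentially reduces to the standard volume formula for a simplex in $\R^\ds$.

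First I would exploit translation invariance: replacing each $p_i$ by $p_i - p_0$ leaves both the convex hull (up to translation, hence its Lebesgue measure) and the $(\ds+1)\times(\ds+1)$ determinant unchanged, the latter because subtracting $p_0$ times the top row from each of the remaining rows is an elementary row operation. So we may assume $p_0 = 0$. Expanding the resulting determinant along the first column (which is $(1,0,\dots,0)^\top$), we obtain
\begin{align*}
  \det\left[\begin{matrix} 1 & 1 & \dots & 1 \\ 0 & p_1 & \dots & p_\ds \end{matrix}\right]
  = \det[p_1\ \dots\ p_\ds].
\end{align*}

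Next I would identify the convex hull with the image of the standard simplex. Define the affine map $T: \R^\ds \to \R^\ds$ by $T(t_1,\dots,t_\ds) = \sum_{i=1}^\ds t_i\, p_i$, so that $T$ carries the standard simplex $\Delta_\ds := \{t \in \R^\ds : t_i \ge 0,\ \sum_i t_i \le 1\}$ bijectively onto $\operatorname{Conv}(0,p_1,\dots,p_\ds)$. The Jacobian of $T$ is the constant $\det[p_1\ \dots\ p_\ds]$, so the change-of-variables formula yields
\begin{align*}
  |\operatorname{Conv}(0,p_1,\dots,p_\ds)| = |\det[p_1\ \dots\ p_\ds]|\cdot |\Delta_\ds|.
\end{align*}
Combined with the previous step, this is exactly the claim, provided we know $|\Delta_\ds| = 1/\ds!$.

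The only remaining task, and the one place there is anything to do at all, is the evaluation $|\Delta_\ds| = 1/\ds!$. I would handle this by the standard induction on $\ds$, slicing $\Delta_\ds$ by the hyperplanes $t_\ds = s$ for $s \in [0,1]$: each slice is a scaled copy of $\Delta_{\ds-1}$ with edge length $1-s$, giving $|\Delta_\ds| = \int_0^1 (1-s)^{\ds-1}|\Delta_{\ds-1}|\,ds = |\Delta_{\ds-1}|/\ds$, with base case $|\Delta_1| = 1$. (Equivalently one can observe that the unit cube $[0,1]^\ds$ is partitioned, up to a set of measure zero, into $\ds!$ congruent simplices indexed by the orderings of the coordinates.) In the degenerate case where $p_1,\dots,p_\ds$ are linearly dependent, both sides of the claimed identity vanish, so this causes no difficulty and there is no substantive obstacle in the argument.
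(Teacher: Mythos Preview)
Your argument is correct and follows essentially the same classical route as the paper. The paper's justification (given in the paragraph immediately preceding the lemma rather than as a formal proof) reduces to $p_0=0$, identifies $|\det[p_1\ \dots\ p_\ds]|$ as the volume of the parallelepiped spanned by $p_1,\dots,p_\ds$, and then asserts that the simplex fits $\ds!$ times into that parallelepiped; your change-of-variables computation together with $|\Delta_\ds|=1/\ds!$ is simply a more explicit rendering of the same idea (and indeed your parenthetical remark about partitioning the cube into $\ds!$ congruent simplices is exactly the paper's argument transported by the linear map $T$).
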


\begin{lemma}
  \label{lem:generalPiecewiseLinearCurve}
  Let $X$ be the piecewise linear curve through,
  $p_0,..,p_n \in \R^\ds$, with $n \ge \ds$.
  Then,
  \begin{align}
    \label{eq:invDet}
    \Big\langle S(X)_{0,T}, \firstInvariant\ds \Big\rangle
    =
    \sum_i
     \det\left[
       \begin{matrix}
         1 & 1 & .. & 1 \\
         p_{i_0} & p_{i_1} & .. & p_{i_{\ds}}
       \end{matrix}
     \right].
  \end{align}
  Here, for $\ds$ even, the subsequences $i$ are chosen as follows:
  \begin{align*}
    i_0 = 0
  \end{align*}
  and $i_1,..,i_{\ds}$ ranges over all possible increasing
  subsequences of $1,2,..,n$ such that
  for $\ell$ odd: $i_\ell + 1 = i_{\ell+1}$

  For $\ds$ odd, they are chosen as follows:
  \begin{align*}
    i_0 &= 0 \\
    i_{\ds} &= n,
  \end{align*}
  and $i_1,..,i_{\ds-1}$ ranges over all possible increasing
  subsequences of $1,2,..,n-1$ such that
  for $\ell$ odd: $i_\ell + 1 = i_{\ell+1}$
\end{lemma}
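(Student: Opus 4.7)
The approach is to evaluate both sides of \eqref{eq:invDet} independently as signed sums of determinants built from the increments $v_i := p_i - p_{i-1}$, and then exhibit a bijection between the two index sets.

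For the left-hand side, Chen's relation (Lemma~\ref{lem:chen}) together with the explicit formula $S(\text{line segment with increment }v) = \exp(\sum_j v^j x_j)$ gives
\[
S(X)_{0,T} \;=\; \exp\!\Bigl(\sum_j v_1^j x_j\Bigr) \cdots \exp\!\Bigl(\sum_j v_n^j x_j\Bigr).
\]
Projecting onto the homogeneous component of degree $d$ and pairing with $\firstInvariant{\ds} = \sum_\sigma \sign(\sigma)\, x_{\sigma(1)} \cdots x_{\sigma(d)}$, the inner sum over $\sigma \in S_d$ assembles into a determinant $\det[v_{r_1},\dots,v_{r_d}]$ whose column pattern $(r_1,\dots,r_d)$ records the multiplicities $k_i$ in the multinomial expansion. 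Antisymmetry kills every term in which some $v_i$ is repeated, so the surviving multi-indices are precisely the squarefree ones (whence $k_1!\cdots k_n! = 1$), and we obtain
\[
\Big\langle S(X)_{0,T}, \firstInvariant{\ds}\Big\rangle \;=\; \sum_{1 \le j_1 < \cdots < j_d \le n} \det\bigl[v_{j_1},\, v_{j_2},\, \dots,\, v_{j_d}\bigr].
\]

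For the right-hand side, column operations $c_\ell \mapsto c_\ell - c_{\ell-1}$ (applied right to left) reduce the $(d{+}1)\times(d{+}1)$ matrix with top row of ones to
\[
\det\bigl[p_{i_1} - p_{i_0},\; p_{i_2} - p_{i_1},\; \dots,\; p_{i_d} - p_{i_{d-1}}\bigr].
\]
For $d$ even, the adjacency constraint $i_{\ell+1} = i_\ell + 1$ (for $\ell$ odd) collapses each "paired" difference into a single vector $v_{i_{\ell+1}}$, while the remaining differences telescope into $\sum_{j = i_\ell+1}^{i_{\ell+1}} v_j$; crucially, these summation ranges do \emph{not} overlap the paired values. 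Multilinearity of the determinant then distributes each summand as $\sum \det[v_{a_1}, v_{i_2}, v_{a_2}, v_{i_4}, \dots, v_{a_{d/2}}, v_{i_d}]$ with the free indices $a_m$ running through pairwise disjoint intervals. The $d$ odd case is identical except that an additional telescoping range $p_n - p_{i_{d-1}}$ appears at the right end, reflecting the constraint $i_d = n$.

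The bijection is then routine: relabelling the free and paired indices as $(j_1,\dots,j_d)$ in the order they occur in the determinant, the disjointness of ranges enforces $1 \le j_1 < j_2 < \cdots < j_d \le n$, and conversely every such strictly increasing tuple arises from a unique valid $(i_0,\dots,i_d)$ by setting $i_{2m} = j_{2m}$, $i_{2m-1} = j_{2m}-1$, and $a_m = j_{2m-1}$ (with the analogous convention at the right end for $d$ odd). Matching term by term with the left-hand side completes the proof. The main obstacle is the combinatorial bookkeeping in the bijection and the correct handling of parity, but the argument relies on nothing beyond Chen's relation, multilinearity of determinants, and the antisymmetry of $\firstInvariant{\ds}$.
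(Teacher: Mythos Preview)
Your proof is correct and takes a genuinely different route from the paper. The paper argues by induction: first on the dimension $d$ (with base case $d=2$, handled by decomposing $X$ into closed triangular loops $p_0 \to p_i \to p_{i+1} \to p_0$ and applying Chen's relation together with Lemma~\ref{lem:lastPoint}), and then, within the even-$d$ case, by induction on the number $n$ of points. The inductive steps lean on the recursive structure of $\firstInvariant{\ds}$ from Lemma~\ref{lem:dxw1} and on the vanishing result for closed curves in odd dimension (Lemma~\ref{lem:closedCurveOddDimension}), reducing the computation in dimension $d$ to one in dimension $d-1$ after a convenient choice of coordinates.

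Your argument bypasses all of this machinery: you expand both sides directly as sums of $d\times d$ determinants in the increments $v_i = p_i - p_{i-1}$ and match them via an explicit bijection of index sets. This is more elementary and entirely self-contained---it does not require Lemmas~\ref{lem:dxw1}, \ref{lem:closedCurveOddDimension}, \ref{lem:lastPoint}, or \ref{lem:invariantVolume}. What the paper's approach buys instead is insight into \emph{why} the parity of $d$ matters (it is inherited from the behaviour of $\firstInvariant{d-1}$ on closed projections) and a natural connection to the geometric content of the preceding lemmas; your approach treats the parity split as a purely combinatorial artefact of the bijection, which is cleaner but less illuminating.
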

\begin{remark}
  In both the odd and the even case,
  there are
  \begin{align*}
    \binom{\lfloor d/2 \rfloor + n - \ds - 1 }{ n - d -1 }
  \end{align*}
  indices summed over.
\end{remark}
\begin{example}
  For $\ds=2$, $n=5$ we get the subsequences
  \begin{align*}
  [0, 1, 2] \\
  [0, 2, 3] \\
  [0, 3, 4]
  \end{align*}

  For $\ds=4$, $n=7$ we get the subsequences
  \begin{align*}
  [0, 1, 2, 3, 4] \\
  [0, 1, 2, 4, 5] \\
  [0, 1, 2, 5, 6] \\
  [0, 2, 3, 4, 5] \\
  [0, 2, 3, 5, 6] \\
  [0, 3, 4, 5, 6]
  \end{align*}

  For $\ds=5$, $n=8$ we get the subsequences
  \begin{align*}
  [0, 1, 2, 3, 4, 7] \\
  [0, 1, 2, 4, 5, 7] \\
  [0, 1, 2, 5, 6, 7] \\
  [0, 2, 3, 4, 5, 7] \\
  [0, 2, 3, 5, 6, 7] \\
  [0, 3, 4, 5, 6, 7]
  \end{align*}
\end{example}
\begin{proof}
  \textbf{The case $\ds=2$}\\
  Let $X$ be the curve through the points $p_0,p_1,..,p_n$,
  we can write it as concatenation
  of the curves $X^{(i)}$ where $X^{(i)}$ is the curve through
  the points $p_0 p_i, p_{i+1}, p_0$.
  The interval of definition for these curves (and all curves
  in this proof) do not matter, so we omit the subscript of $S(.)$.
  Then, by Chen's lemma (Lemma \ref{lem:chen})
  \begin{align*}
    \Big\langle S(X), \word{12} - \word{21} \Big\rangle
    &=
    \Big\langle S(X^{(n-1)}) \cdot .. \cdot S(X^{(1)}), \word{12} - \word{21} \Big\rangle \\
    &=
    \sum_{i=1}^{n-1} \Big\langle S(X^{(i)}), \word{12} - \word{21} \Big\rangle.
  \end{align*}
  For the last equality we used that
  \begin{align*}
    \Big\langle g h, \word{12} - \word{21}\Big\rangle
    =
    \Big\langle g, \word{12} - \word{21}\Big\rangle
    +
    \Big\langle h, \word{12} - \word{21}\Big\rangle
    +
    \Big\langle g, \word{1} \Big\rangle \Big\langle h, \word{2} \Big\rangle
    -
    \Big\langle g, \word{2} \Big\rangle \Big\langle h, \word{1} \Big\rangle,
  \end{align*}
  and that the increments of all curves $X^{(i)}$ are zero.
  Now by Lemma~\ref{lem:lastPoint}
  we can omit the last straight line in every $X^{(i)}$ and hence
  by Lemma~\ref{lem:invariantVolume}
  \begin{align*}
    \langle S(X^{(i)}), \word{12} - \word{21} \rangle
    =
     \det\left[
       \begin{matrix}
         1       & 1   & 1 \\
         p_{0} & p_{i} & p_{i+1}
       \end{matrix}
     \right],
  \end{align*}
  which finishes the proof for $\ds=2$.

  ~\\

  Now assume the statement is true for all dimensions strictly smaller than some $\ds$. We show it is true for $\ds$.

  \textbf{$\ds$ is odd }\\
  As before we can assume $p_0 = 0$ and that $p_n$ lies on the $x_1$ axis.
  Every sequence summed over on the right-hand side of \eqref{eq:invDet}
  is of the form $i = (0,...,n)$.
  For each of those, we calculate
  \begin{align*}
     \det\left[
       \begin{matrix}
         1       & 1       & .. & 1             & 1 \\
         p_{i_0} & p_{i_1} & .. & p_{i_{\ds-1}} & p_{i_\ds}
       \end{matrix}
     \right]
     =
     \det\left[
       \begin{matrix}
         1 & 1       & .. & 1             & 1 \\
         0 & p_{i_1} & .. & p_{i_{\ds-1}} & \Delta \cdot e_1
       \end{matrix}
     \right]
     =
     \Delta \cdot
     \det\left[
       \begin{matrix}
         1 & 1            & .. & 1 \\
         0 & \bar p_{i_1} & .. & \bar p_{i_{\ds-1}}
       \end{matrix}
     \right].
  \end{align*}
  Here $\bar p_j \in \R^{\ds-1}$ is obtained by deleting the first coordinate of $p_j$,
  $e_1$ is the first canonical coordinate vector in $\R^\ds$
  and $\Delta := (p_0 - p_n)_1 = \langle S(X), x_1 \rangle$ is the total increment of $X$ in the $x_1$ direction.
  Here we used that $\ds$ is odd (otherwise we would get a prefactor $-1$).

  This is the expression for the summands of the right-hand side of \eqref{eq:invDet},
  with dimension $\ds -1$ and points $0 = \bar p_0, \bar p_1, .., \bar p_{n-1}$.
  %
  By assumption, summing up all these determinants gives
  \begin{align*}
    \Delta \cdot \Big\langle S(\bar X), \firstInvariant{\ds-1} \Big\rangle
    =
    \Big\langle S(X), x_1 \Big\rangle \Big\langle S(\bar X), \firstInvariant{\ds-1} \Big\rangle,
  \end{align*}
  where $\bar X$ is the curve in $\R^{\ds-1}$ through the points $\bar p_0, .. \bar p_{n-1}$.
  Since $\bar p_n = \bar p_0 = 0$,
  we can attach the additional point $\bar p_n$ to $\bar X$ without changing the value here (Lemma~\ref{lem:lastPoint}).
  Hence the sum of determinants is equal to
  \begin{align*}
    \Big\langle S(X), x_1 \Big\rangle \Big\langle S(X), \firstInvariant{\ds-1}(x_2,..,x_\ds) \Big\rangle,
  \end{align*}
  By Lemma~\ref{lem:dxw1} this is equal to $\langle S(X), \firstInvariant{\ds} \rangle$, which finishes the proof for odd $\ds$.

  ~\\

  \textbf{$\ds$ is even }

  We proceed by induction on $n$.
  For $n=\ds$ the statement follows from Lemma~\ref{lem:invariantVolume}.

  Let it be true for some $n$.
  Write $X = X'' \sqcup X'$ where $X'$ is the linear interpolation
  of $p_0, .., p_n$, $X''$ is the linear path from $p_n$ to $p_{n+1}$
  and we recall concatenation $\sqcup$ of paths from Lemma \ref{lem:chen}.
  Adding an additional point $p_{n+1}$,
  the sum on the right hand side of \eqref{eq:invDet}
  gets additional indices of the form
  \begin{align*}
    (p_{j_0}, .., p_{j_{d-1}}, p_{n+1}),
  \end{align*}
  where
  \begin{align*}
    j_0 &= 0 \\
    j_{\ds-1} &= n,
  \end{align*}
  and $j_1,..,j_{\ds-2}$ ranges over all possible increasing
  subsequences of $1,2,..,n-1$ such that
  for $\ell$ odd $j_\ell + 1 = j_{\ell+1}$.

  Assume $p_{n+1} - p_n = \Delta \cdot e_1$ lies on the $x_1$-axis.
  Then, summing over those $j$,
  \begin{align*}
    \sum_j
     \det\left[
       \begin{matrix}
         1 & 1 & .. & 1 & 1 & 1 \\
         0 & p_{j_1} & .. & p_{j_{\ds-2}} & p_{n} & p_{n+1}
       \end{matrix}
     \right]
    &=
    \sum_j
     \det\left[
       \begin{matrix}
         1             & 1             & .. & 1 & 1 & 1 \\
         - p_n & p_{j_1} - p_n & .. & p_{j_{d-2}} - p_n & 0 & p_{n+1} - p_n
       \end{matrix}
     \right] \\
    &=
    \sum_j
     \det\left[
       \begin{matrix}
         1             & 1             & .. & 1 & 1 & 1 \\
         - p_n & p_{j_1} - p_n & .. & p_{j_{d-2}} - p_n & 0 & \Delta \cdot e_1
       \end{matrix}
     \right] \\
    &=
    \Delta \cdot
    \sum_j
     \det\left[
       \begin{matrix}
         1             & 1             & .. & 1 & 1 \\
         - \bar p_n & \bar p_{j_1} - \bar p_n & .. & \bar p_{j_{d-2}} - \bar p_n & 0
       \end{matrix}
     \right] \\
    &=
    \Delta \cdot
    \sum_j
     \det\left[
       \begin{matrix}
         1             & 1             & .. & 1 & 1 \\
         0 & \bar p_{j_1} & .. & \bar p_{j_{d-2}} & \bar p_n
       \end{matrix}
     \right] \\
     &=
    \Delta
    \cdot
    \Big\langle S(X'), \firstInvariant{\ds-1}(x_2, .., x_\ds) \Big\rangle
  \end{align*}
  Here we used that the indices $j$ range over the ones used in dimension $\ds-1$
  on the points $\bar p_0, .., \bar p_n$.

  On the other hand,
  \begin{align*}
    \Big\langle S(X), \firstInvariant{\ds} \Big\rangle 
    &=
    \Big\langle S(X'') S(X'), \firstInvariant{\ds} \Big\rangle \\
    &=
    \Big\langle S(X'), \firstInvariant{\ds} \Big\rangle
    +
    \Big\langle S(X''), x_1 \Big\rangle
    \Big\langle S(X'), \firstInvariant{\ds-1}(x_2, .., x_\ds) \Big\rangle.
  \end{align*}
  Here we used that $S(X'') = \exp( \Delta \cdot x_1 ) = 1 + \Delta \cdot x_1 + O(x_1^2)$
  and that each monomial in $\firstInvariant{\ds}$ has exactly one occurrence of $x_1$.
  This finishes the proof.
\end{proof}

\begin{definition}	
	Let $X: [0,T] \to \R^\ds$ be any curve.
	Define its \textbf{signed volume} to be the following limit, if it exists,
	\begin{align*}
	  \operatorname{Signed-Volume}(X)
	  :=
    \frac{1}{\ds!}
    \lim_{|\pi| \to 0}
    \sum_i
     \det\left[
       \begin{matrix}
         1 & 1 & .. & 1 \\
         X_{t^\pi_{i_0}} & X_{t^\pi_{i_1}} & .. & X_{t^\pi_{i_\ds}}
       \end{matrix}
     \right].
	\end{align*}
  Here $\pi = (0=t^\pi_0, .., t^\pi_{n^\pi}=T)$ is a partition of the interval $[0,T]$ and $|\pi|$ denotes its mesh size.
  The indices $i$ are chosen as in Lemma~\ref{lem:generalPiecewiseLinearCurve}.
\end{definition}

\begin{theorem}
	Let $X: [0,T] \to \R^\ds$ a continuous curve of bounded variation.
	Then its signed volume exists and
	\begin{align*}
    \operatorname{Signed-Volume}(X)
    =
    \frac{1}{\ds !}
    \Big\langle S(X)_{0,T}, \firstInvariant{\ds} \Big\rangle
	\end{align*}
\end{theorem}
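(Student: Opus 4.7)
The plan is to bootstrap from the piecewise linear case already handled by Lemma~\ref{lem:generalPiecewiseLinearCurve} via a continuity/approximation argument.

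First, fix a partition $\pi = (0 = t^\pi_0, \dots, t^\pi_{n^\pi} = T)$ of $[0,T]$ with $n^\pi \ge \ds$, and let $X^\pi$ denote the piecewise linear curve obtained by linearly interpolating $X$ at the points $t^\pi_j$, so that $X^\pi$ passes through $p_j := X_{t^\pi_j}$. Applying Lemma~\ref{lem:generalPiecewiseLinearCurve} directly to $X^\pi$ gives
\begin{align*}
  \Big\langle S(X^\pi)_{0,T}, \firstInvariant{\ds} \Big\rangle
  =
  \sum_i
  \det\!\left[
    \begin{matrix}
      1 & 1 & \cdots & 1 \\
      X_{t^\pi_{i_0}} & X_{t^\pi_{i_1}} & \cdots & X_{t^\pi_{i_\ds}}
    \end{matrix}
  \right],
\end{align*}
with the index set $i$ exactly as in the definition of $\operatorname{Signed-Volume}$. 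Thus the partial sums in the definition of the signed volume are identical to $\langle S(X^\pi)_{0,T}, \firstInvariant{\ds}\rangle$, up to the normalization $1/\ds!$.

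It then suffices to prove
\begin{align*}
  \lim_{|\pi| \to 0} \Big\langle S(X^\pi)_{0,T}, \firstInvariant{\ds} \Big\rangle
  =
  \Big\langle S(X)_{0,T}, \firstInvariant{\ds} \Big\rangle.
\end{align*}
Since $\firstInvariant{\ds}$ is a finite linear combination of monomials $x_{\sigma(1)} \cdots x_{\sigma(\ds)}$, this reduces to showing that for each word $w = x_{j_1}\cdots x_{j_\ds}$ the iterated integral $\langle S(\cdot)_{0,T}, w\rangle$ is continuous under the passage $X^\pi \to X$. For a curve $X$ of bounded variation, the piecewise linear interpolations $X^\pi$ converge to $X$ uniformly on $[0,T]$ while their total variations remain bounded by that of $X$. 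This is precisely the regime in which Riemann--Stieltjes iterated integrals depend continuously on the driver: the standard continuity estimate (which is an easy induction on $\ds$ using integration by parts and uniform BV bounds; see e.g.\ \cite[Chapter 6]{bib:Rud1964} at level one and \cite[Chapter 3]{bib:FV2010} for iterated versions) delivers the required convergence.

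The only genuine point to check is this continuity of the iterated integral in the BV topology, and it is classical; everything else in the proof is just identifying the partial sums with $\langle S(X^\pi), \firstInvariant{\ds}\rangle$ via Lemma~\ref{lem:generalPiecewiseLinearCurve}. Dividing by $\ds!$ yields the statement of the theorem.
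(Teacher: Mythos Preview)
Your proof is correct and follows essentially the same approach as the paper: identify the partial sums with $\langle S(X^\pi)_{0,T}, \firstInvariant{\ds}\rangle$ via Lemma~\ref{lem:generalPiecewiseLinearCurve}, then pass to the limit using continuity of the signature under piecewise linear approximation of a bounded-variation path. The paper cites \cite[Proposition~1.28, Proposition~2.7]{bib:FV2010} for this stability, while you spell out the mechanism (uniform convergence with uniformly bounded variation) a bit more explicitly; there is no substantive difference.
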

\begin{proof}
	Fix some sequence $\{\pi^n\}_{n\in\N}$,
  of partitions of $[0,T]$ with $|\pi^n| \to 0$ and interpolate 	$X$ linearly along each $\pi^n$ to obtain a sequence of linearly interpolated curves $X^n$.
	Then by Lemma~\ref{lem:generalPiecewiseLinearCurve}
	\begin{align*}
    \operatorname{Signed-Volume}(X^n)
    =
    \frac{1}{\ds!}
    \Big\langle S(X^n)_{0,T}, \firstInvariant{\ds} \Big\rangle
	\end{align*}
	By stability of the signature in the class of continuous curves of bounded variation (\cite[Proposition 1.28, Proposition 2.7]{bib:FV2010}),
  we get convergence
	\begin{align*}
    \Big\langle S(X^n)_{0,T}, \firstInvariant{\ds} \Big\rangle
    \to
    \Big\langle S(X)_{0,T}, \firstInvariant{\ds} \Big\rangle
	\end{align*}
	and this is independent of the particular sequence $\pi^n$ chosen.
\end{proof}

The previous theorem is almost a tautology, but there are relations to classical objects in geometry.
For $\ds=2$, as we have seen in Section~\ref{sec:d2w1},
\begin{align*}
  \frac{1}{2} \Big\langle S(X)_{0,T}, \firstInvariant{2} \Big\rangle,
\end{align*}
is equal to the signed area of the curve $X$.
In general dimension, the value of the invariant is related
to some kind of classical ``volume'' if the curve satisfies
some kind of monotonicity.
This is in particular satisfied for the ``moment curve''.
\begin{lemma}
  \label{lem:momentCurve}
  Let $X$ be the moment curve
  \begin{align*}
    X_t = (t,t^2,...,t^\ds) \in \R^\ds.
  \end{align*}
  Then for any $T > 0$
  \begin{align*}
    \frac{1}{\ds !} \Big\langle S(X)_{0,T}, \firstInvariant{\ds} \Big\rangle
    =
    |\operatorname{Convex-Hull}( X_{[0,T]} )|
  \end{align*}
\end{lemma}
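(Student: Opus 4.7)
The plan is to reduce the claim to a statement about the Signed-Volume of the moment curve and then exploit positivity of the relevant determinants together with a known triangulation of cyclic polytopes.

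First I would invoke the Signed-Volume theorem proved immediately before, so that
\[
\frac{1}{\ds!}\Big\langle S(X)_{0,T},\firstInvariant{\ds}\Big\rangle
=\operatorname{Signed-Volume}(X)
=\lim_{|\pi|\to 0}\operatorname{Signed-Volume}(X^\pi),
\]
where $X^\pi$ is the piecewise linear curve through $X_{t^\pi_0},\dots,X_{t^\pi_{n^\pi}}$. By Lemma~\ref{lem:generalPiecewiseLinearCurve} (combined with Lemma~\ref{lem:relationToClassicalVolume}), $\operatorname{Signed-Volume}(X^\pi)$ is a sum of signed simplex volumes, the vertices being the points $X_{t^\pi_{i_0}},\dots,X_{t^\pi_{i_\ds}}$ indexed by the sequences described in that lemma. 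Thus it suffices to show (a) each such signed volume is non-negative (equal to the absolute volume of the simplex), and (b) these simplices tile the convex hull of the discretisation points, and (c) to pass to the limit.

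For (a), the key observation is that when the vertices lie on the moment curve at times $s_0<s_1<\dots<s_\ds$, the determinant in Lemma~\ref{lem:generalPiecewiseLinearCurve} is a Vandermonde:
\[
\det\begin{bmatrix}1 & 1 & \cdots & 1 \\ s_0 & s_1 & \cdots & s_\ds \\ s_0^2 & s_1^2 & \cdots & s_\ds^2 \\ \vdots & & & \vdots \\ s_0^\ds & s_1^\ds & \cdots & s_\ds^\ds\end{bmatrix}
=\prod_{0\le i<j\le \ds}(s_j-s_i)>0.
\]
Hence every term in the sum is strictly positive and, by Lemma~\ref{lem:relationToClassicalVolume}, equals $\ds!$ times the unsigned Lebesgue volume of the corresponding closed simplex.

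For (b), which I expect to be the main obstacle, I would appeal to the combinatorial fact that the convex hull of finitely many points on the moment curve is a \emph{cyclic polytope}, and that the sequences appearing in Lemma~\ref{lem:generalPiecewiseLinearCurve} are precisely the index sets of the maximal simplices of the classical ``staircase'' (or ``snake'') triangulation of the cyclic polytope: for even $\ds$ one anchors at $p_0$ and chooses $\ds/2$ disjoint pairs of consecutive indices, for odd $\ds$ one anchors at both $p_0$ and $p_n$ and chooses $(\ds-1)/2$ intermediate disjoint pairs. Gale's evenness condition then guarantees that the interiors are disjoint and cover the polytope. (Alternatively this triangulation can be verified inductively by pulling the vertex $p_0$ and using the odd-case Laplace expansion derived inside the proof of Lemma~\ref{lem:generalPiecewiseLinearCurve}.) Combined with (a), this gives
\[
\operatorname{Signed-Volume}(X^\pi)=\big|\operatorname{Convex-Hull}(X_{t^\pi_0},\dots,X_{t^\pi_{n^\pi}})\big|.
\]

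Finally, for (c), since the discretisation points are chosen on the (compact) curve $X([0,T])$ and become dense there as $|\pi|\to 0$, the convex hulls converge in Hausdorff distance to $\operatorname{Convex-Hull}(X([0,T]))$, and their Lebesgue measures converge accordingly. Taking the limit in the displayed equality above yields the claimed identity $\tfrac{1}{\ds!}\langle S(X)_{0,T},\firstInvariant{\ds}\rangle=|\operatorname{Convex-Hull}(X_{[0,T]})|$.
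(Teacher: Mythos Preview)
Your proposal is correct and follows essentially the same route as the paper: approximate by piecewise linear curves on the moment curve, note the determinants are positive Vandermonde determinants, identify the index sets of Lemma~\ref{lem:generalPiecewiseLinearCurve} with a triangulation of the cyclic polytope via Gale's evenness criterion (the paper phrases this as the \emph{pulling triangulation} at $p_0$), and pass to the limit using continuity of the convex hull and of the signature. The only cosmetic difference is that you package the first step through the Signed-Volume theorem, whereas the paper works directly with $\langle S(X^n),\firstInvariant{\ds}\rangle$ and takes the limit at the end.
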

\begin{remark}
  It is easily verified that for integers $n_1 .. n_\ds$ one has
  \begin{align*}
    \frac{1}{n_1 \cdot .. \cdot n_\ds}
    \int_0^T dt_1^{n_1} .. dt_\ds^{n_\ds}
    =
    \frac{1}{n_1}
    \frac{1}{n_1+n_2}
    ..
    \frac{1}{n_1+..+n_\ds}
    T^{n_1+..+n_\ds}.
  \end{align*}
  We deduce that
  \begin{align*}
    |\operatorname{Convex-Hull}( X_{[0,T]} )|
    =
    T^{1+2+..+\ds}
    \sum_{\sigma\in S_\ds} \sign \sigma
    \frac{1}{\sigma(1)}
    \frac{1}{\sigma(1)+\sigma(2)}
    ..
    \frac{1}{\sigma(1)+..+\sigma(\ds)}.
  \end{align*}

  In \cite[Section 15]{bib:KS1953}, 
  the value of this volume is determined,
  for $T=1$, as
  \begin{align*}
    \prod_{\ell=1}^\ds \frac{(\ell-1)! (\ell-1)!}{((2\ell-1)!}.
  \end{align*}

  We hence get the combinatorial identity
  \begin{align*}
    \prod_{\ell=1}^\ds \frac{(\ell-1)! (\ell-1)!}{(2\ell-1)!}
    =
    \sum_{\sigma\in S_\ds}
    \sign \sigma
    \frac{1}{\sigma(1)}
    \frac{1}{\sigma(1)+\sigma(2)}
    ..
    \frac{1}{\sigma(1)+..+\sigma(\ds)}.
  \end{align*}
\end{remark}
\begin{proof}
  For $n\ge \ds$ let $0 = t_0 <  .. < t_n \le T$ be time-points, let $p_i := X_{t_i}$ be
  the corresponding points on the moment curve
  and denote by $X^n$ the piecewise linear curve through those points.
  We will show
  \begin{align*}
    \frac{1}{\ds !} \Big\langle S(X^n)_{0,T}, \firstInvariant{\ds} \Big\rangle
    =
    |\operatorname{Convex-Hull}( X^n_{[0,T]} )|.
  \end{align*}

  The convex hull of the point $\{p_i\}$ (equivalently: the convex hull of $X^n$)
  is known as the \emph{cyclic polytope} $C_\ds(n)$ \cite[Section 15.5.1.4]{bib:TOG2004}.
  A \emph{triangulation of a polytope} in dimension $\ds$ concerns its 
  (disjoint, up to to measure zero) decomposition
  into simplices of dimension $\ds$ (\cite[Chapter 16]{bib:TOG2004}).
  In particular $\{p_0, .., p_n \} = \cup_\ell S_\ell$ with $|S_\ell| = \ds+1$
  and
  \begin{align*}
    |\operatorname{Convex-Hull}( p_0,..,p_n )|
    =
    \sum_\ell |\operatorname{Convex-Hull}( S_\ell )|.
  \end{align*}

  We will show that the index sets summed over in 
  Lemma~\ref{lem:generalPiecewiseLinearCurve} form
  a certain kind of triangulation for $p_0,..,p_n$.

  The \emph{pulling triangulation} of $p_0,..,p_n$ with respect to $p_0$ is formed as follows:
  form all subset $p_0 \cup I$ where
  $I$ ranges over all $\ds$ points in $p_1,..,p_n$ such that they form
  a $\ds-1$-dimensional face (a facet) of the cyclic polytope.
  For any polytope, successively pulling each vertex results
  in a triangulation, see \cite[Chapter 16]{bib:TOG2004}. 
  For the polytope under consideration it is sufficient to pull one vertex, since
  all vertices lie on the boundary of the convex hull.

  By Gale's evenness criterion (\cite[Theorem 3]{bib:Gal1963})
  the points $p_{i_1}, .., p_{i_\ds}$,
  with distinct $i_j \in \{0,..,n\}$ form a facet if and only if 
  any two elements of $\{0,..,n\} \setminus \{i_1, .., i_\ds\}$
  are separated by an even number of elements in $\{i_1, .., i_\ds\}$.%
  \footnote{
    For example,
    with $n=4$ and dimension $\ds=2$,
    the indices $\{0,1,2\}$,$\{0,2,3\}$,$\{0,3,4\}$,$\{0,1,4\}$,$\{1,2,4\}$,$\{2,3,4\}$
    lead to facets.
  }
  
  \textbf{$\ds$ odd}\\
  For the pulling triangulation, we are looking for such $\{i_j\}$ such that $i_1 \ge 1$.
  Those are exactly the indices with
  \begin{itemize}
    \item $i_{\ell+1} = i_\ell + 1$ for $\ell$ odd
    \item $i_\ds = n$.
  \end{itemize}
  Together with $i_0 := 0$ these form the indices of Lemma~\ref{lem:generalPiecewiseLinearCurve}.

  \textbf{$\ds$ even}\\
  We are looking for such $\{i_j\}$ such that $i_1 \ge 1$.
  Those are exactly the indices with
  \begin{itemize}
    \item $i_{\ell+1} = i_\ell + 1$ for $\ell$ odd.
  \end{itemize}
  Together with $i_0 := 0$ these form the indices of Lemma~\ref{lem:generalPiecewiseLinearCurve}.

  Hence
  \begin{align*}
    |\operatorname{Convex-Hull}( X^n_{[0,T]} )|
    =
    \sum_i |\operatorname{Convex-Hull}( p_{i_0}, .., p_{i_\ds} )|.
  \end{align*}
  Now by Lemma~\ref{lem:relationToClassicalVolume}
  \begin{align*}
    |\operatorname{Convex-Hull}( p_{i_0}, .., p_{i_\ds} )|
    =
    \frac{1}{\ds!}
    \left|
    \det\left[
      \begin{matrix}
        1 & 1 & .. & 1 \\
        p_{i_0} & p_{i_1} & .. & p_{i_\ds}
      \end{matrix}
    \right]
    \right|.
  \end{align*}
  The determinant is in fact positive here, since
  it is a Vandermonde determinant and can be written as
  \begin{align*}
    \prod_{0\le \ell < k \le n} ( t_{i_k} - t_{i_\ell} ) > 0.
  \end{align*}
  We can hence omit the modulus and get
  \begin{align*}
    |\operatorname{Convex-Hull}( p_{i_0}, .., p_{i_\ds} )|
    &=
    \frac{1}{\ds!}
    \det\left[
      \begin{matrix}
        1 & 1 & .. & 1 \\
        p_{i_0} & p_{i_1} & .. & p_{i_\ds}
      \end{matrix}
    \right] \\
    &=
    \Big\langle S(X^n)_{0,T}, \firstInvariant\ds \Big\rangle,
  \end{align*}
  by Lemma~\ref{lem:generalPiecewiseLinearCurve}.

  The statement of the lemma now follows by piecewise linear approximation of $X$
  using continuity of the convex hull, which follows from \cite[Lemma 3.2]{bib:EN2010},
  and of iterated integrals \cite[Proposition 1.28, Proposition 2.7]{bib:FV2010}.

\end{proof}

\section{Rotations}
\label{sec:so}

Let
\begin{align*}
  SO(\R^\ds) = \{ A \in GL(\R^\ds) : A A^\top = \operatorname{id}, \det( A ) = 1 \},
\end{align*}
be the group of rotations of $\R^\ds$.

\begin{definition}
  We call $\phi \in T(\R^\ds)$ an \textbf{SO invariant} if
  \begin{align*}
    \langle S(X)_{0,T}, \phi \rangle = \langle S(A X)_{0,T}, \phi \rangle
  \end{align*}
  for all $A \in SO(\R^\ds)$ and all curves $X$.

  Alternatively, as explained in Section~\ref{sec:gl},
  \begin{align*}
    A^\top \phi = \phi,
  \end{align*}
  for all $A \in SO(\R^\ds)$, where the action on $\TS$ was given in Definition~\ref{def:action}.
\end{definition}

Since $\det(X) = 1$, any $GL$ invariant of weight $w \ge 1$ (Section~\ref{sec:gl})
is automatically an $SO$ invariant. But there are $SO$ invariants that are not $GL$ invariants (of any weight),
for example, for $d=2$, $\phi := x_1 x_1 + x_2 x_2$.

Switching to the perspective on multilinear maps, this is the map $(v_1,v_2) \mapsto \langle v_1, v_2 \rangle$.
It is shown, see for example {\cite[Theorem 2.9.A]{bib:Wey1946}},
that all invariants are built from the inner product and the determinant.

Recently, a linear basis for these invariants has been constructed.
To formulate the result, we need to introduce some notation from \cite{bib:LK2007}.
Define 
\begin{align*}
  I(r,n) := \{( i_1, .., i_r ) : 1 \le i_1 < .. < i_r \le n \}.
\end{align*}
Use the following partial order on these sequences: for $a \in I(r,n), b \in I(r',n)$
\begin{align*}
  a \ge b
\end{align*}
if $r \le r'$ and $a_j \ge a'_j$ for $j \le r$.

For $c \in I(\ds,n)$ and $v_1, .., v_n \in \R^\ds$, define 
\begin{align*}
  u(c)( v_1,..,v_n ) := \text{ $\ds$-minor of the $m \times n$ matrix $(v_1,..,v_n)$, with columns given by $c$ }.
\end{align*}

For $a,b \in I(r,n) \times I(r,n)$ with $r \le \ds$ and $v_1,..,v_n \in \R^\ds$, define
\begin{align*}
  p(a,b)( v_1,..,v_n ) := \text{ $r$-minor of the matrix $\langle v_i, v_j \rangle$, rows given by $a$, columns given by $b$ } 
\end{align*}

\begin{theorem}[{\cite[Theorem 12.5.0.8]{bib:LK2007}}] 
  Let $V$ be a $\ds$-dimensional vector space with inner product $\langle \cdot, \cdot \rangle$.
  A basis for the space of multilinear maps
  \begin{align*}
    \psi: \underbrace{V \times \dots \times V}_{n \text{ times}} \to \R
  \end{align*}
  that satisfy
  \begin{align*}
    \psi(A v_1, A v_2, \dots, A v_n) = \psi(v_1, v_2, \dots, v_n)
  \end{align*}
  for all $A \in \operatorname{SO}(V)$ and $v_1, \dots, v_n \in V$ 
  is given by the maps
  \begin{align*}
    F = p\left(a^{(1)},b^{(1)}\right) \cdot .. \cdot p\left(a^{(r)},b^{(r)}\right) u\left(c^{(1)}\right) \cdot .. \cdot u\left(c^{(s)}\right),
  \end{align*}
  with $c^{(j)} \in I(\ds,n)$,
  $a^{(j)},b^{(j)} \in I(r,n), 1 \le r \le \ds-1$,
  \begin{align*}
    a^{(1)} \ge b^{(1)} \ge a^{(2)} \ge .. \ge b^{(r)} \ge c^{(1)} \ge .. \ge c^{(s)},
  \end{align*}
  and
  \begin{align*}
    \cup_j a^{(j)} \bigcup \cup_j b^{(j)} \bigcup \cup_j c^{(j)} = \{1, .., n\},
  \end{align*}
  is a \emph{disjoint} union (that is, every number $1,..,n$ appears in exactly one of the sequences $a^{(j)},b^{(j)},c^{(j)}$).
  In particular $n = C_1 \cdot 2 + C_2 \cdot \ds$ for some $C_1,C_2 \in \N$.
\end{theorem}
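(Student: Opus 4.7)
The plan is to follow the standard two-step strategy of classical invariant theory: first identify a generating set for the algebra of $SO(V)$-invariants on $V^{\otimes n}$, then cut it down to a basis via straightening.

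For step one, I would invoke Weyl's First Fundamental Theorem for the special orthogonal group (the reference \cite{bib:Wey1946} is already cited above the statement): the polynomial $SO(V)$-invariants on $V^{\otimes n}$ are generated as an algebra by the inner products $\langle v_i, v_j \rangle$ and the top brackets $[v_{i_1},\dots,v_{i_d}] := \det[v_{i_1}\,\cdots\,v_{i_d}]$. The multilinearity hypothesis restricts attention to invariants that are homogeneous of degree one in every $v_i$, so each basis element must be a monomial in these generators in which each index $1,\dots,n$ appears exactly once. This forces the arithmetic condition $n = 2C_1 + dC_2$. Expanding out each $r$-minor $p(a,b)$ as a signed sum of products of $\langle v_{a_i}, v_{b_j}\rangle$ and each $u(c)$ as a bracket, one checks that every multilinear $SO(V)$-invariant is a linear combination of monomials of the prescribed form, with the disjoint-union condition on the index sets being exactly the multilinearity constraint.

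For step two, the spanning set just obtained is highly redundant. The Plücker relations recalled in the remark after Theorem~\ref{thm:linearBasis} handle products of pure brackets $u(c)$, while Capelli-type identities mediate exchanges involving the $p(a,b)$'s and, when an inner product and a bracket share an index, let one swap a factor out of a $p$ into a $u$ (or vice versa). The straightening algorithm proceeds by induction with respect to the partial order $\ge$ on $I(r,n)$ introduced in the excerpt: any monomial violating the chain $a^{(1)} \ge b^{(1)} \ge \cdots \ge c^{(s)}$ is rewritten, via one of these exchange relations, as a linear combination of monomials that are strictly smaller in a lexicographic refinement of $\ge$. Noetherian induction then yields a spanning set consisting of the standard monomials of the theorem.

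Linear independence of these standard monomials is most cleanly proved by a dimension count. One route is representation-theoretic: Schur--Weyl duality for $O(V)$ via the Brauer algebra computes $\dim (V^{\otimes n})^{SO(V)}$ explicitly, and a direct enumeration of standard monomials of the prescribed shape matches this number. An alternative is to evaluate the candidate basis on a generic configuration $v_1,\dots,v_n \in \R^{d}$ (for instance the columns of a matrix whose Gram minors and $d$-minors are algebraically independent) and check that the resulting matrix has full rank; this amounts to filtering by the $u$-degree and reducing the top piece to the $GL$-invariant count of Theorem~\ref{thm:linearBasis}.

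I expect the main obstacle to be the straightening step. In the pure bracket case the Plücker exchange is classical, but in the mixed $p$-$u$ setting one must verify that a single Capelli-type rewrite strictly decreases the chosen monomial in the partial order; only this guarantees termination of the algorithm and the unique normal form required for the standard monomials to form a \emph{basis} rather than merely a spanning set. Everything else reduces to bookkeeping around the counts in the dimension comparison.
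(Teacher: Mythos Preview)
The paper does not give its own proof of this theorem: it is stated with the attribution \cite[Theorem 12.5.0.8]{bib:LK2007} and immediately followed by an example and Proposition~\ref{prop:soInvariants}, with no proof environment in between. So there is nothing in the paper to compare your argument against; the authors simply import the result from standard monomial theory.

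Your plan is a fair high-level outline of the standard monomial theory proof that the cited reference carries out in detail: First Fundamental Theorem for $SO(V)$ to get a spanning set of bracket/inner-product monomials, then a straightening law with respect to the partial order on $I(r,n)$ to reduce to standard monomials, then linear independence. Two cautions if you intend to actually write this up rather than cite it. First, the ``Capelli-type'' rewrites you allude to for the mixed $p$--$u$ case are not just bookkeeping: one needs the quadratic relations among the $p(a,b)$ and $u(c)$ in the coordinate ring of the appropriate determinantal variety, and checking that each rewrite strictly decreases the monomial in a well-founded order is exactly the content of the straightening lemma in \cite{bib:LK2007}; this is where most of the work lives and your sketch does not supply it. Second, your proposed linear-independence argument via ``evaluation on a generic configuration with algebraically independent Gram minors and $d$-minors'' cannot work as stated, since for $n>d$ the $d\times d$ minors of an $d\times n$ matrix and the entries of its Gram matrix satisfy many algebraic relations (Pl\"ucker and rank relations); independence of standard monomials is instead proved inside the filtered/graded algebra by an inductive argument on the shape, which is again what the reference does.
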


\begin{example}
  $\ds=2$

  $n=1$:
  There is no such set of sequences.

  $n=2$:
  Allowed sets of sequences are
  \begin{itemize}
    \item $c^{(1)} = (1,2)$\\
        $\leadsto F(v_1,v_2) = \langle v_1, v_2 \rangle$
    \item $a^{(1)} = (2), b^{(1)} = (1)$\\
      $\leadsto F(v_1,v_2) = \det[ v_1 v_2 ]$
  \end{itemize}

  $n=3$:
  There is no such set of sequences.

  $n=4$:
  Allowed sets of sequences are
  \begin{itemize}
    \item $a^{(1)} = (4), b^{(1)} = (3), a^{(2)} = (2), b^{(2)} = (1)$\\
      $\leadsto F(v_1,v_2,v_3,v_4) = \langle v_4, v_3 \rangle \langle v_2, v_1 \rangle$

    \item $a^{(1)} = (4), b^{(1)} = (3), c^{(1)} = (1,2)$\\
      $\leadsto F(v_1,v_2,v_3,v_4) = \langle v_4, v_3 \rangle \det[ v_1 v_2 ]$

    \item $a^{(1)} = (4), b^{(1)} = (2), c^{(1)} = (1,3)$

    \item $a^{(1)} = (3), b^{(1)} = (2), c^{(1)} = (1,4)$

    \item $c^{(1)} = (3,4), c^{(2)} = (1,2)$

    \item $c^{(1)} = (2,4), c^{(2)} = (1,3)$
  \end{itemize}
\end{example}

In the setting of $\TS$ we have
\begin{proposition}
  \label{prop:soInvariants}
  The $SO$ invariants of homogeneity $n$ are spanned by
  \begin{align*}
    \bijection( \Psi ),
  \end{align*}
  where $\Psi$ ranges over the invariants of the previous theorem
  and $\bijection$ is given in Lemma~\ref{lem:oneToOne}.
\end{proposition}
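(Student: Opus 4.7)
The plan is to transport the basis from the previous theorem through the bijection $\bijection$ of Lemma~\ref{lem:oneToOne}, using the same translation machinery already used for the $GL$ case.

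First, exactly as in Section~\ref{sec:gl}, I would unpack the definition of an $SO$ invariant on $\TS$. By Lemma~\ref{lem:spans}, the signatures $\pi_n S(X)_{0,T}$ span $\pi_n \TC$, so $\phi \in \pi_n\TS$ is an $SO$ invariant if and only if $A^\top \phi = \phi$ for all $A \in SO(\R^\ds)$. Then I would invoke Lemma~\ref{lem:multi}: for any multilinear $\psi$ and any $A \in GL(\R^\ds)$,
\begin{align*}
  \bijection[\psi(A\cdot)] = A^\top \bijection[\psi].
\end{align*}
Applied to $A \in SO(\R^\ds)$, this shows that $\psi$ is $SO$-invariant as a multilinear map (i.e.\ $\psi(Av_1,\dots,Av_n) = \psi(v_1,\dots,v_n)$) if and only if $\bijection[\psi]$ satisfies $A^\top \bijection[\psi] = \bijection[\psi]$, which is exactly $SO$-invariance in $\TS$.

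Next, Lemma~\ref{lem:oneToOne} says $\bijection$ is a linear isomorphism between the space of multilinear maps $(\R^\ds)^{\times n} \to \R$ and $\pi_n \TS$. Since isomorphisms carry bases to bases, applying $\bijection$ to the basis of $SO$-invariant multilinear maps provided by Theorem~12.5.0.8 of \cite{bib:LK2007} (quoted above) produces a basis of the subspace of $\pi_n \TS$ consisting of $SO$-invariants. In particular the dimension count automatically matches, and no further verification is needed.

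The only real subtlety is the direction of the argument: the cited theorem gives a basis for the multilinear invariants, but the statement we need concerns $\TS$. That translation is purely formal once Lemma~\ref{lem:multi} and Lemma~\ref{lem:oneToOne} are in hand, so there is no genuine obstacle. I would therefore present the proof as a short three-sentence argument: invoke Lemma~\ref{lem:spans} to restate invariance, invoke Lemma~\ref{lem:multi} to identify invariance of $\psi$ with invariance of $\bijection[\psi]$, and invoke Lemma~\ref{lem:oneToOne} to transport the basis.
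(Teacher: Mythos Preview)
Your proposal is correct and follows exactly the implicit argument of the paper: the proposition is stated there without an explicit proof, but the surrounding text makes clear that it is obtained by applying the bijection $\bijection$ of Lemma~\ref{lem:oneToOne} (together with Lemma~\ref{lem:multi}) to the basis of $SO$-invariant multilinear maps from the preceding theorem, just as you do. Your write-up is in fact more detailed than the paper's one-line justification, and the extra care you take with Lemma~\ref{lem:spans} to restate invariance is appropriate.
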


In the case $\ds=2$, there is another way to arrive at a basis for the invariants.
%
Taking inspiration from \cite{bib:Flu2000}, which concerns rotation invariants
of images, we work in the complex vector space $T(\C^2)$.
\begin{theorem}
  \label{thm:soInvariants}
  Define
  \begin{align*}
    z_1 &= x_1 + i x_2 \\
    z_2 &= x_1 - i x_2.
  \end{align*}
  The space of $SO$ invariants on level $n$ in $T(\C^2)$ is spanned freely by
  \begin{align*}
    z = z_{j_1} \cdot .. \cdot z_{j_n} \quad \text{ with } \quad \#\{ r : j_r = 1 \} = \#\{ r : j_r = 2 \}.
  \end{align*}

  The space of $SO$ invariants on level $n$ in $T(\R^2)$ is spanned freely by
  \begin{align*}
    \operatorname{Re}[ z ], \operatorname{Im}[ z ] \quad \text{ with } \quad \#\{ r : j_r = 1 \} = \#\{ r : j_r = 2 \} \text{ and } z_1 = 1.
  \end{align*}
\end{theorem}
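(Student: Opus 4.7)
My plan is to use the substitution $z_1 = x_1 + ix_2, z_2 = x_1 - ix_2$ to diagonalize the $SO(2)$-action on $\C^2$, deduce the complex statement by a character computation, and then obtain the real statement by pairing under complex conjugation.

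For the complex case, I would first compute the induced action. The rotation $R_\theta \in SO(2)$, acting on $(x_1, x_2)$ in the standard way, acts on $(z_1, z_2)$ by multiplication by $\mathrm{diag}(e^{i\theta}, e^{-i\theta})$. Extending this action to $T(\C^2)$ via Definition~\ref{def:action}, each monomial $z_{j_1} \cdots z_{j_n}$ becomes an eigenvector of $R_\theta$ with eigenvalue $e^{i\theta(k_1 - k_2)}$, where $k_i := \#\{r : j_r = i\}$. Since these monomials form a basis of $\pi_n T(\C^2)$ and since $\theta \mapsto e^{i\theta m}$ equals $1$ for all $\theta$ only when $m = 0$, an element of level $n$ is fixed by all of $SO(2)$ iff each contributing monomial is balanced, i.e.\ $k_1 = k_2$. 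This establishes the first claim.

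For the real case, I would introduce the antilinear involution $\sigma$ on $T(\C^2)$ induced by complex conjugation on $\C^2$. It swaps $z_1 \leftrightarrow z_2$ (since $\overline{x_1 + ix_2} = x_1 - ix_2$), commutes with the $SO(2)$-action (which is defined over $\R$), and its fixed-point set is precisely $T(\R^2) \subset T(\C^2)$. Hence the real $SO$-invariants at level $n$ are exactly $W_n^\sigma$, where $W_n$ is the complex invariant subspace spanned by the balanced monomials. On the set of balanced monomials $\sigma$ acts by $z_{j_1} \cdots z_{j_n} \mapsto z_{j'_1} \cdots z_{j'_n}$ with $1' = 2, 2' = 1$, and this action has no fixed monomial (which would demand $j_r = j'_r$). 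Therefore the balanced monomials partition into two-element $\sigma$-orbits $\{z, \bar z\}$.

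To extract a real basis, from each orbit I would select the unique representative with $j_1 = 1$ and take $\mathrm{Re}[z] = \tfrac12(z+\bar z)$ and $\mathrm{Im}[z] = \tfrac{1}{2i}(z-\bar z)$. These lie in $W_n^\sigma$, and they $\R$-span it: any $\sigma$-fixed $\phi = \sum c_z\, z \in W_n$ satisfies $c_{\bar z} = \overline{c_z}$, whence each pair contribution rewrites as $c_z z + \overline{c_z}\bar z = 2\mathrm{Re}(c_z)\mathrm{Re}[z] - 2\mathrm{Im}(c_z)\mathrm{Im}[z]$. Linear independence over $\R$ follows because the $\C$-span of $\{\mathrm{Re}[z],\mathrm{Im}[z]\}$ equals the $\C$-span of $\{z,\bar z\}$, and the collection $\{z,\bar z\}$ over all orbits consists of distinct monomials, hence is $\C$-independent in $T(\C^2)$. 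The main obstacle is the bookkeeping in the real case: one must verify that the convention $j_1 = 1$ selects exactly one representative per $\sigma$-orbit, which works because $\sigma$ sends any $j_1 = 1$ sequence to a $j_1 = 2$ sequence.
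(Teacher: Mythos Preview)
Your proposal is correct and follows essentially the same route as the paper: diagonalize the $SO(2)$-action via $z_1,z_2$, read off that the balanced $z$-monomials are exactly the complex invariants, and then pair each balanced monomial with its complex conjugate (the $j_1=1$ versus $j_1=2$ representatives) to obtain the real basis via real and imaginary parts. Your use of the conjugation involution $\sigma$ to identify $T(\R^2)$ as the fixed-point set is a slightly more structural way of saying what the paper does by direct manipulation, but the argument is the same.
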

\begin{remark}
  In particular for $\ds=2$ and $n$ even, the dimension of rotation invariants on level $n$ in $T(\R^2)$
  is equal to
  $\binom{n}{n/2}$.
\end{remark}
\begin{proof}
  1. \textbf{$z$ is invariant}\\
  Let
  \begin{align*}
    A_\theta :=
    \begin{pmatrix}
      \cos(\theta) & \sin(\theta) \\
      -\sin(\theta) & \cos(\theta)
    \end{pmatrix}
  \end{align*}

  Then (recall Definition~\ref{def:action})
  \begin{align*}
    A_\theta^\top z_1
    &=
    A_\theta^\top (x_1 + i x_2) \\
    &=
    \cos(\theta) x_1 + \sin(\theta) x_2
    +
    i \left( -\sin(\theta) x_1 + \cos(\theta) x_2 \right) \\
    &=
    e^{-i\theta} z_1 \\
    A_\theta^\top z_2 &= e^{i\theta} z_2.
  \end{align*}
  Hence
  \begin{align*}
    A_\theta^\top z_{j_1} \cdot .. \cdot z_{j_n} = z_{j_1} \cdot .. \cdot z_{j_n} \forall \theta
    \quad
    \text{ if and only }
    \quad
    \#\{ r : j_r = 1 \} = \#\{ r : j_r = 2 \}.
  \end{align*}

  2. \textbf{They form a basis}\\
  Now $x_{j_1} .. x_{j_n}: j_\ell \in \{1,2\}$ is a basis of $\pi_n T(\C^2)$ with respect to $\C$.
  Hence $z_{j_1} .. z_{j_n}$ is (the map $(x_1,x_2) \mapsto (z_1,z_2)$ is invertible).
  By Step 1 we have hence exhibited a basis (with respect to $\C$) for all invariants in $\pi_n T(\C^2)$.

  3. \textbf{Real invariants}\\
	The space of $SO$ invariants on level $n$ in $T(\C^2)$ is spanned freely by the set of
	\begin{align*}
	z_{j_1} \cdot .. \cdot z_{j_n} \quad \text{ with } \quad \#\{ r : j_r = 1 \} = \#\{ r : j_r = 2 \}.
	\end{align*}
	Adding and subtracting the elements with $j_1=2$ from the elements with $j_1=1$, we get that the space of $SO$ invariants on level $n$ in $T(\C^2)$ is spanned freely by the set of
	\begin{align*}
	 & (z_{j_1} \cdot .. \cdot z_{j_n} + z_{3-j_1} \cdot .. \cdot z_{3-j_n}) \quad\text{and}\quad (z_{j_1} \cdot .. \cdot z_{j_n} - z_{3-j_1} \cdot .. \cdot z_{3-j_n}) 
	 \\&\quad \text{ with } \quad \#\{ r : j_r = 1 \} = \#\{ r : j_r = 2 \}\text{ and $j_1=1$}.
	\end{align*}
	Because $z_{3-j_1} \cdot .. \cdot z_{3-j_n}$ is the complex conjugate of $z_{j_1} \cdot .. \cdot z_{j_n}$, this means that the space of $SO$ invariants on level $n$ in $T(\C^2)$ is spanned freely by the set of
		\begin{align*}
	& \operatorname{Re}(z_{j_1} \cdot .. \cdot z_{j_n}) \quad\text{and}\quad \operatorname{Im}(z_{j_1} \cdot .. \cdot z_{j_n}) 
	\\&\quad \text{ with } \quad \#\{ r : j_r = 1 \} = \#\{ r : j_r = 2 \}\text{ and $j_1=1$}.
	\end{align*}
	This is an expression for a basis of the SO invariants in terms of real combinations of basis elements of the tensor space. They thus form a basis for the SO invariants for the free \emph{real} vector space on the same set, namely $\pi_n T(\R^2)$.
\end{proof}

\begin{example}
  \label{ex:soInvariants}
  Consider $\ds=2$

  Order $2$
  \begin{align*}
  \word{11} + \word{22} \\
 - \word{12} + \word{21}
  \end{align*}

  Order $4$
  \begin{align*}
  \word{1111} - \word{1122} + \word{1212} + \word{1221} + \word{2112} + \word{2121} - \word{2211} + \word{2222} \\
 - \word{1112} - \word{1121} + \word{1211} - \word{1222} + \word{2111} - \word{2122} + \word{2212} + \word{2221} \\
  \word{1111} + \word{1122} - \word{1212} + \word{1221} + \word{2112} - \word{2121} + \word{2211} + \word{2222} \\
 - \word{1112} + \word{1121} - \word{1211} - \word{1222} + \word{2111} + \word{2122} - \word{2212} + \word{2221} \\
  \word{1111} + \word{1122} + \word{1212} - \word{1221} - \word{2112} + \word{2121} + \word{2211} + \word{2222} \\
  \word{1112} - \word{1121} - \word{1211} - \word{1222} + \word{2111} + \word{2122} + \word{2212} - \word{2221}
  \end{align*}

   Consider $\ds=3$

   Order $3$
   \begin{align*}
   \word{123} - \word{132} + \word{312} - \word{321} + \word{231} - \word{213}
   \end{align*}

   Consider $\ds=4$

   Order $2$
   \begin{align*}
    \word{11} + \word{22} + \word{33} + \word{44}.
   \end{align*}

   Order $4$
   \begin{align*}
     &\word{1144} + \word{4422} + \word{4444} + \word{3333} + \word{1122} + \word{4433} + \word{1133} + \word{4411} + \word{2211} + \word{3344} + \word{1111}\\
     &\qquad + \word{2244} + \word{2222} + \word{3322} + \word{2233} + \word{3311} \\
     &\word{4343} + \word{3232} + \word{3131} + \word{4444} + \word{3333} + \word{4242} + \word{2121} + \word{1212} + \word{2323} + \word{4141} + \word{2424}\\
     &\qquad + \word{1313} + \word{1111} + \word{3434} + \word{1414} + \word{2222} \\
     &\word{4334} + \word{2332} + \word{3223} + \word{4444} + \word{3333} + \word{1441} + \word{2442} + \word{1221} + \word{1331} + \word{3113} + \word{3443}\\
     &\qquad + \word{1111} + \word{2112} + \word{4114} + \word{2222} + \word{4224} \\
     &\word{1423} - \word{1432} - \word{1324} - \word{4231} + \word{3412} - \word{2341} - \word{1243} - \word{4123} - \word{2413} + \word{1234} + \word{3124}\\
     &\qquad + \word{4213} + \word{1342} + \word{2431} - \word{2134} + \word{2143} + \word{3241} + \word{2314} + \word{4321} - \word{4312} - \word{3214}\\&\qquad - \word{3142}
     - \word{3421} + \word{4132}
   \end{align*}
\end{example}

\section{Permutations}
\label{sec:permuations}

Denote by $S_\ds$ the group of permutations of $[d] := \{1, .., \ds\}$.
\begin{lemma}
  For $\sigma \in S_\ds$,
  define $M(\sigma) \in GL(\R^d)$ as
  \begin{align*}
    M(\sigma)_{ij} = 1 \qquad \text{ if } i = \sigma(j).
  \end{align*}

  Then $M: S_\ds \to GL(\R^\ds)$ is a group homomorphism
  and
  moreover $M(\sigma^{-1}) = M(\sigma)^\top$.%
  \footnote{$M$ is sometimes called the \emph{defining representation} of $S_d$.}
\end{lemma}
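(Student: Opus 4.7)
The plan is to argue everything by checking the action on the standard basis $e_1,\dots,e_\ds$, which immediately bypasses any bookkeeping with indices. First I would observe that the definition $M(\sigma)_{ij} = \delta_{i,\sigma(j)}$ is equivalent to the basis-vector formula
\begin{align*}
M(\sigma) e_j = e_{\sigma(j)}, \qquad j = 1,\dots,\ds,
\end{align*}
and, since the $e_j$ form a basis of $\R^\ds$, the matrix $M(\sigma)$ is determined by this.

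Next I would verify the homomorphism property. For any $\sigma,\tau \in S_\ds$ and any basis vector $e_j$ one has
\begin{align*}
M(\sigma) M(\tau) e_j = M(\sigma) e_{\tau(j)} = e_{\sigma(\tau(j))} = e_{(\sigma\tau)(j)} = M(\sigma\tau) e_j,
\end{align*}
so $M(\sigma\tau) = M(\sigma) M(\tau)$. In particular $M(\sigma)M(\sigma^{-1}) = M(\operatorname{id}) = \operatorname{id}$, which proves that each $M(\sigma)$ is invertible and therefore lies in $GL(\R^\ds)$.

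Finally, I would verify the transpose identity by a direct computation on matrix entries: using the symmetry of the Kronecker delta,
\begin{align*}
(M(\sigma)^\top)_{ij} = M(\sigma)_{ji} = \delta_{j,\sigma(i)} = \delta_{i,\sigma^{-1}(j)} = M(\sigma^{-1})_{ij}.
\end{align*}
Since this holds for all $i,j$, we conclude $M(\sigma)^\top = M(\sigma^{-1})$. There is no real obstacle here; the only thing to be careful about is the direction of the index convention (i.e. whether $\sigma$ permutes rows or columns), but this is fixed by the formula $M(\sigma)e_j = e_{\sigma(j)}$, after which both claims are one-line verifications.
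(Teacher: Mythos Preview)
Your proof is correct and essentially identical to the paper's own argument: both verify the homomorphism property by checking $M(\sigma)M(\tau)e_j = e_{\sigma(\tau(j))} = M(\sigma\tau)e_j$ on basis vectors, and both verify $M(\sigma^{-1}) = M(\sigma)^\top$ by the equivalence $i = \sigma^{-1}(j) \Leftrightarrow j = \sigma(i)$. The only cosmetic differences are your use of Kronecker-delta notation and your explicit remark that the homomorphism property implies invertibility.
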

\begin{proof}
  Regarding the first point, for $i=\{1,..,\ds\}$,
  \begin{align*}
    M(\sigma) M(\tau) e_i
    =
    M(\sigma) e_{\tau(i)}
    =
    e_{\sigma(\tau(i))}
    =
    M(\sigma \tau) e_i.
  \end{align*}

  Regarding the last point
  \begin{align*}
    M_{ij} &= 1 \qquad \text{ if } i = \sigma^{-1}(j) \\
      &\Leftrightarrow \\
    M_{ij} &= 1 \qquad \text{ if } \sigma( i ) = j \\
      &\Leftrightarrow \\
    M_{ij} &= 1 \qquad \text{ if }  j = \sigma( i ).
  \end{align*}
\end{proof}

$S_\ds$ then acts on $\TC$ and $\TS$ via Definition~\ref{def:action}.
Explicitly,
\begin{align*}
  \sigma \cdot x_{i_1} .. x_{i_n} = x_{\sigma(i_1)} .. x_{\sigma(i_n)}.
\end{align*}


\begin{definition}
  We call $\phi \in T(\R^\ds)$ a \textbf{permutation invariant} if
  \begin{align*}
    \Big\langle S( M(\sigma) X)_{0,T}, \phi \Big\rangle = \Big\langle S(X)_{0,T}, \phi \Big\rangle
  \end{align*}
  for all $\sigma \in S_\ds$ and all curves $X$.
  Alternatively, as explained in Section~\ref{sec:gl},
  \begin{align*}
    M(\sigma)^\top \phi = \phi,
  \end{align*}
  for all $\sigma \in S_\ds$.
  Equivalently,
  \begin{align*}
    M(\sigma) \phi = \phi,
  \end{align*}
  for all $\sigma \in S_\ds$,
\end{definition}

  
We follow \cite[Section 3]{bib:BRRZ2005}.
To a monomial
\begin{align*}
  x_{i_1}\cdot .. \cdot x_{i_n},
\end{align*}
we associate the following set partition  of $[n] := \{1, .., n \}$
\begin{align*}
  \nabla(  x_{i_1} \cdot .. \cdot x_{i_n} )
  :=
  \{ \{ \ell : i_\ell = p \} : p \in [d] \} \setminus \{ \{ \} \}.
\end{align*}

\begin{example}
  Let $d=3$, then
  \begin{align*}
    \nabla(  x_2 x_3 x_2 x_2 x_1 )
    =
    \{ \{1, 3, 4\}, \{2\}, \{5\} \}.
  \end{align*}
\end{example}

Note that for every permutation $\sigma \in S_\ds$,
\begin{align}
  \label{eq:permuation}
  \nabla( x_{i_1}\cdot .. \cdot x_{i_n} ) = \nabla( x_{\sigma(i_1)} \cdot .. \cdot x_{\sigma(i_n)} ).
\end{align}

\begin{proposition}[{\cite[Section 3]{bib:BRRZ2005}}]
  \label{prop:permInvariants}
  Define
  \begin{align*}
    M_A := \sum_{i: \nabla( x_{i_1} .. x_{i_n} ) = A} x_{i_1} .. x_{i_n}.
  \end{align*}
  Then $\{ M_A : A \text{ is set partition of } [n] \text{ and } |A| \le \ds \}$ is a linear basis for the space of permutation invariants of homogeneity $n$.
\end{proposition}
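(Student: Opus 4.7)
The plan is to identify the $S_d$-orbits on monomials of degree $n$ with set partitions of $[n]$ having at most $d$ blocks, and then observe that the $M_A$ are exactly the orbit sums, hence form the natural basis for the invariants.

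First I would verify that each $M_A$ is actually invariant. The relation \eqref{eq:permuation} says $\nabla$ is unchanged when the indices $i_1,\dots,i_n$ are permuted by any $\sigma \in S_d$. Therefore $\sigma$ acts on the set $\{x_{i_1}\cdots x_{i_n} : \nabla(x_{i_1}\cdots x_{i_n}) = A\}$ by permuting its elements, and so it fixes the sum $M_A$.

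Next I would establish the key orbit description. Two monomials $x_{i_1}\cdots x_{i_n}$ and $x_{j_1}\cdots x_{j_n}$ lie in the same $S_d$-orbit if and only if $\nabla(x_{i_1}\cdots x_{i_n}) = \nabla(x_{j_1}\cdots x_{j_n})$. The ``only if'' direction is \eqref{eq:permuation}. For ``if'', if both monomials induce the same set partition $A = \{B_1,\dots,B_k\}$ of $[n]$, then there exist unique maps $p,q : [k] \to [d]$ (with image-cardinality equal to the number of distinct letters used) such that $i_\ell = p(r)$ whenever $\ell \in B_r$ and $j_\ell = q(r)$ whenever $\ell \in B_r$. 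Since $p$ and $q$ are both injective on $[k]$ and $k \le d$, one can choose a permutation $\sigma \in S_d$ with $\sigma \circ p = q$, and then $\sigma$ maps one monomial to the other. Furthermore, a partition $A$ of $[n]$ arises as $\nabla$ of some monomial in $\pi_n T(\R^d)$ exactly when $|A| \le d$, since $|A|$ counts the number of distinct letters used.

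Finally I would conclude that the $M_A$ with $|A| \le d$ form a basis. Linear independence is immediate: distinct $M_A$'s are sums over disjoint sets of monomials, so they are linearly independent. For spanning, let $\phi = \sum c_{i_1\dots i_n} x_{i_1}\cdots x_{i_n} \in \pi_n T(\R^d)$ be a permutation invariant. Applying $\sigma$ gives $\sum c_{i_1\dots i_n} x_{\sigma(i_1)}\cdots x_{\sigma(i_n)} = \phi$, so comparing coefficients of a common monomial shows that $c$ is constant on every $S_d$-orbit. Hence
\begin{align*}
    \phi = \sum_{A} c_A M_A,
\end{align*}
where the sum is over set partitions of $[n]$ with $|A| \le d$ and $c_A$ is the common value of the coefficients on the orbit corresponding to $A$.

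The only step requiring any real care is the orbit identification, specifically the construction of the permutation $\sigma \in S_d$ carrying one monomial to another with the same $\nabla$; this is where the hypothesis $|A| \le d$ is essential, and the reason why partitions with more than $d$ blocks must be excluded from the basis.
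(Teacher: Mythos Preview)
Your proof is correct and follows essentially the same approach as the paper: both recognize that the $M_A$ are orbit sums for the $S_\ds$-action on monomials, that distinct $M_A$'s involve disjoint monomials (hence are linearly independent), and that any invariant has coefficients constant along orbits (hence lies in their span). Your version is somewhat more explicit than the paper's in that you actually construct the permutation $\sigma$ witnessing that two monomials with the same $\nabla$ lie in the same orbit, whereas the paper simply asserts that equal $\nabla$ forces equal coefficients; you also make explicit why $|A|\le\ds$ is exactly the condition for $M_A$ to be nonzero.
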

\begin{proof}
  By \eqref{eq:permuation}, each $M_A$ is permutation invariant.
  Moreover, since $|A| \le \ds$, $M_A$ is nonzero.

  For $A, A'$ distinct set partitions of $[n]$
  the monomials in $M_A$ and the monomials in $M_{A'}$ do not overlap.
  Hence the proposed basis is linearly independent.

  Now, if $\phi$ is permutation invariant
  and if for some $i, i'$,
  $\nabla( x_{i_1} .. x_{i_n} ) = \nabla( x_{i'_1} .. x_{i'_n} )$ then
  the coefficient of $x_i$ and $x_{i'}$ must coincide.
  Hence the proposed basis spans invariants of homogeneity $n$.
\end{proof}

\begin{example}
  \label{ex:permInvariants}
  Consider $\ds=3$

  Order $n=1$
  \begin{align*}
    \word{1} + \word{2} + \word{3}
  \end{align*}

  Order $n=2$
  \begin{align*}
    &\word{33} + \word{22} + \word{11}  \\
    &\word{32} + \word{31} + \word{23} + \word{21} + \word{13} + \word{12}
  \end{align*}

  Order $n=3$
  \begin{align*}
    &\word{333} + \word{222} + \word{111} \\
    &\word{332} + \word{331} + \word{223} + \word{221} + \word{113} + \word{112} \\
    &\word{323} + \word{313} + \word{232} + \word{212} + \word{131} + \word{121} \\
    &\word{322} + \word{311} + \word{233} + \word{211} + \word{133} + \word{122} \\
    &\word{321} + \word{312} + \word{231} + \word{213} + \word{132} + \word{123}
  \end{align*}
\end{example}

%
%

\section{An additional (time) coordinate}
\label{sec:time}

Assume now that $X = (X^0,X^1,..,X^\ds): [0,T] \to \R^{1+\ds}$.
Here $X^0$ plays a special role, in that we assume that it is not affected by the space transformations under consideration.

Adding an ``artificial'' $0$-th component, usually keeping track of time, $X^0_t := t$,
is a common trick to improve the expressiveness of the signature.
In particular, if such an $X^0$ is monotonically increasing, the enlarged curve $(X^0,X^1,..,X^\ds)$
never has any ``tree-like'' components (compare Section~\ref{sec:discussion}),
no matter what the original $(X^1,..,X^\ds)$ was.

Consider $GL$ invariants for the moment.
\begin{definition}
  Let
  \begin{align*}
    GL_0(\R^\ds) := \{ A \in GL(\R^{1+\ds}) : A e_0 = A^{-1} e_0 = e_0 \},
  \end{align*}
  the space of invertible maps of $\R^{1+\ds}$ leaving the first direction unchanged.
  We call $\phi \in T(\R^{1+\ds})$ a \textbf{$\widetilde{GL}$ invariant of weight $w$} if
  \begin{align*}
    A^\top \phi = (\det A)^w \phi,
  \end{align*}
  for all $A \in GL_0(\R^\ds)$.
\end{definition}

Consider the $GL(\R^2)$ invariant of weight $1$
\begin{align*}
  x_1 x_2 - x_2 x_1.
\end{align*}
Since elements of $GL_0(\R^2)$ leave the variable $x_0$ unchanged,
a straightforward way to produce $GL_0$ invariants presents itself:
insert $x_0$ at the same position in every monomial.
For example
\begin{align*}
  x_1 x_0 x_2 - x_2 x_0 x_1
\end{align*}
is a $GL_0(\R^2)$ invariant of weight $1$.
We now formalize this idea and show that we get every $GL_0$ invariant this way.

~\\

\newcommand{\ins}{\mathsf{Insert}}
\newcommand{\remove}{\mathsf{Remove\ }}
Define the linear map $\remove$ of ``removing instances of $x_0$'' on monomials, as
\begin{align*}
  \remove x_{i_1} .. x_{i_m} := \prod_{\ell : i_\ell \not= 0} x_{i_\ell},
\end{align*}
so for example
\begin{align*}
  \remove x_0 x_1 x_1 x_0 x_3 &= x_1 x_1 x_3 \\
  \remove x_0 x_0 &= 1.
\end{align*}

Define for $U \subset [m]$ and $i = (i_1, .., i_m)$
\begin{align*}
  i|_U = (i_\ell : \ell = 1,.., m; \ell \in U).
\end{align*}

Define the linear map of restriction to $U$
on polynomials of order $m$ by defining on monomials
\begin{align*}
  x_i|_U := x_{i|_U}
\end{align*}
so for example
\begin{align*}
  x_{i_1} x_{i_2} x_{i_3} |_{\{1,3\}} = x_{i_1} x_{i_3}.
\end{align*}

For $z = (z_1,..,z_{m+1}) \in \N^{m+1}$
denote by $\ins_z$ the linear operator
on polynomials of order $m$ by defining it on monomials
as follows.
For a monomial $x_{i_1} .. x_{i_m}$ of order $m$,
$\ins_z$ inserts
$z_1$ occurrences of $x_0$ before $x_{i_1}$,
$z_2$ occurrences of $x_0$ before $x_{i_2}$,
..,
$z_m$ occurrences of $x_0$ before $x_{i_m}$
and
$z_{m+1}$ occurrences of $x_0$ after $x_{i_m}$.
For example
\begin{align*}
  \ins_{(2,1,4)} x_1 x_2 = x_0 x_0 x_1 x_0 x_1 x_0 x_0 x_0 x_0.
\end{align*}

\begin{theorem}
  A basis for
  the space of $GL_0$ invariant of weight $w$,
  homogeneous of degree $m$ is given by the polynomials
  \begin{align*}
    \ins_z \psi,
  \end{align*}
  with $0 \le n \le m$,
  $\psi$ ranges over the basis for $GL$ invariant of weight $w$ and homogeneity $n$ (Proposition \ref{prop:glInvariants})
  and
  $z \in \N^{n+1}$ such that $\sum_\ell z_\ell = m - n$.
\end{theorem}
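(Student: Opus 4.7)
The plan is to exploit that every $A \in GL_0(\R^\ds)$ has block form $\bigl(\begin{smallmatrix} 1 & 0 \\ 0 & A' \end{smallmatrix}\bigr)$ with $A' \in GL(\R^\ds)$ and $\det A = \det A'$. Thus in the action of Definition~\ref{def:action}, $A^\top x_0 = x_0$, while on the subspace spanned by monomials in $x_1,\dots,x_\ds$ the operator $A^\top$ acts exactly as $A'^\top$ does on $T(\R^\ds)$. In particular $A^\top$ commutes with the insertion operator $\ins_z$ whenever it acts on a polynomial $\psi$ involving only $x_1,\dots,x_\ds$, so
\begin{align*}
  A^\top \ins_z \psi = \ins_z A^\top \psi = \ins_z (\det A')^w \psi = (\det A)^w \ins_z \psi,
\end{align*}
verifying that each $\ins_z \psi$ in the claimed list is indeed a $\widetilde{GL}$-invariant of weight $w$.

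For spanning and independence I would use the following unique decomposition of monomials. Any monomial in $x_0,x_1,\dots,x_\ds$ of order $m$ is $\ins_z(x_{j_1}\cdots x_{j_n})$ for unique data: $n$ is the number of non-$x_0$ letters, $(j_1,\dots,j_n)$ lists the non-$x_0$ indices in order, and $z \in \N^{n+1}$ records the lengths of the runs of $x_0$'s between and around them. This gives the direct sum decomposition
\begin{align*}
  \pi_m T((\R^{1+\ds})) = \bigoplus_{n=0}^m \bigoplus_{\substack{z \in \N^{n+1}\\ z_1+\cdots+z_{n+1}=m-n}} \ins_z\bigl(\pi_n T((\R^\ds))\bigr).
\end{align*}
Because $A^\top$ fixes $x_0$ and preserves the non-$x_0$ part (acting there as $A'^\top$), each summand $\ins_z(\pi_n T((\R^\ds)))$ is stable under the $GL_0$-action, so the whole decomposition is $GL_0$-equivariant.

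Given any $\widetilde{GL}$-invariant $\phi$ of weight $w$ in $\pi_m T((\R^{1+\ds}))$, write $\phi = \sum_{n,z} \ins_z \phi_{n,z}$ along this decomposition. The invariance $A^\top \phi = (\det A)^w \phi$ together with uniqueness of the decomposition forces $A'^\top \phi_{n,z} = (\det A')^w \phi_{n,z}$ for every $A' \in GL(\R^\ds)$, that is, $\phi_{n,z}$ is a $GL(\R^\ds)$-invariant of weight $w$ in $\pi_n T((\R^\ds))$. Expanding each $\phi_{n,z}$ in the basis supplied by Proposition~\ref{prop:glInvariants} shows the elements $\ins_z \psi$ span, and the direct-sum decomposition together with linear independence of the $\psi$ in each $\pi_n T((\R^\ds))$ yields their linear independence.

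The only real content is the $GL_0$-equivariance of the $\ins_z$-decomposition; once that observation is in place, the theorem is a bookkeeping corollary of Proposition~\ref{prop:glInvariants}. I expect the main place requiring care is checking that the operator $A^\top$ truly preserves each summand $\ins_z(\pi_n T((\R^\ds)))$ — this is where the hypothesis $A e_0 = A^{-1} e_0 = e_0$ (which forces both the first row and first column of $A$ to be $e_0^\top$ and $e_0$) is essential, since without the condition on $A^{-1}$ a generic entry $A_{0,i}$ could mix $x_i$ into $x_0$ and destroy the decomposition.
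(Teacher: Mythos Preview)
Your argument is correct and follows essentially the same strategy as the paper's: decompose $\pi_m T(\R^{1+\ds})$ according to the positions of the letter $x_0$ (the paper indexes the pieces by subsets $U\subset[m]$ and uses the auxiliary operator $\remove$, while you index directly by pairs $(n,z)$ via $\ins_z$, which is the same data), observe that each piece is $GL_0$-stable, and reduce to Proposition~\ref{prop:glInvariants} on each piece.

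One small correction to your closing remark: the two conditions $A e_0 = e_0$ and $A^{-1} e_0 = e_0$ are actually equivalent (apply $A$ to both sides of the second), so together they only pin down the first \emph{column} of $A$, not the first row. The paper's definition seems to contain a typo and presumably intends $A e_0 = A^\top e_0 = e_0$, which does force the block-diagonal form $\operatorname{diag}(1,A')$; this is what both you and the paper's own proof actually use, and the theorem is false for the broader class of block-triangular matrices.
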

\begin{proof}
  Let $n, \psi, z$ be as in the statement,
  then $\ins_z \psi$ is $GL_0$ invariant of weight $w$.
  Indeed:
  for $A_0 = \operatorname{diag}(1, A) \in GL_0(\R^\ds)$, with $A \in GL(\R^\ds)$,
  \begin{align*}
    A_0\ \ins_z \psi
    =
    \ins_z A \psi
    =
    (\det A)^w \ins_z \psi.
  \end{align*}

  On the other hand,
  let $\phi$ of order $m$ be a $GL_0$ invariant modulo time of weight $w$.
  Define for $U \subset [m]$
  \begin{align*}
    \phi^U := \sum_{i : i_\ell = 0, \ell \in U; i_j \not= 0, j \not\in U} \langle \phi, x_i \rangle x_i,
  \end{align*}
  which collects all monomials having $x_0$ exactly at the positions in $U$.
  Then
  \begin{align*}
    \phi = \sum_{U \subset [m]} \phi^U.
  \end{align*}

  Now, since $\phi$ is $GL_0$ invariant of weight $w$
  and since $GL_0$ leaves
  \begin{align*}
    \operatorname{span} \{ x_i : i_\ell = 0, \ell \in U; i_j \not= 0, j \not\in U \}
  \end{align*}
  invariant, we get that $\phi^U$ is $GL_0$ invariant of weight $w$.
  Clearly, there is $0 \le n \le m$ and $i \in \N^{n+1}$ such that
  \begin{align*}
    \ins_z \remove \phi^U = \phi^U.
  \end{align*}
  Lastly, $\remove \phi^U$ is $GL$ invariant,
  since for $A_0 = \operatorname{diag}(1, A) \in GL_0(\R^\ds)$, with $A \in GL(\R^\ds)$,
  \begin{align*}
    A \ \remove \phi^U = \remove A_0 \phi^U = (\det A_0)^w \remove \phi^U = (\det A)^w \remove \phi^U.
  \end{align*}
  Hence every invariant is in the span of the set given in the statement.
  They are linearly independent, and hence form a basis.
\end{proof}

The corresponding statements for rotations and permutations are completely analogous, so we omit stating them.

%
%
%
%
%
%
%

\section{Discussion and open problems}
\label{sec:discussion}

We have presented a novel way to extract invariant features of $\ds$-dimensional curves,
based on the iterated-integrals signature.
We have identified all those features that can be written as a finite linear combination of terms in the signature.

There is a vast literature on invariants of, mostly $2d$, curves in the literature.
Among the techniques used, the method of ``integral invariants'' 
\cite{bib:FKK2010} is closest to our setting
(it has been used for example in \cite{bib:GMW2009} for character recognition).
In that work, for a curve $X: [0,T] \to \R^\ds$, $\ds=2,3$,
the building blocks for invariants are expressions of the form
\begin{align}
  \label{eq:expression}
  \int_0^T (X^1_r)^{\alpha_1} .. (X^\ds_r)^{\alpha_\ds} dX^i_r, \qquad i=1,..,\ds.
\end{align}
%
Using an algorithmic procedure,
some invariants to certain subgroups of $G \subset GL(\R^\ds)$ are derived.
In particular for $\ds=2$ and $G=GL(\R^\ds)$ the following invariants are given
\begin{align*}
  I_1 &= \frac{1}{2} \int_0^T X^1_{0,r} dX^2_r - \frac{1}{2} X^1_{0,t} X^2_{0,t}  \\
  I_2 &= \int_0^T X^1_{0,r} X^2_{0,r} dX^2_r\ X^1_{0,t} - \frac{1}{2} \int_0^t (X^1_r)^2 dX^2_r\ X^2_{0,t} \\
  I_3 &= \int_0^T X^1_{0,r} (X^2_{0,r})^2 dX^2_r\ X^2_{0,T} - \int_0^T (X^1_{0,r})^2 X^2_{0,r} dX^2_r\ X^1_{0,T} X^2_{0,T}
          + \frac{1}{3} \int_0^T (X^1_{0,r})^3 dX^2_r\ X^2_{0,r} X^2_{0,r} \\
          &\qquad
          - \frac{1}{12} (X^1_{0,t})^3 (X^2_{0,t})^3.
\end{align*}
By the shuffle identity (Lemma~\ref{lem:shuffleIdentity}), we can write these as $I_i = \langle S(X)_{0,T}, \phi_i \rangle$, with
\begin{align*}
  \phi_1 &:= \frac{1}{2} \word{12} - \frac{1}{2} \word{12} \\
  \phi_2 &:= \frac13 \word{1221}
                            +\frac13 \word{1212}
                            -\frac23 \word{1122}
                            +\frac13 \word{2121}
                            +\frac13 \word{2112}
                            -\frac23 \word{2211} \\
  \phi_3 &:= - \word{121212} - \word{211122} + \word{212121} + \word{221112} - \word{121221} + \word{122211} - \word{112212}
  \\&\qquad + \word{122112}
    - \word{211212} - \word{211221} - \word{121122} + \word{122121} -3 \word{222111} +3 \word{111222} 
    \\&\qquad+ \word{221121} + \word{212211}
     - \word{112122}
    + \word{212112} - \word{112221} + \word{221211}.
\end{align*}
One can easily check that these lie in the linear span of the invariants given in Proposition~\ref{prop:soInvariants} (or Theorem~\ref{thm:soInvariants}),
as expected.

We note that expressions of the form \eqref{eq:expression}
are \emph{not} enough to uniquely characterize a path.
Indeed, the following lemma gives a counterexample to the conjecture on p.906 in \cite{bib:FKK2010}
that ``signatures of non-equivalent curves are different''
(here, the ``signature'' of a curve means the set of expressions of the form \eqref{eq:expression}).
\begin{lemma}
  Consider the two closed curves $X^+$ and $X^-$ in $\R^2$, given for $t$ in $[0,2\pi]$ as%
  \begin{align*}
    X^{\pm,1}_t &=\pm\cos t\\
    X^{\pm,2}_t &=\sin 2t.
  \end{align*}

  Then all the expressions \eqref{eq:expression} coincide on $X^+$ and $X^-$.
\end{lemma}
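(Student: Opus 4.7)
The plan is to exploit the simple relationship $X^{-}_t = \operatorname{diag}(-1,1)\,X^{+}_t$: the two curves differ only by a reflection of the first coordinate. Writing
\[
  I^{\pm}_i(\alpha_1,\alpha_2) := \int_0^{2\pi}(X^{\pm,1}_r)^{\alpha_1}(X^{\pm,2}_r)^{\alpha_2}\,dX^{\pm,i}_r,
\]
a direct substitution (pulling out the sign from $(X^{\pm,1})^{\alpha_1}$ and from $dX^{\pm,1}$) yields
\[
  I^{-}_1 = (-1)^{\alpha_1+1}\,I^{+}_1, \qquad I^{-}_2 = (-1)^{\alpha_1}\,I^{+}_2.
\]
So equality $I^{+}_i = I^{-}_i$ is automatic whenever the relevant sign exponent is even, and the claim reduces to two vanishing statements: $I^{+}_1 = 0$ when $\alpha_1$ is even, and $I^{+}_2 = 0$ when $\alpha_1$ is odd.

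Both vanishings will follow from a single symmetry, namely the shift $r \mapsto r + \pi$. By $2\pi$-periodicity this preserves integration over $[0,2\pi]$, and it acts as $\cos r \mapsto -\cos r$, $\sin r \mapsto -\sin r$, while leaving $\sin 2r$ and $\cos 2r$ invariant. Writing $dX^{+,1}_r = -\sin r\,dr$, the integrand for $I^{+}_1$ acquires a factor $(-1)^{\alpha_1+1}$ under this shift, so it is odd precisely when $\alpha_1$ is even; writing $dX^{+,2}_r = 2\cos 2r\,dr$, the integrand for $I^{+}_2$ acquires a factor $(-1)^{\alpha_1}$, so it is odd precisely when $\alpha_1$ is odd. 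In each flagged case the integral over a full period therefore vanishes.

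I do not expect any real obstacle: the proof is pure bookkeeping of signs combined with a trigonometric periodicity argument. The only point worth being mildly careful about is keeping straight which of the two sign factors — the one coming from the transformation of the integrand $(X^{\pm,1})^{\alpha_1}$ and the one coming from the differential $dX^{\pm,i}$ — contributes to the overall sign in each of the two cases.
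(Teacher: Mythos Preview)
Your argument is correct. It is also genuinely different from the paper's proof, and in fact somewhat cleaner.

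The paper does not exploit the reflection $X^{-}=\operatorname{diag}(-1,1)X^{+}$ directly. Instead it expands $\sin 2t = 2\sin t\cos t$ so that every integrand becomes a monomial $\cos^{a}t\,\sin^{b}t$, and then invokes the classical fact that $\int_0^{2\pi}\cos^{a}t\,\sin^{b}t\,dt$ vanishes unless both $a$ and $b$ are even (proved via the two substitutions $t\mapsto 2\pi-t$ and $t\mapsto \pi/2-t$). A parity bookkeeping on $m=\alpha_1$ and $n=\alpha_2$ then shows that the only surviving integrals carry the sign factor $(\pm1)^{m}$ with $m$ of the parity that makes it $+1$, so the values for $X^{+}$ and $X^{-}$ coincide.

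Your route replaces all of that by a single symmetry, the half-period shift $r\mapsto r+\pi$, which leaves $\sin 2r$ and $\cos 2r$ invariant and flips $\cos r,\sin r$. This avoids expanding $\sin 2t$, needs only one substitution instead of two, and makes the structural reason for the coincidence transparent: the ``dangerous'' sign $(-1)^{\alpha_1+1}$ (resp.\ $(-1)^{\alpha_1}$) in passing from $I^{+}$ to $I^{-}$ is exactly the antisymmetry factor of the integrand under the half-period shift, forcing the integral to vanish in precisely those cases. The paper's approach, on the other hand, gives slightly more information as a by-product (the exact nonzero values of the integrals), which is not needed for the lemma.
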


These curves both trace a figure called the \emph{lemniscate of Gerono} which is illustrated in Figure~\ref{fig:lemniscate}.
\begin{figure}
\begin{center}
\begin{tikzpicture}
\begin{axis}[
trig format plots=rad,
axis equal,
]
\addplot [domain=0:2*pi, samples=200, black] ({cos(x)}, {sin(2*x)});
\draw[|-Latex] (axis cs:1.07,0)--(axis cs:1.07,0.3);
\node at (axis cs: 1.22,0.14) {$\displaystyle X^+$};
\draw[|-Latex] (axis cs:-0.93,0)--(axis cs:-0.93,0.3);
\node at (axis cs: -0.77,0.14) {$\displaystyle X^-$};
\end{axis}
\end{tikzpicture}
\end{center}	
	\caption{\label{fig:lemniscate}The lemniscate of Gerono. Traversing it once in each of the two directions indicated gives two distinct closed curves with distinct iterated-integral signatures, but which cannot be distinguished with the ``signature'' of \cite{bib:FKK2010}.}
\end{figure}
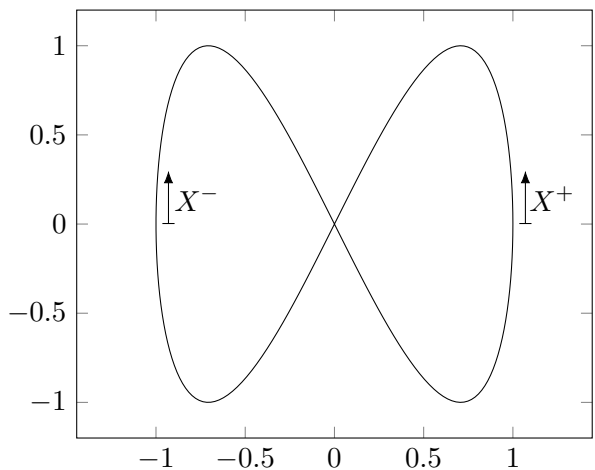

\begin{proof}

  
  Consider the function $f^m_n(t):=\cos^m t\;\sin^n t$, where $m$ and $n$ are nonnegative integers. If $n$ is odd, then $f^m_n(t)=-f^m_n(2\pi- t)$ so $\int_0^{2\pi}f^m_n(t)\,dt$ is zero. If $m$ is odd, then
  \begin{align*} \int_0^{2\pi}f^m_n(t)\,dt=-\int_{\frac\pi2}^{-\frac{3\pi}2}f^m_n(\frac\pi2-t)\,dt=\int^{\frac\pi2}_{-\frac{3\pi}2}f^n_m(t)\,dt=\int^{2\pi}_0f^n_m(t)\,dt=0.
\end{align*} 
  %
  Thus $\int_0^{2\pi}f^m_n(t)\,dt$ can only be nonzero if $m$ and $n$ are both even. 
  
  Any expression like \eqref{eq:expression} is either of the form
  \begin{align*}
  \int_0^{2\pi}x^my^n\,dx&=\int_0^{2\pi}(\pm1)^m\cos^m t\;\sin^n 2t \;(\mp\sin t)\,dt
  \\&=\mp2^n(\pm1)^m\int_0^{2\pi}\cos^{m+n}t\sin^{n+1}t\,dt
\\&=\begin{cases}0&\text{$n$ even or $m$ even}\\-2^n\int_0^{2\pi}\cos^{m+n}t\sin^{n+1}t\,dt&\text{otherwise}\end{cases}
  \end{align*}
  or of the form
  \begin{align*}
  \int_0^{2\pi}x^my^n\,dy&=\int_0^{2\pi}(\pm1)^m\cos^m t\;\sin^n 2t \;(2\cos t)\,dt
  \\&=2^{n+1}(\pm1)^m\int_0^{2\pi}\cos^{m+n+1}t\;\sin^{n}t\,dt
  \\&=\begin{cases}0&\text{$n$ odd or $m$ odd}\\2^{n+1}\int_0^{2\pi}\cos^{m+n+1}t\;\sin^{n}t\,dt&\text{otherwise}\end{cases}.
  \end{align*}
  Therefore these two curves have the same
  values on terms of the form \eqref{eq:expression}.
  \footnote{
  Note that they are not not tree-equivalent and therefore have different (iterated-integral) signatures. The lowest level on which they differ is level 4.}
\end{proof}
Moreover,
the algorithmic nature of the construction in \cite{bib:FKK2010}
makes it difficult to proceed to invariants of higher order.
In contrast, our method gives an explicit linear basis
for the invariants under consideration up to \emph{any} order.

Regarding the question of whether our invariants are complete we propose the following conjecture.
As shown in \cite{bib:HL2010}, if $S(X)_{0,T} = S(Y)_{0,T}$
for some curves $X,Y$, then $X$ is ``tree-like equivalent'' to $Y$.
For the concrete definition of this equivalence we refer to their paper,
but let us give one example.
Consider in $\ds=2$, the constant path $X_t := (0,0), t \in [0,T]$
and the piecewise linear path $Y$, between the points $(0,0),(1,0)$ and $(0,0)$.
One can check that
\begin{align*}
  S(X)_{0,T} = S(Y)_{0,T} = 1.
\end{align*}
The signature has no chance of picking up these kind of ``excursions'' in a path;
this concept is formalized in ``tree-like equivalence''.
We suspect that the following holds true
(with corresponding formulations for the other subgroups of $GL(\R^\ds)$).
\begin{conjecture}
  Let $X, Y: [0,T] \to \R^\ds$ be two curves such that
  \begin{align*}
    \Big\langle S(X)_{0,T}, \phi \Big\rangle = \Big\langle S(Y)_{0,T}, \phi \Big\rangle,
  \end{align*}
  for all $SO$ invariants given in Proposition~\ref{prop:soInvariants}.
  Then, there is a curve $\bar X$, tree-like equivalent to $X$,
  and a rotation $A \in SO(\R^d)$, such that
  \begin{align*}
    A \bar X = Y.
  \end{align*}
\end{conjecture}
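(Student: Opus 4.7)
My plan is to prove the conjecture in three steps: reduce to an $SO$-orbit statement for signatures via Hambly--Lyons \cite{bib:HL2010}, solve the orbit statement at each truncation level by a classical compact-group invariant-theoretic argument, and patch the resulting rotations into a single $A \in SO(\R^\ds)$ by compactness of $SO(\R^\ds)$.

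\textbf{Step 1 (reduction via Hambly--Lyons).} It suffices to find $A \in SO(\R^\ds)$ with $A \cdot S(X)_{0,T} = S(Y)_{0,T}$ in $\TC$, where the action is the diagonal one of Definition~\ref{def:action}, so that $A \cdot S(X) = S(AX)$. Indeed, by \cite{bib:HL2010}, $S(AX)=S(Y)$ implies $AX$ is tree-like equivalent to $Y$; setting $\bar X := A^{-1} Y$ then gives a curve tree-like equivalent to $X$ with $A\bar X = Y$.

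\textbf{Step 2 (orbit agreement at each truncation).} For each $N\ge 1$, let $V_N := \bigoplus_{n=0}^N (\R^\ds)^{\otimes n}$, equipped with the diagonal $SO$-action. I would prove that $\pi_{\le N} S(X)$ and $\pi_{\le N} S(Y)$ lie in the same $SO$-orbit of $V_N$. Because $SO(\R^\ds)$ is compact, its orbits on $V_N$ are closed and are separated by $SO$-invariant polynomial functions $p : V_N \to \R$; so it is enough to show $p(\pi_{\le N} S(X)) = p(\pi_{\le N} S(Y))$ for every such $p$. By the shuffle identity (Lemma~\ref{lem:shuffleIdentity}), any polynomial in the coordinates $\{\langle S(X), x_i\rangle\}$ can be rewritten as a single linear pairing $\langle S(X), \tilde p\rangle$ for some $\tilde p \in T(\R^\ds)$, uniquely determined because signatures span each level by Lemma~\ref{lem:spans}. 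Haar-averaging $\tilde p$ over $SO(\R^\ds)$ produces an $SO$-invariant $\tilde p^{\mathrm{inv}} \in T(\R^\ds)$ that still represents $p$ on signatures, i.e.\ $\langle S(X), \tilde p^{\mathrm{inv}}\rangle = p(\pi_{\le N} S(X))$ for every curve $X$. Since $\tilde p^{\mathrm{inv}}$ lies in the linear span of the basis of Proposition~\ref{prop:soInvariants}, the hypothesis of the conjecture yields $\langle S(X), \tilde p^{\mathrm{inv}}\rangle = \langle S(Y), \tilde p^{\mathrm{inv}}\rangle$, hence $p(\pi_{\le N} S(X)) = p(\pi_{\le N} S(Y))$.

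\textbf{Step 3 (patching).} Define the nested family $O_N := \{A \in SO(\R^\ds) : A \cdot \pi_{\le N} S(X) = \pi_{\le N} S(Y)\}$. By Step~2 each $O_N$ is non-empty, and each is closed as the preimage of a point under a continuous map with $O_{N+1} \subseteq O_N$. Compactness of $SO(\R^\ds)$ yields $\bigcap_N O_N \neq \emptyset$; any $A$ in this intersection satisfies $A \cdot S(X) = S(Y)$ on every level, completing the reduction of Step~1.

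\textbf{Main obstacle.} I expect the delicate point to be the polynomial-to-linear transfer in Step~2: verifying that the Haar average $\tilde p^{\mathrm{inv}}$ of a representative of an $SO$-invariant polynomial $p$ on $V_N$ really gives an element of $T(\R^\ds)$ (rather than only of the completion $\TC$) and really lies in the $SO$-invariant subspace spanned by Proposition~\ref{prop:soInvariants}. The homogeneity of $\tilde p$ is bounded by $\deg(p)\cdot N$, so this integrability is automatic, but making the averaging compatible with the shuffle-product structure and with the orbit-separation result of classical invariant theory (relying ultimately on the inner-product/determinant generators of the $SO$-invariant ring from the Landsberg theorem recalled above) requires some bookkeeping. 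Once that correspondence is established, the compactness patching of Step~3 and the Hambly--Lyons reduction of Step~1 fit together without further obstruction.
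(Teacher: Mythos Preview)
The paper does not prove this statement: it is explicitly presented as an open \emph{conjecture} in Section~\ref{sec:discussion}, so there is no proof in the paper to compare your proposal against.

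That said, your three-step argument appears to be a complete and correct proof. Step~1 is sound: with $\bar X := A^{-1}Y$ one has $S(\bar X) = A^{-1}\!\cdot S(Y) = A^{-1}\!\cdot S(AX) = S(X)$, so $\bar X$ is tree-like equivalent to $X$ by \cite{bib:HL2010}, and $A\bar X=Y$ holds exactly. Step~3 is a routine finite-intersection-property argument in the compact group $SO(\R^\ds)$. Step~2 is the substantive part, and it goes through: that real polynomial invariants of a compact group acting linearly on a finite-dimensional real vector space separate orbits is classical (a Stone--Weierstrass plus Haar-averaging argument suffices), and the shuffle identity together with Lemma~\ref{lem:spans} converts each such invariant polynomial, restricted to truncated signatures, into a pairing against an $SO$-invariant element of $\TS$.

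In fact your ``main obstacle'' dissolves, and the Haar-averaging detour is unnecessary. You already observe that the linearization $\tilde p$ is \emph{unique} by Lemma~\ref{lem:spans} (to separate levels, use that $\langle S(\lambda Z),\psi\rangle=\sum_n\lambda^n\langle \pi_n S(Z),\pi_n\psi\rangle$ is a polynomial in $\lambda$ vanishing identically). Uniqueness then forces $SO$-invariance directly: for $A\in SO(\R^\ds)$ and any curve $Z$,
\begin{align*}
  \langle S(Z), A^\top\tilde p\rangle
  = \langle S(AZ),\tilde p\rangle
  = p\bigl(A\cdot\pi_{\le N}S(Z)\bigr)
  = p\bigl(\pi_{\le N}S(Z)\bigr)
  = \langle S(Z),\tilde p\rangle,
\end{align*}
whence $A^\top\tilde p=\tilde p$. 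Thus $\tilde p$ already lies in the span of the invariants of Proposition~\ref{prop:soInvariants}, and the hypothesis of the conjecture applies without any averaging. If this withstands scrutiny, you have resolved the conjecture and should present it as a theorem.
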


~\\

Lastly, in Proposition~\ref{prop:glInvariants}, Proposition~\ref{prop:soInvariants} and Proposition~\ref{prop:permInvariants}
we have established a linear basis for invariants for every homogeneity.
As already mentioned in Remark~\ref{rem:algebraicIndependence},
owing to the shuffle identity,
there are algebraic relations between elements of different homogeneity.
An interesting open problem is then to find a minimal set of generators
for the set of invariants, considered as a subalgebra of the shuffle algebra.
(This applies to all subgroups of $GL(\R^\ds)$ and their corresponding invariants).



\begin{thebibliography}{99}
  \bibitem[BRRZ2005]{bib:BRRZ2005}
    Bergeron, N., Reutenauer, C., Rosas, M., and Zabrocki M.
    ``Invariants and coinvariants of the symmetric group in noncommuting variables.'' arXiv preprint math/0502082 (2005).

  \bibitem[COSTH1998]{bib:COSTH1998}
    Calabi, E., Olver, P. J., Shakiban, C., Tannenbaum, A., and Haker, S.
    ``Differential and numerically invariant signature curves applied to object
    recognition.'' International Journal of Computer Vision 26.2 (1998):
    107-135.

  \bibitem[CF2010]{bib:CF2010}
    Cass, T. and Friz, P. ``Densities for rough differential equations under Hörmander's condition.'' Annals of mathematics (2010): 2115-2141.

  \bibitem[Che1957]{bib:Che1957}
    Chen, K.-T. ``Integration of paths, geometric invariants and a generalized Baker-Hausdorff formula.'' Annals of Mathematics (1957): 163-178.
   
  \bibitem[CK2016]{bib:OxSigIntro}
    Chevyrev, I. and Kormilitzin, A. ``A Primer on the Signature Method in Machine Learning'', arXiv preprint arXiv:1603.03788 (2016).

  \bibitem[CK1986]{bibChuangKuo1996}
    Chuang, G. C. H. and Kuo, C.-C. J. ``Wavelet descriptor of planar curves: Theory and applications.'' Image Processing, IEEE Transactions on 5.1 (1996): 56-70.

  \bibitem[DB1955]{bib:DB1955}
    De Bruijn, N. G. ``On some multiple integrals involving determinants.'' J. Indian Math. Soc 19 (1955): 133-151.

  \bibitem[Die2018]{bib:Die2018}
    Diehl, J. signature-invariants, GitHub repository, \url{https://github.com/diehlj/signature-invariants}.

  \bibitem[DC1970]{bib:DC1970}
    Dieudonn\'e, J. A. and Carrell, J. B. ``Invariant theory, old and new.'' Advances in Mathematics 4.1 (1970): 1-80.

  \bibitem[EN2010]{bib:EN2010}
    Engstr\"om, Alexander, and Patrik Nor\'en. ``Polytopes from subgraph statistics.'' arXiv preprint arXiv:1011.3552 (2010).

  \bibitem[EMZMN2012]{bib:EMZMN2012}
    Ewald, A., Marzetti, L., Zappasodi, F., Meinecke, F.C., and Nolte G.
    ``Estimating true brain connectivity from EEG/MEG data invariant to linear
    and static transformations in sensor space.'' Neuroimage 60.1 (2012):
    476-488.

  \bibitem[FKK2010]{bib:FKK2010}
    Feng, S., Kogan, I., and Krim, H. ``Classification of curves in 2D and 3D via affine integral signatures.'' Acta applicandae mathematicae 109.3 (2010): 903-937.

  \bibitem[Flu2000]{bib:Flu2000}
    Flusser, J. ``On the independence of rotation moment invariants.'' Pattern recognition 33.9 (2000): 1405-1410.

  \bibitem[FV2010]{bib:FV2010}
    Friz, P. and Victoir, N. ``Multidimensional stochastic processes as rough paths: theory and applications.'' Vol. 120. Cambridge University Press, 2010.

  \bibitem[Gal1963]{bib:Gal1963}
    Gale, David. ``Neighborly and cyclic polytopes.'' Proc. Sympos. Pure Math. Vol. 7. 1963.

  \bibitem[Gar1975]{bib:Gar1975}
    Gardner, R. B. ``The fundamental theorem of vector relative invariants.'' Journal of Algebra 36.2 (1975): 314-318.

  \bibitem[GMW2009]{bib:GMW2009}
    Golubitsky, Oleg, Vadim Mazalov, and Stephen M. Watt.
    ``Orientation-independent recognition of handwritten characters with
    integral invariants.'' Proc. Joint Conf. ASCM. 2009.

  \bibitem[Gra2013]{bib:Gra2013}
    Graham, B. ``Sparse arrays of signatures for online character recognition.'' arXiv preprint arXiv:1308.0371 (2013).

  \bibitem[Gra1972]{bibGranlund1972}
    Granlund, G.H. ``Fourier preprocessing for hand print character recognition.'' Computers, IEEE Transactions on 100.2 (1972): 195-201.


  \bibitem[HL2010]{bib:HL2010}
    Hambly, B. and Lyons, T. ``Uniqueness for the signature of a path of bounded variation and the reduced path group.'' Annals of Mathematics (2010): 109-167.

  \bibitem[Joh1962]{bib:Joh1962}
    Johnson, H. H. ``A generalization of KT Chen's invariants for paths under transformation groups.'' Transactions of the American Mathematical Society 105.3 (1962): 453-461.

  \bibitem[KS1953]{bib:KS1953}
    Karlin, S., and Shapley, L.S. ``Geometry of moment spaces.'' No. 12. American Mathematical Soc., 1953.

  \bibitem[KG1982]{bibKuhlGiardina1982}
    Kuhl, F.P. and Giardina, C.R. ``Elliptic Fourier features of a closed contour.'' Computer graphics and image processing 18.3 (1982): 236-258.

  \bibitem[KSHGL2017]{bib:KSHGL2017}
    Kormilitzin, A., Saunders, K.E.A., Harrison, P.J., Geddes, J.R., and Lyons, T.
    ``Detecting early signs of depressive and manic episodes in patients with bipolar disorder using the signature-based model.'' arXiv preprint arXiv:1708.01206 (2017).

  \bibitem[LK2007]{bib:LK2007}
    Lakshmibai, V., and Komaranapuram, N.R. ``Standard monomial theory: invariant theoretic approach.'' Vol. 137. Springer Science \& Business Media, 2007.

  \bibitem[Lan2012]{bib:Lan2012}
    Landsberg, Joseph M. ``Tensors: geometry and applications.'' Representation theory 381 (2012): 402.

  \bibitem[Lec1993]{bib:Lec1993}
    Leclerc, Bernard. ``On identities satisfied by minors of a matrix.'' Advances in Mathematics 100.1 (1993): 101-132.
    
  \bibitem[LLN2013]{bib:LLN2012}
    Levin, D., Lyons, T., and Ni, H.. Learning from the past, predicting
    the statistics for the future, learning an evolving system. arXiv preprint
    arXiv:1309.0260, 2013.

  \bibitem[LT2002]{bib:LT2002}
    Luque, J.-G., and Thibon, J.-Y. ``Pfaffian and Hafnian identities in shuffle algebras.'' Advances in Applied Mathematics 29.4 (2002): 620-646.

  \bibitem[LY2006]{bib:LY2006}
    Lyons, T. J., and Phillip S.Y.
    ``On Gauss-Green theorem and boundaries of a class of H\"older domains.'' Journal de math\'ematiques pures et appliqu\'ees 85.1 (2006): 38-53.

  \bibitem[MCHYS2006]{bibManayCremers2006}
    Manay, S., Cremers, D., Hong, B.W.,  Yezzi, A.J.,  Soatto, S.
    ``Integral invariants for shape matching.'' Pattern Analysis and Machine Intelligence, IEEE Transactions on 28.10 (2006): 1602-1618. 

  \bibitem[MA2017]{bib:MA2017}
    Morales, J., and Akopian, D. ``Physical activity recognition by smartphones, a survey.'' Biocybernetics and Biomedical Engineering 37.3 (2017): 388-400.

  \bibitem[MM1986]{bibMokhtarianMackworth1986}
    Mokhtarian, F. and Mackworth, A. ``Scale-based description and recognition
    of planar curves and two-dimensional shapes.'' Pattern Analysis and Machine
    Intelligence, IEEE Transactions on 1 (1986): 34-43.

    \bibitem[PTVF2007]{bib:PTVF2007}
     Press, W.H., Teukolsky, S.A., Vetterling, W.T., Flannery, B.P. ``Numerical
     recipes 3rd edition: The art of scientific computing.'' Cambridge
     university press, 2007.


  \bibitem[Reu1993]{bib:Reu1993}
    Reutenauer, C. ``Free Lie algebras.'' Volume 7 of London Mathematical Society Monographs. New Series. (1993).

  \bibitem[Ree1958]{bib:Ree1958}
    Ree, R. ``Lie elements and an algebra associated with shuffles.'' Annals of Mathematics (1958): 210-220.

  \bibitem[Rei2017]{bib:Rei2017}
    Reizenstein, J. ``Calculation of iterated-integral signatures and log signatures.'' arXiv preprint arXiv:1712.02757 (2017).

  \bibitem[Rud1964]{bib:Rud1964}
    Rudin, W. ``Principles of mathematical analysis.'' Vol. 3. New York: McGraw-hill, 1964.

  \bibitem[Sag2013]{bib:Sag2013}
    Sagan, B. ``The symmetric group: representations, combinatorial algorithms, and symmetric functions.'' Vol. 203. Springer Science \& Business Media, 2013.

  \bibitem[SW1989]{bib:SW1989}
    Sturmfels, Bernd, and Neil White. ``Gr\"obner bases and invariant theory.'' Advances in Mathematics 76.2 (1989): 245-259.

  \bibitem[TOG2004]{bib:TOG2004}
    Toth, C.D., O'Rourke, J., and Goodman, J.E., eds. ``Handbook of discrete and computational geometry.'' CRC press, 2004.

  \bibitem[Wey1946]{bib:Wey1946}
    Weyl, H. ``The Classical Groups, Their Invariants and Representations.'' Princeton University Press, 1946.

  \bibitem[YLNSJC2017]{bib:YLNSJC2017}
    Yang, W., Lyons,T., Ni,H., Schmid, C., Jin, L., and Chang, J.
    ``Leveraging the Path Signature for Skeleton-based Human Action Recognition.''
    arXiv preprint arXiv:1707.03993 (2017).

  \bibitem[ZR1972]{bibZahnRoskies1972}
    Zahn, C.T. and Roskies, R.Z. ``Fourier descriptors for plane closed curves.'' Computers, IEEE Transactions on 100.3 (1972): 269-281.




\end{thebibliography}
\end{document}